\documentclass[sigconf,nonacm]{acmart} % for arXiv

\usepackage{graphicx}
\usepackage{subcaption}
\usepackage{bm}
\usepackage{amsmath}
\usepackage{amsthm}
\usepackage{mathtools}
\usepackage{algorithmicx}
\usepackage{algorithm}
\usepackage[noend]{algpseudocode}
\usepackage[capitalize,noabbrev]{cleveref}
\usepackage{booktabs}
\usepackage{makecell}

\newtheorem{definition}{Definition}
\newtheorem{conjecture}{Conjecture}
\newtheorem{observation}{Observation}
\newtheorem{lemma}{Lemma}
\newtheorem{theorem}{Theorem}

\DeclareMathOperator*{\argmax}{argmax}

\usepackage[most]{tcolorbox}

\newtcolorbox{definition_box}{
    colback=white,
    colframe=black,
    boxrule=0.5pt,
    arc=0pt,
    breakable
}

\newtcolorbox{theorem_box}{
    colback=black!5!white,
    colframe=black!5!white,
    boxrule=0pt,
    arc=0pt,
    breakable
}

\usepackage{enumitem}

\newif\ifshowchanges
\showchangesfalse

\definecolor{revone}{HTML}{E67E22}
\definecolor{revtwo}{HTML}{1F77B4}
\definecolor{revthree}{HTML}{2CA02C}

\ifshowchanges
    \DeclareRobustCommand{\Rone}[1]{{\color{revone}#1}}
    \DeclareRobustCommand{\Rtwo}[1]{{\color{revtwo}#1}}
    \DeclareRobustCommand{\Rthree}[1]{{\color{revthree}#1}}
\else
    \DeclareRobustCommand{\Rone}[1]{#1}
    \DeclareRobustCommand{\Rtwo}[1]{#1}
    \DeclareRobustCommand{\Rthree}[1]{#1}
\fi
\AtBeginDocument{%
  }

\begin{document}

%%
%% The "title" command has an optional parameter,
%% allowing the author to define a "short title" to be used in page headers.
\title{Mathematical Foundations of Poisoning Attacks on\\Linear Regression over Cumulative Distribution Functions}

%%
%% The "author" command and its associated commands are used to define
%% the authors and their affiliations.
%% Of note is the shared affiliation of the first two authors, and the
%% "authornote" and "authornotemark" commands
%% used to denote shared contribution to the research.
\author{Atsuki Sato}
\orcid{0009-0001-5366-4842}
\email{a\_sato@hal.t.u-tokyo.ac.jp}
\affiliation{%
    \institution{Graduate School of Information Science and Technology, The University of Tokyo}
    \city{Tokyo}
    \country{Japan}
}

\author{Martin Aum\"uller}
\email{maau@itu.dk}
\affiliation{%
    \institution{Algorithms Group, IT University of Copenhagen}
    \city{Copenhagen}
    \country{Denmark}
}

\author{Yusuke Matsui}
\email{matsui@hal.t.u-tokyo.ac.jp}
\affiliation{%
    \institution{Graduate School of Information Science and Technology, The University of Tokyo}
    \city{Tokyo}
    \country{Japan}
}

%%
%% By default, the full list of authors will be used in the page
%% headers. Often, this list is too long, and will overlap
%% other information printed in the page headers. This command allows
%% the author to define a more concise list
%% of authors' names for this purpose.
\renewcommand{\shortauthors}{Sato et al.}

%%
%% The abstract is a short summary of the work to be presented in the
%% article.
\begin{abstract}
Learned indexes are a class of index data structures that enable fast search by approximating the cumulative distribution function (CDF) using machine learning models (Kraska et al., SIGMOD'18).
However, recent studies have shown that learned indexes are vulnerable to poisoning attacks, where injecting a small number of poison keys into the training data can significantly degrade model accuracy and reduce index performance (Kornaropoulos et al., SIGMOD'22).
In this work, we provide a rigorous theoretical analysis of poisoning attacks targeting linear regression models over CDFs, one of the most basic regression models and a core component in many learned indexes.
Our main contributions are as follows:
(i) We present a theoretical proof characterizing the optimal single-point poisoning attack and show that the existing method yields the optimal attack.
(ii) We show that in multi-point attacks, the existing greedy approach is not always optimal, and we rigorously derive the key properties that an optimal attack should satisfy.
(iii) We propose a method to compute an upper bound of the multi-point poisoning attack's impact and empirically demonstrate that the loss under the greedy approach is often close to this bound.
Our study deepens the theoretical understanding of attack strategies against linear regression models on CDFs and provides a foundation for the theoretical evaluation of attacks and defenses on learned indexes.
\end{abstract}

%%
%% The code below is generated by the tool at http://dl.acm.org/ccs.cfm.
%% Please copy and paste the code instead of the example below.
%%
\begin{CCSXML}
<ccs2012>
    <concept>
        <concept_id>10002951.10002952.10002971</concept_id>
        <concept_desc>Information systems~Data structures</concept_desc>
        <concept_significance>500</concept_significance>
        </concept>
    <concept>
        <concept_id>10002978.10002979.10002983</concept_id>
        <concept_desc>Security and privacy~Cryptanalysis and other attacks</concept_desc>
        <concept_significance>300</concept_significance>
        </concept>
    <concept>
        <concept_id>10010147.10010257.10010293</concept_id>
        <concept_desc>Computing methodologies~Machine learning approaches</concept_desc>
        <concept_significance>100</concept_significance>
        </concept>
</ccs2012>
\end{CCSXML}

\ccsdesc[500]{Information systems~Data structures}
\ccsdesc[300]{Security and privacy~Cryptanalysis and other attacks}
\ccsdesc[100]{Computing methodologies~Machine learning approaches}

%%
%% Keywords. The author(s) should pick words that accurately describe
%% the work being presented. Separate the keywords with commas.
\keywords{Learned Systems, Data Poisoning, Attacks, Indexing}

% >> [Commented out for arXiv]
% \received{XX XXXX 20XX}
% \received[revised]{XX XXXX 20XX}
% \received[accepted]{XX XXXX 20XX}
% << [Commented out for arXiv]

%%
%% This command processes the author and affiliation and title
%% information and builds the first part of the formatted document.
\maketitle

\section{Introduction}
\label{sec:introduction}

\begin{table*}[t]
    \centering
    \caption{\Rtwo{Overview over} time complexity and empirical performance of poisoning attacks and upper-bounding methods \Rtwo{under the original setting~\cite{kornaropoulos2022price} (duplicate keys disallowed; \cref{def:poisoning_linear_regression_on_cdfs}) and the relaxed setting (duplicate keys allowed; \cref{def:relaxed_poisoning_problem}).
    Here, $n$ is the number of legitimate keys, $\lambda$ is the poisoning budget, and $T$ is the number of iterations in our upper-bounding methods.
    \textit{Empirical Performance (Ratio)} is the achieved MSE (attacks) or the bound (upper bounds) normalized by the optimal-attack MSE over $3{,}000$ cases in \cref{sec:experiment:upper_bound_vs_greedy_brute_force} (values closer to $1$ indicate a better attack or a tighter upper bound).}}
    \label{tab:seg_e}
    {\renewcommand{\arraystretch}{0.93}
    \begin{tabular}{@{}lcccc@{}}
        \toprule
        & \multicolumn{2}{c}{Original Setting (\cref{def:poisoning_linear_regression_on_cdfs})} & \multicolumn{2}{c}{Relaxed Setting (\cref{def:relaxed_poisoning_problem})} \\ 
        \cmidrule(lr){2-3} \cmidrule(lr){4-5}
        Algorithm & Time Complexity & Empirical Performance (Ratio) & Time Complexity & Empirical Performance (Ratio) \\
        \midrule
        Greedy~\cite{kornaropoulos2022price}
            & $\mathcal{O}(n \lambda)$ & $\geq\!0.933$ 
            & - & - \\ \midrule
        Exact Seg+E (Ours)
            & $\mathcal{O}(n \lambda^3)$ & $1$
            & $\mathcal{O}(n \lambda)$ & $1$ \\
        Heuristic Seg+E (Ours)
            & $\mathcal{O}(n \lambda)$ & $\geq\!0.998$
            & - & -\\
        Optimal (Ours)
            & $\mathcal{O}\!\left((n + \lambda) \binom{2n-2+\lambda}{\lambda}\right)$ & 1
            & $\mathcal{O}\!\left((n + \lambda) \binom{n+\lambda-1}{\lambda}\right)$ & 1 \\
        \midrule
        Upper Bound (Ours)
            & \makecell{$\mathcal{O}(T(n + \lambda))$ or \\$\mathcal{O}((n + \lambda) \log (n + \lambda))$}  & $\leq\!1.092$
            & \makecell{$\mathcal{O}(T(n + \lambda))$ or \\$\mathcal{O}((n + \lambda) \log (n + \lambda))$}  & $\leq\!1.087$ \\
        \bottomrule
    \end{tabular}
    }
\end{table*}

In recent years, learned indexes have attracted significant attention as a promising alternative to traditional index structures~\cite{kraska2018case_li,sosd-vldb,ferragina2020pgm,vaidya2020partitioned,hsu2019learning,yang2020leaper}.
One representative example of a learned index is the learned B-tree~\cite{kraska2018case_li,ferragina2020pgm,sun2023learned}, which approximates the cumulative distribution function (CDF) using regression models. It achieves superior memory efficiency and faster query performance compared to conventional methods.
During query processing, the learned B-tree first uses a regression model to predict the position of the query key, and then corrects the prediction error using local search algorithms such as exponential search.
Smaller prediction errors lead to narrower search ranges and faster query execution, whereas larger errors increase the cost of local search.

While learned indexes achieve outstanding performance on typical datasets, recent studies have revealed that they are vulnerable to poisoning attacks~\cite{kornaropoulos2022price,dong2025poisoning}.
Specifically, by inserting just a small number of malicious data keys (poisons) into the legitimate training data used to build the index, an attacker can deliberately degrade the model's prediction accuracy.
The increase in prediction error leads to higher costs for subsequent local searches, thereby degrading the overall performance of the learned index.

However, existing attack methods proposed in the literature are largely heuristic~\citep{yang2023algorithmic,schuster2022learned}.
\Rtwo{Even in the most basic poisoning setting---where a linear regression model is trained to minimize the mean squared error (MSE) and the attacker aims to maximize it---the optimality of the existing attack method~\cite{kornaropoulos2022price} remains unclear.}
This leaves several fundamental questions open:
\begin{itemize}[leftmargin=1.5em]
    \item Which structures characterize an optimal attack?
    \item Is the existing algorithm from~\cite{kornaropoulos2022price} an optimal attack?
    \item If not, can we derive a provable and tight upper bound on the maximum impact caused by any attack?
\end{itemize}
We resolve these questions and establish a theoretical foundation for attack strategies \Rtwo{in the same MSE-based linear-regression poisoning setting studied in prior work~\cite{kornaropoulos2022price}.}
Our contributions are as follows:
\begin{itemize}[leftmargin=1.5em]
\item In single-point attack scenarios (i.e., where only one poison point is injected), we provide the first formal proof that placing the poison adjacent to a legitimate key is optimal, as conjectured in~\cite{kornaropoulos2022price}. Thereby, we show that the existing single-point attack method yields an optimal attack.
\item \Rtwo{In multi-point attack scenarios, we show that the iterative greedy method of~\cite{kornaropoulos2022price} does \emph{not} in general exactly match the optimal solution;
we provide instances where it is strictly suboptimal, thereby refuting the implicit assumption in prior work~\cite{kornaropoulos2022price} that the greedy method is always optimal.}
\item In multi-point attack scenarios, we derive the key properties of an optimal attack; we prove that every poison included in an optimal attack is adjacent to a legitimate key, either directly or indirectly through other poison points. This makes it possible to compute the exact optimal solution in realistic time for small-scale settings.
\item In multi-point attack scenarios, we propose a method that establishes a rigorous upper bound on the achievable attack impact, i.e., a provable ceiling that no attack can exceed. This upper bound quantifies how close the greedy algorithm is to optimal, and also offers worst-case guarantees for the linear regression model under adversarial settings.
\item \Rtwo{In multi-point attack scenarios, we experimentally show that the greedy solution is often close to our upper bound:
over $3{,}000$ cases in \cref{sec:experiment:upper_bound_vs_greedy}, the bound is at most $1.25\times$ the greedy MSE and $1.03\times$ on average.
This suggests that the bound is tight and that greedy attacks are near-optimal.}
\item In the multi-point attack setting, we identified a simple structural property that frequently appears in optimal solutions, which we call \textit{Segment + Endpoint (Seg+E)}, and we provide efficient algorithms to find the Seg+E solution. We experimentally demonstrate that Seg+E consistently yields a larger loss than the greedy approach.
\end{itemize}
\Rtwo{All six contributions are original to our paper (i.e., they do not appear in prior work, including~\cite{kornaropoulos2022price}), and require nontrivial new technical or experimental insights.}
These results provide the first theoretical framework for understanding attack strategies against linear regression models trained on CDFs, offering a foundation for analyzing attacks and defenses on learned indexes.

\Rtwo{
\textbf{Summary of Algorithms.}
\cref{tab:seg_e} summarizes the algorithms studied in this work.
Here, $n$ is the number of legitimate keys, $\lambda$ is the poisoning budget, and $T$ is the number of iterations used by our upper-bound search procedures.
The column  \textit{Empirical Performance (Ratio)} reports the achieved MSE or upper bound, divided by the optimal-attack MSE, over $3{,}000$ cases in \cref{sec:experiment:upper_bound_vs_greedy_brute_force} (values closer to $1$ indicate better attack or tighter upper bound).
In addition to the original setting~\cite{kornaropoulos2022price}, which does not allow duplicate keys (\cref{def:poisoning_linear_regression_on_cdfs}), we also define a relaxed setting that allows duplicate keys (\cref{def:relaxed_poisoning_problem}).
We design, for both settings, methods to compute the optimum, upper bounds, and Seg+E attacks.
Empirically, we show that Greedy~\citep{kornaropoulos2022price} is occasionally suboptimal, whereas our heuristic Seg+E is closer to optimal, and Exact Seg+E matches the optimum in all $3{,}000$ instances under both settings.
Moreover, our upper-bound computation is fast and yields tight bounds (at most $1.092\times$ the optimal-attack MSE).}

This paper is organized as follows.
\cref{sec:related_work} provides an overview of related work, and 
\cref{sec:problem_statement_and_existing_attacks} formally introduces the problem setting and known attacks.
\cref{sec:single_point_poisoning_attack} and \cref{sec:multi_point_poisoning_attack} present our theoretical results for single- and multi-point attacks, respectively.
\cref{sec:experimental_evaluation} reports our experimental validation.
\Rone{\cref{sec:discussion} provides further discussion}, \cref{sec:limitations_and_future_work} outlines limitations and directions for future research, and \cref{sec:conclusion} concludes the paper.

\section{Related Work}
\label{sec:related_work}

In this section, we first provide a comprehensive overview of learned indexes, %in \cref{sec:related_work_learned_index}, 
followed by a review of attacks on them. % in \cref{sec:related_work_attacks_on_learned_index}.

\subsection{Learned Indexes}
\label{sec:related_work_learned_index}

% \begin{figure}[t]
%     \centering
%     \includegraphics[width=\columnwidth]{fig/rmi.pdf}
%     \caption{Illustration of the Recursive Model Index (RMI) with a two-stage architecture (Adapted from~\cite{kornaropoulos2022price}, licensed under CC BY 4.0).
%     Linear regression models are employed as the leaf models, each approximating the CDF of a contiguous subset of sorted keys.}
%     \label{fig:rmi}
% \end{figure}

Indexing is a fundamental technology for enabling efficient data access and has been applied across a wide range of scenarios, including databases~\cite{ramakrishnan2002database}, search engines~\cite{schutze2008introduction}, and file systems~\cite{ghemawat2003google}.
Classic examples include B-trees~\cite{bayer1972organization}, hash maps~\cite{knuth1998sorting}, and Bloom filters~\cite{bloom1970spacetime}.
These data structures have been extensively studied for decades as critical components that determine the overall performance of systems where efficient data access is essential.

Recently, learned indexes, which leverage machine learning to improve index performance, have been proposed and have attracted considerable attention~\cite{kraska2018case_li}.
Learned indexes replace or augment traditional data structures with machine learning models, aiming to improve memory efficiency and query speed.
Various learned index architectures have been proposed, extending classical data structures such as Bloom filters~\cite{mitzenmacher2018model,dai2020adaptive,vaidya2020partitioned,sato2023fast}, count-min sketches~\cite{hsu2019learning,zhang2020learned,dolera2023learning}, and LSM-trees~\cite{yang2020leaper,lu2022tridentkv,dai2020wisckey,mo2023learning} using machine learning techniques.

Among these, learned B-trees, which enhance B-trees using machine learning, are the most actively studied type of learned index and are often referred to simply as ``learned indexes''~\cite{galakatos2019fiting,ding2020alex,ferragina2020pgm,wang2020sindex,sun2023learned,amarasinghe2024learned,wu2022nfl}.
The core idea of learned indexes is to approximate the cumulative distribution function (CDF) of the data using machine learning-based regression.
During query processing, the model is used to infer an approximate position of the query key.
This prediction is then refined using local search techniques such as exponential search to identify the exact position.
Beyond the original one-dimensional, static setting~\cite{kraska2018case_li,kipf2020radixspline,marcus2020cdfshop,wu2022nfl,amarasinghe2024learned},
subsequent work has extended learned indexes to multidimensional data~\cite{li2020lisa,nathan2020learning,ding2020tsunami,qi2020effectively,wang2019learned,hidaka2024flexflood,al2024survey}
and string data~\cite{wang2020sindex,yang2024lits,zhou2024slipp,spector2021bounding,kim2024accelerating},
as well as to dynamic settings that support updates~\cite{ding2020alex,li2019scalable,lu2021apex,tang2020xindex,li2021finedex,galakatos2019fiting,ferragina2020pgm,wu2021updatable,wang2025new}.
In addition, there has been increasing interest in theoretical analyses of worst-case/expected complexities in construction/query~\cite{ferragina2020why,zeighami2023distribution,croquevielle2024constant,zeighami2024theoretical,liu2024learned}.

A common structural characteristic shared by many learned indexes is their hierarchical, Mixture-of-Experts-style design~\cite{sun2023learned}.
That is, a root model routes queries to localized leaf models, each approximating the CDF of a contiguous subset of sorted keys.
Linear regression models are often used as leaf models~\cite{ding2020alex,li2021finedex,wang2020sindex,sun2023learned} because they provide sufficient accuracy for small subdatasets, avoiding the computational overhead of complex models.
% The RMI~\cite{kraska2018case_li}, illustrated in \cref{fig:rmi}, is a representative example: the root model (often a small neural network) selects a leaf, and the selected leaf approximates the CDF of its assigned sorted keys using linear regression.

\subsection{Attacks on a Learned Index}
\label{sec:related_work_attacks_on_learned_index}

While learned indexes often offer excellent performance on typical datasets, recent research has revealed that they also exhibit vulnerabilities to various attacks.
Attacks on learned Bloom filters include poisoning attacks~\cite{dong2025poisoning}, side-channel attacks~\cite{farwah2024exploiting}, and mutation attacks~\cite{reviriego2021learned}.
Additional studies demonstrate attacks on learning-based sketches~\cite{jing2022deceiving}, learned cardinality estimators~\cite{zhang2024pace}, and learned index advisors~\cite{zhou2024trap,zheng2024robustness}.
The root of these vulnerabilities lies in two key factors: (i) the increased information flow introduced by fitting models to past data distributions, and (ii) the degradation of worst-case performance caused by optimizing machine learning models for average-case scenarios~\cite{schuster2022learned}.

Among these, attacks on learned indexes (in the narrow sense) have been particularly actively studied in recent years~\citep{kornaropoulos2022price,yang2023algorithmic,schuster2022learned}.
Kornaropoulos et al.~\cite{kornaropoulos2022price} proposed poisoning attacks against linear regression models trained on CDFs, and further demonstrated attacks on RMI architectures using such poisoning methods.
However, the theoretical foundations of these attacks remained unresolved, and addressing this gap is the focus of our study.

Finally, we briefly review other types of attacks on learned B-trees.
For ALEX~\cite{ding2020alex}, one of the most prominent dynamic learned \Rone{indexes}, algorithmic complexity attacks (ACAs) have been proposed in~\cite{yang2023algorithmic,schuster2022learned}.
These attacks force the system into exponential time worst-case behavior.
Additionally, even the learned indexes with guaranteed worst-case complexity (e.g., the PGM-index~\cite{ferragina2020pgm}) are shown to be vulnerable to timing-channel-based attacks~\cite{schuster2022learned}.
By measuring and analyzing query latency, an attacker may infer information about the underlying data distribution that should be inaccessible. Such attacks are not the focus of this work.

\section{Problem Statement and Existing Attacks}
\label{sec:problem_statement_and_existing_attacks}

\begin{figure*}[t]
    \centering
    \begin{subfigure}[b]{0.32\textwidth}
        \centering
        \includegraphics[width=\textwidth]{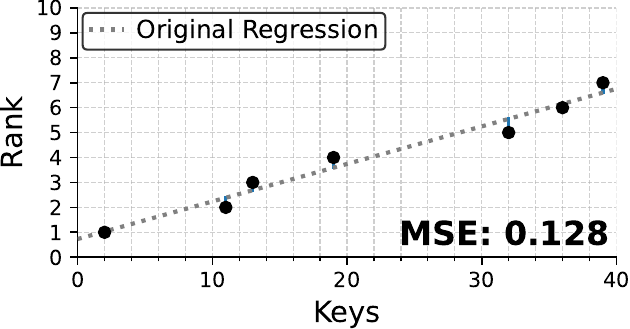}
        \caption{Regression Before Poisoning.}
        \label{fig:single_point_poisoning_attack_a}
    \end{subfigure}
    \begin{subfigure}[b]{0.32\textwidth}
        \centering
        \includegraphics[width=\textwidth]{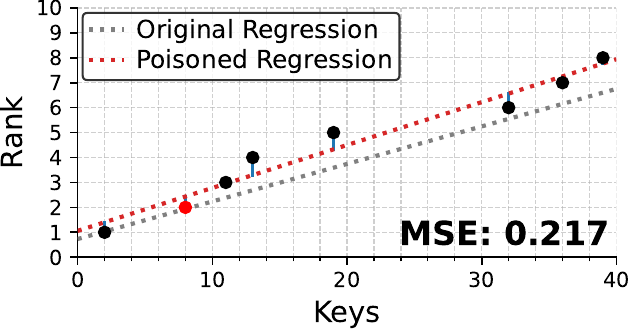}
        \caption{Regression After Poisoning ($\mathcal{P}=\{8\}$).}
        \label{fig:single_point_poisoning_attack_b}
    \end{subfigure}
    \begin{subfigure}[b]{0.32\textwidth}
        \centering
        \includegraphics[width=\textwidth]{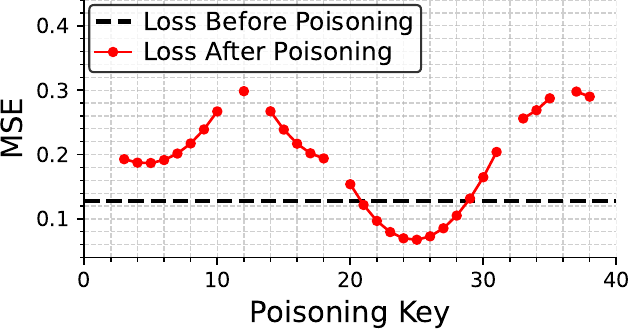}
        \caption{Loss Function.}
        \label{fig:single_point_poisoning_attack_c}
    \end{subfigure}
    \caption{Single-point poisoning attack on $\mathcal{K} = \{2, 11, 13, 19, 32, 36, 39\}$.
    By inserting a poison point, the rank of every key greater than the poison increases by one.
    The linear regression model is fitted on the poisoned key set, leading to a change in MSE.}
    \label{fig:single_point_poisoning_attack}
\end{figure*}

In this section, we first define the problem setting of poisoning attacks against linear regression models trained on CDFs (\cref{sec:problem_statement}), followed by an overview of existing attack methods (\cref{sec:existing_attacks}).

\subsection{Problem Statement}
\label{sec:problem_statement}

Throughout this paper, we let $\mathcal{K}$ denote the set of \emph{legitimate keys}, i.e., the keys originally present in the training data before any attack, and $n \coloneq |\mathcal{K}|$.
We use $\mathcal{X}$ to denote a general key set, which may include both legitimate and poison keys, and $m \coloneq |\mathcal{X}|$.

We begin by defining linear regression on CDFs \Rone{similarly} to~\cite{kornaropoulos2022price}.
\begin{definition_box}
\begin{definition}[\textbf{Linear Regression on CDFs~\cite[Def. 1]{kornaropoulos2022price}}]
\label{def:linear_regression_on_cdfs}
Let $\mathcal{X} = \{x_1, x_2, \dots, x_m\}$ be a multiset of natural numbers representing the keys stored in the index, where $x_1 \leq x_2 \leq \dots \leq x_m$ and at least two distinct values appear in $\mathcal{X}$ (i.e., $x_1 \neq x_m$).
Define the rank associated with $x_i$ as $r_i \coloneq i$.
The MSE of fitting a linear regression model with parameters $w, b \in \mathbb{R}$ is defined as
\begin{equation}
    \mathcal{L}(\mathcal{X};w,b) \coloneq \frac{1}{m} \sum_{i=1}^m \left(w x_i + b - r_i\right)^2.
\end{equation}
The problem of linear regression on CDFs is to find the model parameters that minimize the MSE, that is, to determine
\begin{equation}
    E(\mathcal{X}) \coloneq \min_{w,b} \mathcal{L}(\mathcal{X};w,b).
\end{equation}
\end{definition}
\end{definition_box}
\Rtwo{For convenience, define $\mathrm{Cov}_\mathrm{XR}, \mathrm{Var}_\mathrm{X}, \mathrm{Var}_\mathrm{R}, \bar{x}$, and $\bar{r}$ as follows:}
\Rtwo{\begin{align}
\label{eq:cov_and_var_and_m}
    \mathrm{Cov}_\mathrm{XR} &= \frac{1}{m} \sum_{i=1}^m (x_i - \bar{x})(r_i - \bar{r}), \quad
    \mathrm{Var}_\mathrm{X} = \frac{1}{m} \sum_{i=1}^m (x_i - \bar{x})^2, \nonumber \\
    \mathrm{Var}_\mathrm{R} &= \frac{1}{m} \sum_{i=1}^m (r_i - \bar{r})^2, \quad
    \bar{x} = \frac{1}{m} \sum_{i=1}^m x_i, \quad
    \bar{r} = \frac{1}{m} \sum_{i=1}^m r_i.
\end{align}}
\Rtwo{Then, this problem has the following closed-form solution~\cite{kornaropoulos2022price}:}
\Rtwo{\begin{equation}
\label{eq:closed_form_solution}
    w^\ast = \frac{\mathrm{Cov}_\mathrm{XR}}{\mathrm{Var}_\mathrm{X}}, 
    \quad b^\ast = \bar{r} - w^\ast \bar{x}, 
    \quad E(\mathcal{X}) = \mathrm{Var}_\mathrm{R} - \frac{{\mathrm{Cov}_\mathrm{XR}}^2}{\mathrm{Var}_\mathrm{X}}.
\end{equation}}
\Rtwo{Following prior work~\cite{kornaropoulos2022price}, we focus on integer keys for simplicity; extensions to other key types are discussed in \cref{sec:discussion}.}

We address the following poisoning attack problem, which is the same as the one studied in~\cite{kornaropoulos2022price}:
\begin{definition_box}
\begin{definition}[\textbf{Poisoning Linear Regression on CDFs~\cite[Def. 2]{kornaropoulos2022price}}]
\label{def:poisoning_linear_regression_on_cdfs}
Let $\mathcal{K} = \{k_1, k_2, \dots, k_n\} \subset \mathbb{N}$ be $n$ distinct legitimate keys (i.e., the keys in the original training data before any attack), sorted so that $k_1 < k_2 < \dots < k_n$.
Let $\lambda \in \mathbb{N}$ be the maximum number of poison points.
The problem of poisoning linear regression on CDFs is to find a set $\mathcal{P}^\ast \subset \mathbb{N}$ satisfying $|\mathcal{P}^\ast| \leq \lambda$ and $\mathcal{P}^\ast \subset \{k_1 + 1, k_1 + 2, \dots, k_n - 1\} \setminus \mathcal{K}$ that maximizes $E(\mathcal{K} \cup \mathcal{P}^\ast)$.
Formally, 
\begin{equation}
\label{eq:poisoning_linear_regression_on_cdfs}
    \mathcal{P}^\ast \coloneq \argmax_{\mathcal{P} ~ \mathrm{s.t.} ~ |\mathcal{P}| \leq \lambda, ~ \mathcal{P} \subset \{k_1 + 1, k_1 + 2, \dots, k_n - 1\} \setminus \mathcal{K}} E(\mathcal{K} \cup \mathcal{P}).
\end{equation}
\end{definition}
\end{definition_box}
\Rone{\textbf{Threat Model.}
We adopt the same threat model as in~\cite{kornaropoulos2022price}.
The attacker has full knowledge of the legitimate key set $\mathcal{K}$ and can inject arbitrarily chosen poison keys into the training data, subject only to the constraint in~\cref{eq:poisoning_linear_regression_on_cdfs}.
Among common threat models, this setting gives the attacker the strongest information and capabilities, and thus represents the most impactful attack scenario.}

As in~\cite{kornaropoulos2022price}, we restrict poisons to lie strictly between the minimum and maximum legitimate keys to avoid trivial outlier detection and removal.
\Rtwo{If there are no unused integer values between $k_1$ and $k_n$ (i.e., $\mathcal{K}=\{k_1,k_1+1,\dots,k_n\}$), then no poisons can be injected.}
When $\lambda = 1$, the problem is referred to as a \textit{single-point poisoning attack}; when $\lambda \geq 2$, it is called a \textit{multi-point poisoning attack}.

\cref{fig:single_point_poisoning_attack} illustrates an example of a single-point attack.
\cref{fig:single_point_poisoning_attack_a} shows the legitimate keys and the linear regression model fitted to their CDF.
\cref{fig:single_point_poisoning_attack_b} shows how the regression line is distorted when a poison point $\mathcal{P} = \{8\}$ is inserted.
A key characteristic of poisoning attacks against linear regression on CDFs is that the insertion of poison points alters the ranks of all keys following the poisons.
For example, in \cref{fig:single_point_poisoning_attack_a,fig:single_point_poisoning_attack_b}, inserting the poison point 8 does not affect the ranks of keys smaller than 8, but increases the rank of each key larger than 8 by one. 
This rank-shifting effect is one of the main factors that makes the mathematical analysis of such attacks challenging.

\subsection{Existing Attacks}
\label{sec:existing_attacks}

\begin{algorithm}[t]
    \caption{\Rone{Single-point Poisoning Attack}\Rtwo{~\cite{kornaropoulos2022price}}}
    \label{algo:single-point poisoning attack}
    \begin{algorithmic}[1]
    \Require Legitimate Keys: $\mathcal{K}$
    \Function{Single-pointPoisoningAttack}{$\mathcal{K}$}
        \State $k_\mathrm{min} \gets \min(\mathcal{K}), \quad k_\mathrm{max} \gets \max(\mathcal{K})$
        \State $\mathcal{P}_\mathrm{Cands} \gets \{k + 1 \mid k \in \mathcal{K} \setminus \{k_\mathrm{max}\}\} \cup \{k - 1 \mid k \in \mathcal{K} \setminus \{k_\mathrm{min}\}\}$
        \State $\mathcal{P}_\mathrm{Cands} \gets \mathcal{P}_\mathrm{Cands} \setminus \mathcal{K}$
        \If {$\mathcal{P}_\mathrm{Cands} = \varnothing$} \textbf{return} \texttt{None} \EndIf
        \State $p \gets \textsc{SelectMaxLossCandidate}(\mathcal{K}, \mathcal{P}_\mathrm{Cands})$
        \If {$E(\mathcal{K} \cup \{p\}) < E(\mathcal{K})$} \textbf{return} \texttt{None} \EndIf
        \State \textbf{return} $p$
    \EndFunction
    \end{algorithmic}
\end{algorithm}

\begin{algorithm}[t]
\caption{\Rone{Greedy Multi-point Poisoning Attack}\Rtwo{~\cite{kornaropoulos2022price}}}
\label{algo:multi-point poisoning attack}
\begin{algorithmic}[1]
\Require Legitimate Keys: $\mathcal{K}$, Maximum Number of Poisons: $\lambda$
\Function{Multi-pointPoisoningAttack}{$\mathcal{K}, \lambda$}
    \State $\mathcal{P} \gets \varnothing$
    \While {$|\mathcal{P}| < \lambda$}
        \State $p \gets \textsc{Single-pointPoisoningAttack}(\mathcal{K} \cup \mathcal{P})$
        \If {$p = \texttt{None}$} \textbf{break} \EndIf
        \State $\mathcal{P} \gets \mathcal{P} \cup \{p\}$
    \EndWhile
    \State \textbf{return} $\mathcal{P}$
\EndFunction
\end{algorithmic}
\end{algorithm}

\textbf{Single-point poisoning attack.}
\Rtwo{We first recap the single-point poisoning attack strategy proposed by~\cite{kornaropoulos2022price}.
Importantly, the following statement is an \emph{empirical observation} reported in prior work and is \emph{not} one of our contributions:}
\begin{theorem_box}
\begin{observation}[\Rtwo{\textbf{Empirical Observation in~\cite{kornaropoulos2022price}}}]
\label{observ:single_point_poisoning_attack}
The optimal single-point attack $\mathcal{P}^\ast$ is either the empty set or a singleton containing an integer adjacent to a legitimate key;
\begin{equation}
\mathcal{P}^\ast \subset \{k + 1 \mid k \in \mathcal{K}\} \cup \{k - 1 \mid k \in \mathcal{K}\}.
\end{equation}
\end{observation}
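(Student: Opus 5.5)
The plan is to fix the \emph{gap} in which the single poison lies and study $E$ as a function of the poison's position inside that gap. If $p$ satisfies $k_j < p < k_{j+1}$, then the poisoned multiset $\mathcal{X}=\mathcal{K}\cup\{p\}$ has $m=n+1$ elements. The ranks are fixed throughout the whole gap: $p$ has rank $j+1$, the keys $k_1,\dots,k_j$ keep ranks $1,\dots,j$, and the keys $k_{j+1},\dots,k_n$ have ranks $j+2,\dots,n+1$. Hence, by \cref{eq:closed_form_solution}, $\mathrm{Var}_\mathrm{R}$ is a constant on the gap and only $\mathrm{Cov}_\mathrm{XR}$ and $\mathrm{Var}_\mathrm{X}$ depend on $p$. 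The integers of the gap form an interval $\{k_j+1,\dots,k_{j+1}-1\}$, and both of its ends are adjacent to legitimate keys. So it suffices to show that, for each $j$, the maximum of $E$ over this interval is attained at one of its ends. The option $\mathcal{P}^\ast=\varnothing$ covers the case where no poison increases $E$.

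\textbf{Step 1 (shape of the two moments).} Since $\bar r$ is constant, $\mathrm{Cov}_\mathrm{XR}=\frac1m\sum x_i r_i-\bar x\bar r$ is affine in $p$; write it as $C(p)=a+cp$, with $c=\frac1m\bigl(j+1-\bar r\bigr)$. Similarly $\mathrm{Var}_\mathrm{X}=Q(p)=\alpha+\beta p+\gamma p^2$ is a quadratic with leading coefficient $\gamma=\frac{m-1}{m^2}>0$. Maximizing $E$ is therefore equivalent to minimizing $g(p)=C(p)^2/Q(p)$ over the gap.

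I will also note that $C(p)>0$ everywhere on the gap. The reason is that the sorted keys and their ranks are comonotone and not both constant, so $\mathrm{Cov}_\mathrm{XR}>0$ for every admissible $\mathcal{X}$. This excludes the degenerate interior point $p=-a/c$ where $g=0$.

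\textbf{Step 2 (unimodality).} Differentiating gives $g'(p)=C(p)\,B(p)/Q(p)^2$ with $B=2cQ-CQ'$. The $p^2$ terms cancel, so
\[
B(p)=(2c\alpha-a\beta)+(c\beta-2a\gamma)\,p
\]
is \emph{linear}. Because $C>0$ and $Q>0$, the sign of $g'$ is the sign of $B$, which changes at most once. I then want to show that the slope satisfies $c\beta-2a\gamma<0$. If so, $g$ is increasing-then-decreasing on the real gap (or monotone), so it has no interior local minimum. Restricted to integers, this forces the minimum of $g$, and hence the maximum of $E$, to lie at $k_j+1$ or $k_{j+1}-1$.

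\textbf{Main obstacle.} The hard step is establishing the sign of $c\beta-2a\gamma$. I would first express $a,\beta,\alpha$ explicitly through the legitimate keys: $\beta=\frac{2}{m}\bigl(-\frac1m\sum_i k_i\bigr)\cdot\frac{m-1}{m}$-type terms and $a=\frac1m\sum_i k_i r_i-\frac{\bar r}{m}\sum_i k_i$. I would then rewrite the slope as a multiple of a covariance-type quantity, namely $c\beta-2a\gamma\propto -\bigl[(j+1-\bar r)\,\overline{k}_{\mathcal K}+(m-1)\,\widetilde{\mathrm{Cov}}\bigr]$-like. The goal is to show this bracket is positive, using that the covariance of keys and ranks dominates the shift term, via an Abel-summation or Chebyshev sum inequality argument.

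If a direct sign argument fails, the fallback is more geometric. The condition $B=0$ says $\frac{d}{dp}\log Q=\frac{d}{dp}\log C^2$. I would compare the convex function $\log Q$ with the concave function $2\log C$ (concave because $C$ is affine and positive): their difference $\log g$ is then concave on the gap. A concave function attains its minimum on an interval at an endpoint, which gives the claim without computing the slope at all. I expect this concavity route to be the cleanest, and I would try it first.
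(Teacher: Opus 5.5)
\textbf{There is a genuine gap: neither route establishes its key inequality, though both can be repaired.}

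Your first route is exactly the paper's. Your $B=2cQ-CQ'$ is the negative of the function $g_i$ in the proof of \cref{lem:single_point_poisoning_attack}, and that lemma's whole content is the inequality you leave open, $c\beta-2a\gamma<0$. The proposal never proves it. The bracket is left ``-like'', and Chebyshev or Abel summation cannot be applied until you know which two monotone sequences the bracket is a covariance of. The missing identity is
\[
2a\gamma-c\beta=CQ''-cQ'=\frac{2}{m^2}\Bigl[(m-1)\,C(p)-(j+1-\bar r)(p-\bar x)\Bigr]=\frac{2(m-1)^2}{m^3}\,\mathrm{Cov}'.
\]
Here $\bar x$ is the mean of $\mathcal K\cup\{p\}$. $\mathrm{Cov}'$ is the covariance (normalized by $n=m-1$) of $(k_1,\dots,k_n)$ with their post-poisoning ranks $(1,\dots,j,j+2,\dots,n+1)$, i.e.\ of the data with the poison deleted. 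Equivalently, your bracket with the correct coefficient is $m\,C(0)+(j+1-\bar r)\,\overline{k}_{\mathcal K}=(m-1)\,\mathrm{Cov}'$, where $\overline{k}_{\mathcal K}$ is the mean of $\mathcal K$. This quantity does not depend on $p$. It is positive because both sequences are strictly increasing. That is precisely the paper's argument.

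Your fallback has its own gap. $\log Q$ is not convex merely because $Q$ is a positive convex quadratic: $\log(1+p^2)$ is concave for $|p|>1$. Without convexity of $\log Q$, $2\log C-\log Q$ need not be concave. You must check $QQ''>Q'^2$ on the gap. Write $Q(p)=\frac{n}{n+1}v+\frac{n}{(n+1)^2}(p-\mu)^2$, with $\mu,v$ the mean and variance of $\mathcal K$. Then the condition is equivalent to $(p-\mu)^2<(n+1)v$. This does hold on $[k_1,k_n]$, since there $(p-\mu)^2\le\max_i(k_i-\mu)^2\le nv$.

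With this check added, the fallback becomes a correct proof that genuinely differs from the paper's. It shows that $\log(C^2/Q)$ is strictly concave on each gap, i.e.\ that $(\log g)'=B/(CQ)$ is decreasing. The paper instead shows that the linear numerator $B$ itself is decreasing. Both conclusions give that $E$ is decreasing-then-increasing on a gap.

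Finally, the statement requires ruling out \emph{every} interior optimum, not just exhibiting some optimum at a gap end. For that you need the strict versions, which both repaired routes provide.
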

\end{theorem_box}
\cref{fig:single_point_poisoning_attack} presents an example illustrating this observation.
\cref{fig:single_point_poisoning_attack_c} shows a plot of $E(\mathcal{K} \cup \{p\})$ for all possible poison points $p$.
Note that integers already occupied by legitimate keys are excluded from the plot, as they cannot serve as poison keys.
In this example, $E(\mathcal{K} \cup \{p\})$ forms a convex shape within each interval between two consecutive legitimate keys.
Therefore, the poison point $p$ that maximizes the loss in each interval lies at either endpoint of the interval.
Specifically, the optimal single-point attack in this case is $\mathcal{P} = \{12\}$, which aligns with \cref{observ:single_point_poisoning_attack}.

Based on this observation, \cite{kornaropoulos2022price} proposed the following single-point attack algorithm: exhaustively evaluating $E(\mathcal{K} \cup \{p\})$ for every candidate poison point $p$ (i.e., integers adjacent to legitimate keys).
The algorithm is summarized in \cref{algo:single-point poisoning attack}.
By employing differential calculation, this evaluation can be performed in time $\mathcal{O}(n)$.
However,~\cite{kornaropoulos2022price} did not prove the optimality of this algorithm.

\textbf{Multi-point poisoning attack.}
For multi-point attacks, \cite{kornaropoulos2022price} iteratively apply the above single-point attack $\lambda$ times, see \cref{algo:multi-point poisoning attack}.
The time complexity of this multi-point poisoning algorithm is $\mathcal{O}((n + \lambda)\lambda)$.
Empirically, they report that this greedy approach results in an optimal attack, again missing a formal proof.

\begin{figure}[t]
    \centering
    \includegraphics[width=\columnwidth]{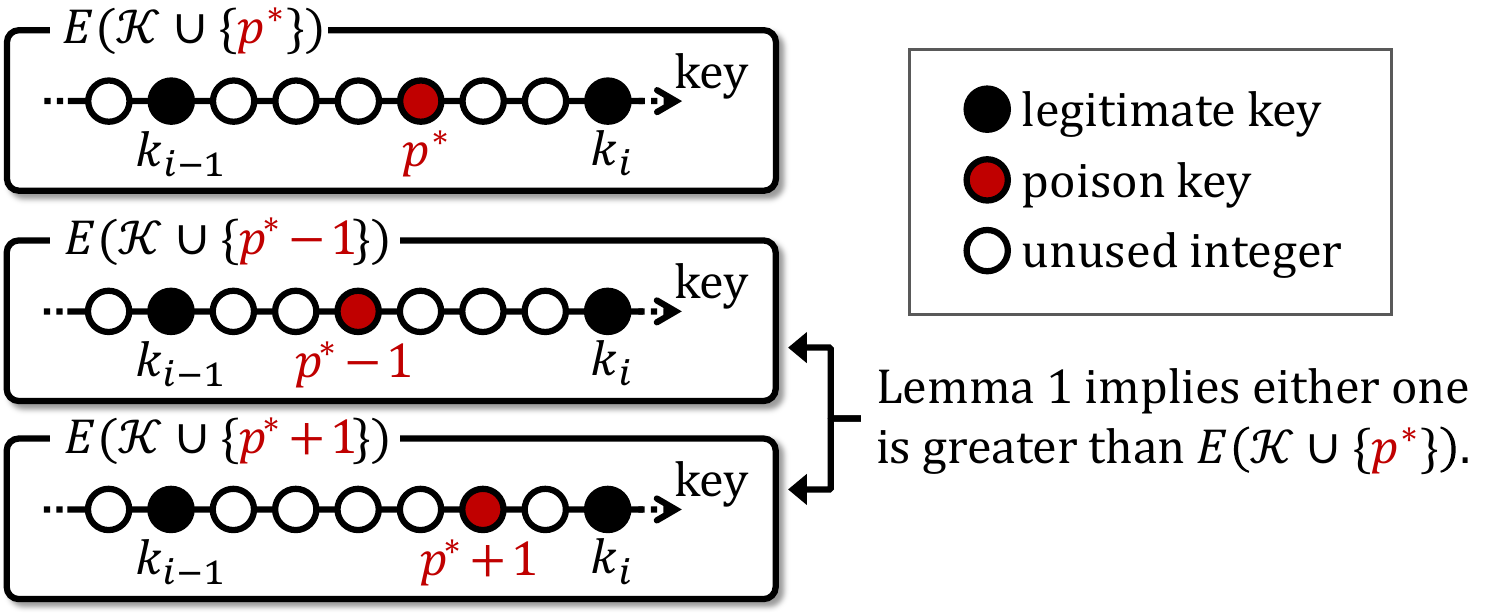}
    \caption{Proof of \cref{thm:single_point_poisoning_attack}. \cref{lem:single_point_poisoning_attack} implies that either $E( \mathcal{K} \cup \{p^\ast - 1\})$ or $E( \mathcal{K} \cup \{p^\ast + 1\})$ is greater than $E( \mathcal{K} \cup \{p^\ast\})$.}
    \label{fig:single_point_poisoning_attack_proof}
\end{figure}

\begin{figure*}[t]
    \centering
    \begin{subfigure}[b]{0.32\textwidth}
        \centering
        \includegraphics[width=\textwidth]{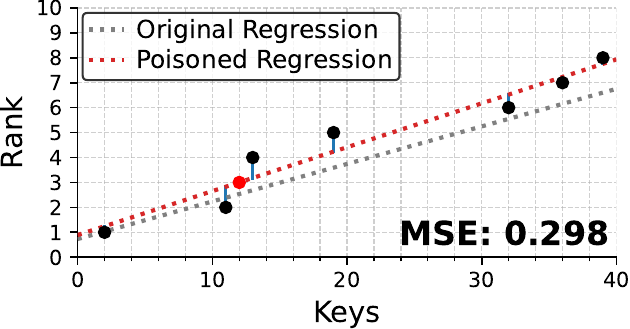}
        \caption{Optimal 1-point Poisoning.}
        \label{fig:greedy_poisoning_p1}
    \end{subfigure}
    \begin{subfigure}[b]{0.32\textwidth}
        \centering
        \includegraphics[width=\textwidth]{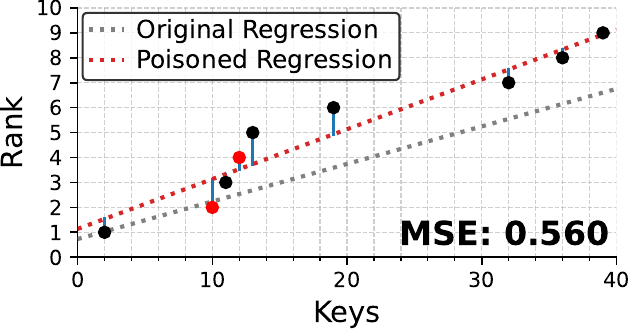}
        \caption{Greedy 2-point Poisoning.}
        \label{fig:greedy_poisoning_p2}
    \end{subfigure}
    \begin{subfigure}[b]{0.32\textwidth}
        \centering
        \includegraphics[width=\textwidth]{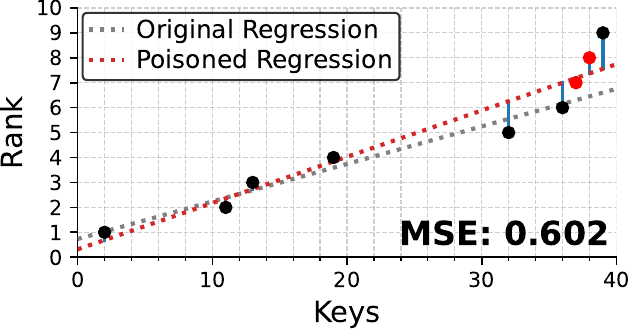}
        \caption{Optimal 2-point Poisoning.}
        \label{fig:optimal_poisoning_p2}
    \end{subfigure}
    \caption{Multi-point poisoning attack.
    The greedy two-point attack~\cite{kornaropoulos2022price} injects the poison point $12$ first (\cref{fig:greedy_poisoning_p1}), followed by the injection of the poison point $10$ (\cref{fig:greedy_poisoning_p2}). 
    In contrast, the optimal attack is $\{37,38\}$, resulting in a higher MSE (\cref{fig:optimal_poisoning_p2}).}
    \label{fig:multi_point_poisoning_attack}
\end{figure*}

\section{Optimality of Single-Point Poisoning Attack}
\label{sec:single_point_poisoning_attack}

\Rtwo{
We theoretically guarantee the optimality of the single-point attack proposed in~\cite{kornaropoulos2022price} by proving the following theorem.
}
\begin{theorem_box}
    \begin{theorem}
    \label{thm:single_point_poisoning_attack}
    \cref{observ:single_point_poisoning_attack} always holds.
    In other words, the optimal single-point attack $\mathcal{P}^\ast$ satisfies:
    \begin{equation}
        \mathcal{P}^\ast \subset \{k + 1 \mid k \in \mathcal{K}\} \cup \{k - 1 \mid k \in \mathcal{K}\}.
    \end{equation}
    \end{theorem}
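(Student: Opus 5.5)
Fix a gap between consecutive legitimate keys, $k_j < p < k_{j+1}$, and take any $p$ in it with $p-1 > k_j$ and $p+1 < k_{j+1}$, i.e.\ a poison not adjacent to a legitimate key. The plan is to show that
\[
  2E(\mathcal{K}\cup\{p\}) \;<\; E(\mathcal{K}\cup\{p-1\}) + E(\mathcal{K}\cup\{p+1\}),
\]
or at least $\le$ with a tie-breaking argument. Then $\max\{E(\mathcal{K}\cup\{p-1\}),E(\mathcal{K}\cup\{p+1\})\} > E(\mathcal{K}\cup\{p\})$, so no non-adjacent $p$ can be optimal, as the figure for the proof suggests. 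The empty set is allowed by the statement, so only the case $|\mathcal{P}^\ast|=1$ needs treatment. Strict convexity also handles ties: if all maximizers are non-adjacent, pick one and derive a contradiction.

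\textbf{Step 1: reduce to a one-variable function.} Within the gap the ranks of all points are fixed. The legitimate keys $k_1,\dots,k_j$ have ranks $1,\dots,j$, the poison has rank $j+1$, and $k_{j+1},\dots,k_n$ have ranks $j+2,\dots,n+1$. So as $p$ varies over the gap, only one $x$-coordinate changes, and $m=n+1$ is fixed. By \cref{eq:closed_form_solution}, $E=\mathrm{Var}_\mathrm{R}-\mathrm{Cov}_\mathrm{XR}^2/\mathrm{Var}_\mathrm{X}$, and $\mathrm{Var}_\mathrm{R}$ is constant. Write $m\,\mathrm{Cov}_\mathrm{XR} = \alpha + \beta p$ with $\beta = (j+1)-\bar r$, which is linear in $p$. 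Likewise $m\,\mathrm{Var}_\mathrm{X} = a p^2 + b p + c$ with $a = 1-1/m > 0$, a quadratic with positive leading coefficient and negative discriminant. The latter holds because the other keys are not all equal, so the variance is never zero. Thus $E(p) = C - f(p)$ with $f(p) = \frac{(\alpha+\beta p)^2}{m(ap^2+bp+c)}$. Convexity of $E$ on the real interval amounts to concavity of $f$ there.

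\textbf{Step 2: the main obstacle.} A ratio of a squared linear function to a positive quadratic is not globally concave, so I cannot simply take a second derivative and claim a sign. First I would rewrite $f(p) = \max_{w}\bigl(2w(\alpha+\beta p) - w^2(ap^2+bp+c)\bigr)/m$, i.e.\ $E(p)=\min_{w,b}\mathcal{L}$. This shows $E$ is a minimum of functions of $p$. For fixed $(w,b)$, $\mathcal{L}(\mathcal{K}\cup\{p\};w,b)$ is convex quadratic in $p$, but a minimum of convex functions is not convex, so that alone fails.

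Instead I would exploit the specific structure: $f$ is a "Rayleigh quotient" type function whose second derivative has the sign of a cubic-type expression. Concretely, I would substitute $u = p - \bar{x}'$, centering by the mean of the other keys and using that the new mean shifts by $u/m$. I expect this to give $m\,\mathrm{Var}_\mathrm{X} = S + \tfrac{m-1}{m}u^2$ and $m\,\mathrm{Cov}_\mathrm{XR} = T + \gamma u$, where $S>0$ and $T$ come from the other keys and $\gamma = j+1-\bar r$.

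Then I would verify directly the discrete midpoint inequality $2f(u) \ge f(u-1)+f(u+1)$. If this is false in general, I would instead show that $E$, as a function of real $p$ on the gap, has no interior local maximum. This weaker claim suffices: the integer maximum over $\{k_j+1,\dots,k_{j+1}-1\}$ then sits at an endpoint. To get it, I would solve $f'(u)=0$. The numerator of $f'$ is $2\gamma(T+\gamma u)(S+\tfrac{m-1}{m}u^2) - (T+\gamma u)^2\tfrac{2(m-1)}{m}u$. Factoring out $(T+\gamma u)$ leaves a linear equation in $u$: $\gamma S - \tfrac{m-1}{m}Tu = 0$. So $f$ has at most one critical point besides its zero at $u=-T/\gamma$. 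Since $f\ge 0$ with a zero there, that zero is a minimum. Hence the other critical point is the unique maximum of $f$, which is the unique minimum of $E$.

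With at most one interior critical point, and it being a minimum of $E$, $E$ is decreasing then increasing (quasi-convex) on the gap. Its maximum over the integer points $k_j+1,\dots,k_{j+1}-1$ is therefore attained at $k_j+1$ or $k_{j+1}-1$, both adjacent to legitimate keys. If the maximum is also attained at an interior integer, $E$ is constant there and an endpoint also attains it. So some optimum lies in the adjacent set.

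Finally, I would apply this to every gap and compare with the empty set. This proves the theorem in the sense that an optimal $\mathcal{P}^\ast$ can always be chosen inside $\{k\pm1\}$. For strictness, I would note that $E$ is non-constant whenever $\gamma S \neq 0$ or $T\neq 0$; the degenerate case is handled separately.
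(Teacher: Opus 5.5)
Your fallback argument is the paper's argument in centered coordinates. On a gap, the paper writes $dE/dp=(\mathrm{Cov}_\mathrm{XR}/\mathrm{Var}_\mathrm{X}^2)\,g$ and shows that $g$ is strictly increasing. Your factor $\frac{m-1}{m}Tu-\gamma S$ is exactly $g$ up to a positive constant. The paper's $dg/dx_i$ is a multiple of $\mathrm{Cov}_\mathrm{X'R'}$ and does not depend on $p$, which matches your finding that this factor is linear.

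As written, however, there is a gap. You find the critical point $u_0=-T/\gamma$, where $\mathrm{Cov}_\mathrm{XR}=0$, and correctly note that it is a minimum of $f$. That makes it a local \emph{maximum} of $E=C-f$, which is precisely the kind of interior critical point that would defeat the conclusion. You then proceed ``with at most one interior critical point, and it being a minimum of $E$'' without showing that $u_0$ lies outside $(k_j,k_{j+1})$. Nothing in the proposal rules this out.

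The missing fact is that $\mathrm{Cov}_\mathrm{XR}>0$ for every real $p\in(k_j,k_{j+1})$. There the keys are strictly increasing and paired with strictly increasing ranks, so $m\,\mathrm{Cov}_\mathrm{XR}=\frac{1}{2m}\sum_{i,l}(x_i-x_l)(r_i-r_l)>0$. Hence $T+\gamma u>0$ on the gap, and $u_0$ lies outside it. This is exactly the positivity the paper invokes to get $\mathrm{sign}(dE/dx_i)=\mathrm{sign}(g_i)$.

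You should also observe that $T=\sum_{l=1}^{n}(k_l-\bar{x}')(r_l-\bar r)$, where $r_l$ are the post-poisoning ranks of the legitimate keys. This is $n$ times the covariance of the legitimate keys with those ranks, i.e.\ the paper's $\mathrm{Cov}_\mathrm{X'R'}$ (up to normalization), and it is positive. This is the paper's key inequality. It shows directly that your linear factor is strictly increasing, with no need to argue about where $u_1$ sits relative to $u_0$.

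With $T>0$ the ``degenerate case'' you postpone is vacuous. Then $dE/dp$ has at most one zero on the gap, and $E$ is strictly monotone on either side of it. Consequently every maximizer, not merely some, is adjacent to a legitimate key, which is the form in which the theorem is stated and proved.

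Finally, the paper explicitly says it could not prove the convexity you open with, so you are right not to rely on the midpoint-convexity route.
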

\end{theorem_box}
To prove \cref{thm:single_point_poisoning_attack}, we first establish the following key lemma.
\begin{theorem_box}
\begin{lemma}
\label{lem:single_point_poisoning_attack}
Let $\mathcal{X}$ be the set of keys stored in the index, with $m = |\mathcal{X}| \geq 3$, and let $x_1 \leq x_2 \leq \cdots \leq x_m$ denote the elements of $\mathcal{X}$ in increasing order.
Then, for each $i \in \{2, 3, \dots, m-1\}$, $\mathrm{sign}\left(\frac{d E(\mathcal{X})}{dx_i}\right)$ is non-decreasing over the interval $x_i \in (x_{i-1}, x_{i+1})$, and there exists at most one value of $x_i$ in this interval for which $\frac{d E(\mathcal{X})}{dx_i} = 0$.
Here, the function $\mathrm{sign}(x)$ is defined as $-1$ if $x < 0$, $0$ if $x = 0$, and $1$ if $x > 0$.
\end{lemma}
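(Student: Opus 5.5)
The plan is to treat $E(\mathcal{X})$ as a function of the single variable $t \coloneq x_i$, with all other keys fixed, and to use the closed form \cref{eq:closed_form_solution}. While $t$ ranges over $(x_{i-1}, x_{i+1})$, the sorted order and hence all ranks $r_j = j$ stay fixed. So $\mathrm{Var}_\mathrm{R}$ and $\bar r$ are constants, and only $C(t) \coloneq \mathrm{Cov}_\mathrm{XR}$ and $V(t) \coloneq \mathrm{Var}_\mathrm{X}$ depend on $t$. Writing $C = \frac1m\sum_j x_j(r_j - \bar r)$, we see that $C(t) = a + \alpha t$ is affine with $\alpha = (i-\bar r)/m$. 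Likewise $V(t) = \beta t^2 + \gamma t + \delta$ is quadratic with $\beta = \frac1m(1-\frac1m) > 0$, and $V'(t) = \frac2m(t-\bar x)$.

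Differentiating $E = \mathrm{Var}_\mathrm{R} - C^2/V$ gives
\begin{equation*}
\frac{dE}{dt} = \frac{C\,\bigl(C V' - 2\alpha V\bigr)}{V^2}.
\end{equation*}
Next I will show $C > 0$ and $V > 0$ on the interval. Positivity of $V$ holds because $x_1 < x_m$. Positivity of $C$ holds because $m C = \sum_{j<l}(x_l - x_j)(r_l - r_j)/m$ is a sum of nonnegative terms with at least one positive. Hence $\mathrm{sign}(dE/dt) = \mathrm{sign}(g(t))$, where $g \coloneq C V' - 2\alpha V$. The key observation is that the $t^2$ coefficients cancel: $C V'$ has leading coefficient $2\alpha\beta$, and so does $2\alpha V$. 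Therefore $g$ is affine in $t$, and its slope is the constant $g' = \alpha V' + 2\beta C - 2\alpha V' = 2\beta C - \alpha V'$. Indeed, $2\beta(a+\alpha t) - \alpha(2\beta t + \gamma) = 2\beta a - \alpha\gamma$, which is independent of $t$.

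It then suffices to prove $g' > 0$. Once that holds, $g$ is strictly increasing, so $\mathrm{sign}(g)$ is non-decreasing and $g$ has at most one zero in the interval, which is exactly the claim. This is the step I expect to require the most care. My plan is to identify $g'$ with a leave-one-out covariance. Let $S = mC = \sum_j (x_j-\bar x)(r_j - \bar r)$, and let $S'$ be the analogous sum over the $m-1$ points $j \ne i$, centred at their own means. The standard deletion identity gives $S' = S - \frac{m}{m-1}(t-\bar x)(i - \bar r)$. Substituting $\beta$, $\alpha$ and $V'$ into $g' = 2\beta C - \alpha V'$ gives $g' = \frac{2(m-1)}{m^3}\,S'$.

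Finally, $S' > 0$ by the same pairwise-sum argument as for $C$ above. Among the remaining points the ranks are strictly increasing and the keys are non-decreasing, and the remaining keys are not all equal. This last fact uses $m \ge 3$ together with $i \in \{2,\dots,m-1\}$: both $x_1$ and $x_m$ remain, and $x_1 \le x_{i-1} < x_{i+1} \le x_m$ because the interval $(x_{i-1},x_{i+1})$ is nonempty. Hence $g' > 0$, which completes the plan. The places that need careful checking are the algebra of the deletion identity and the cancellation of the quadratic terms.
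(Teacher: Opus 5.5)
Your proposal is correct and follows essentially the same route as the paper. Both arguments factor $dE/dx_i$ as $\mathrm{Cov}_\mathrm{XR}/\mathrm{Var}_\mathrm{X}^2$ times $g$, then show $g'>0$ by identifying $\frac{2}{m^2}\bigl((m-1)\mathrm{Cov}_\mathrm{XR}-(i-\bar r)(x_i-\bar x)\bigr)$ with a positive multiple of the leave-one-out covariance. Your pairwise-sum argument for that covariance's positivity is slightly more careful than the paper's: the paper calls the reduced key vector strictly increasing, which can fail when $\mathcal{X}$ has ties.
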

\end{theorem_box}
\begin{proof}[\Rthree{Proof sketch of \cref{lem:single_point_poisoning_attack}}]
\Rthree{Computing $\mathrm{sign}\left(d E(\mathcal{X}) / dx_i \right)$ shows that it equals the sign of the covariance between $\bm{x}'$ and $\bm{r}'$, where $\bm{x}'$ and $\bm{r}'$ denote the vectors obtained from $\bm{x}$ and $\bm{r}$, respectively, by removing their $i$-th elements (i.e., $x_i$ and $i$).
Since both $\bm{x}'$ and $\bm{r}'$ are monotonically increasing, the sign of covariance is positive.}
\end{proof}
\Rthree{We defer the full, formal proof to \cref{app:proof_of_lem_single_point_poisoning_attack}.
Using \cref{lem:single_point_poisoning_attack}, we can prove \cref{thm:single_point_poisoning_attack} as follows.}
\begin{proof}[Proof of \cref{thm:single_point_poisoning_attack}]
To facilitate comprehension of the proof structure, an illustrative figure is presented in \cref{fig:single_point_poisoning_attack_proof}.
Assume, for the sake of contradiction, that there exists an optimal single-point attack $\mathcal{P}^\ast = \{p^\ast\}$ such that $p^\ast \notin \{k + 1 \mid k \in \mathcal{K}\} \cup \{k - 1 \mid k \in \mathcal{K}\}$.

Let $i \in \{2, 3, \dots, n\}$ be the unique index satisfying $k_{i-1} < p^\ast < k_i$.
Under our assumption, the integers adjacent to $p^\ast$ are not legitimate keys, meaning $k_{i-1} < p^\ast - 1$ and $p^\ast + 1 < k_i$ (see \cref{fig:single_point_poisoning_attack_proof}).
Since $p^\ast$ is assumed to be an optimal single poison point, we have:
\begin{equation}
\label{eq:single_point_from_assumption}
    E( \mathcal{K} \cup \{p^\ast {-} 1\}) {\leq} E( \mathcal{K} \cup \{p^\ast\}) ~~ \land ~~ E( \mathcal{K} \cup \{p^\ast\}) {\geq} E( \mathcal{K} \cup \{p^\ast {+} 1\}).
\end{equation}

%On the other hand, this leads to a contradiction with \cref{lem:single_point_poisoning_attack}.
According to \cref{lem:single_point_poisoning_attack}, the function $\mathrm{sign}\left(\frac{d E(\mathcal{K} \cup \{p\})}{dp}\right)$ is non-decreasing over the interval $p \in (k_{i-1}, k_i)$, and there exists at most one point $p$ in this interval where $\frac{d E(\mathcal{K} \cup \{p\})}{dp} = 0$.
Therefore, the behavior of $E(\mathcal{K} \cup \{p\})$ over $p \in (k_{i-1}, k_i)$ must fall into one of the following three cases:
(i) strictly increasing,
(ii) strictly decreasing, or
(iii) strictly decreasing over $(k_{i-1}, p')$ and strictly increasing over $(p', k_i)$ for some $p' \in (k_{i-1}, k_i)$. In all cases, either  $E( \mathcal{K} \cup \{p^\ast - 1\}) > E( \mathcal{K} \cup \{p^\ast\})$ or $ E( \mathcal{K} \cup \{p^\ast\}) < E( \mathcal{K} \cup \{p^\ast + 1\})$. This contradicts the assumption that $p^\ast$ is optimal.
% In all cases, the following holds:
% \begin{equation}
% \label{eq:single_point_from_lemma}
%     E( \mathcal{K} \cup \{p^\ast - 1\}) > E( \mathcal{K} \cup \{p^\ast\}) ~~ \lor ~~ E( \mathcal{K} \cup \{p^\ast\}) < E( \mathcal{K} \cup \{p^\ast + 1\}).
% \end{equation}
% Since \cref{eq:single_point_from_assumption} and \cref{eq:single_point_from_lemma} are in contradiction, we conclude that the assumption must be false.
% Thus, we have shown that the optimal single-point attack satisfies $\mathcal{P}^\ast \subset \{k + 1 \mid k \in \mathcal{K}\} \cup \{k - 1 \mid k \in \mathcal{K}\}$.
\end{proof}

By \cref{thm:single_point_poisoning_attack}, we have established that the single-point attack proposed in~\cite{kornaropoulos2022price} is guaranteed to find the optimal solution.
In other words, by exhaustively searching only the integers adjacent to legitimate keys, one can identify the optimal single-point attack.

\section{Multi-Point Poisoning Attack}
\label{sec:multi_point_poisoning_attack}

In this section, we first demonstrate that the iterative greedy algorithm (\cref{algo:multi-point poisoning attack}) proposed in~\cite{kornaropoulos2022price} is not optimal, and we then describe the key properties that can be rigorously asserted about the structure of the optimal solution (\cref{sec:structure_of_optimal_multi_point_attack}).
Next, leveraging these properties, we propose a method to upper-bound the loss incurred by multi-point attacks (\cref{sec:upper_bound_multi_point_poisoning_attack}).
Finally, motivated by the structural insights described above, we introduce the Segment + Endpoint (Seg+E) class of poison sets and provide efficient exact and heuristic algorithms to find Seg+E solutions (\cref{sec:segment_plus_endpoint_attack}).

\subsection{Structure of Optimal Multi-Point Attacks}
\label{sec:structure_of_optimal_multi_point_attack}

\begin{theorem_box}
\begin{observation}
\label{observ:greedy_not_optimal_multipoint}
The iterative greedy approach (\cref{algo:multi-point poisoning attack}) does not guarantee an optimal multi-point poisoning attack. 
\end{observation}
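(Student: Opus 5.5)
This is a negative statement, so I would prove it with a single explicit counterexample: one instance $(\mathcal{K}, \lambda)$ on which \cref{algo:multi-point poisoning attack} returns a poison set whose loss is strictly smaller than that of some other feasible set. The natural candidate is the instance already shown in \cref{fig:multi_point_poisoning_attack}, with $\mathcal{K} = \{2, 11, 13, 19, 32, 36, 39\}$ and $\lambda = 2$.

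\textbf{Step 1: the greedy output.} I would run \cref{algo:multi-point poisoning attack} by hand, or with exact rational arithmetic. In the first round, \cref{thm:single_point_poisoning_attack} lets us restrict attention to the candidates adjacent to legitimate keys. Evaluating $E(\mathcal{K}\cup\{p\})$ for each candidate via the closed form \cref{eq:closed_form_solution} should confirm that the maximizer is $p=12$, consistent with \cref{fig:single_point_poisoning_attack_c}. In the second round, I would apply the single-point procedure to $\mathcal{K}\cup\{12\}$ and check that it selects $10$. I would also check that the guard $E(\mathcal{K}\cup\{p\}) \ge E(\mathcal{K})$ does not stop the loop early. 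This gives greedy value $E(\mathcal{K}\cup\{10,12\})$. If the second-round candidates tie, the argument must handle every tie-breaking choice: either I show the maximum is unique, or I show the alternative also loses.

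\textbf{Step 2: a better feasible set.} I would take $\mathcal{P}=\{37,38\}$, which is feasible because $37,38 \notin \mathcal{K}$ and both lie strictly between $2$ and $39$. I would then compute $E(\mathcal{K}\cup\{37,38\})$ exactly. The intuition for why it wins is that filling the gap $36$--$39$ contiguously creates a steep local jump in the CDF near the top end, which a single line fits poorly. Greedy misses this because one poison at $37$ or $38$ alone scores worse than $12$ in the first round.

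\textbf{Step 3: comparison and main obstacle.} Showing $E(\mathcal{K}\cup\{37,38\}) > E(\mathcal{K}\cup\{10,12\})$ completes the proof. It is not necessary to show that $\{37,38\}$ is globally optimal, only that it beats greedy. The main obstacle is bookkeeping rather than ideas. With $m=9$ keys I would compute $\mathrm{Var}_\mathrm{X}$, $\mathrm{Cov}_\mathrm{XR}$ and $\mathrm{Var}_\mathrm{R}$ as exact fractions to avoid any doubt from rounding, since the gap between the two losses may be small. The first-round verification over all roughly a dozen candidates is the most tedious part, so I would present it as a short table of values.
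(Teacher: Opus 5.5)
Your proposal is correct and matches the paper's own argument, which rests entirely on the instance of \cref{fig:multi_point_poisoning_attack} ($\mathcal{K}=\{2,11,13,19,32,36,39\}$, $\lambda=2$): greedy adds $12$ and then $10$, while $\{37,38\}$ does strictly better. Carrying out your computation confirms this with no ties and no early stopping. Round one gives $E\approx 0.2984$ at $p=12$ versus $0.2977$ at $p=37$. Round two gives $0.5600$ at $p=10$ versus $0.5551$ at $p=14$. Finally, $E(\mathcal{K}\cup\{37,38\}) = 919/1527 \approx 0.6018 > 6935/12384 \approx 0.5600 = E(\mathcal{K}\cup\{10,12\})$.
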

\end{theorem_box}

\cref{fig:multi_point_poisoning_attack} provides an \Rone{illustrative example} where the greedy approach does not lead to the optimal solution.
As shown in the previous section, for the single-point attack case, the optimal solution can be found by exhaustively evaluating all integers adjacent to the legitimate keys (\cref{fig:greedy_poisoning_p1}).
The greedy poisoning method of~\cite{kornaropoulos2022price} performs a two-point attack by adding one additional optimal poison key on top of this (\cref{fig:greedy_poisoning_p2}).
However, the true optimal two-point attack involves adding poison keys as shown in \cref{fig:optimal_poisoning_p2}.
Thus, the greedy poisoning method does not always yield the optimal solution.
This finding is valuable for preventing confusion among future researchers, as~\cite{kornaropoulos2022price} reports the following:
``\textit{Even though we do not provide a proof of optimality for the multiple-point poisoning, we experimentally observed that our approach matched the performance of the brute-force attack in every tested dataset.}''

\begin{figure}[t]
    \centering
    \includegraphics[width=\columnwidth]{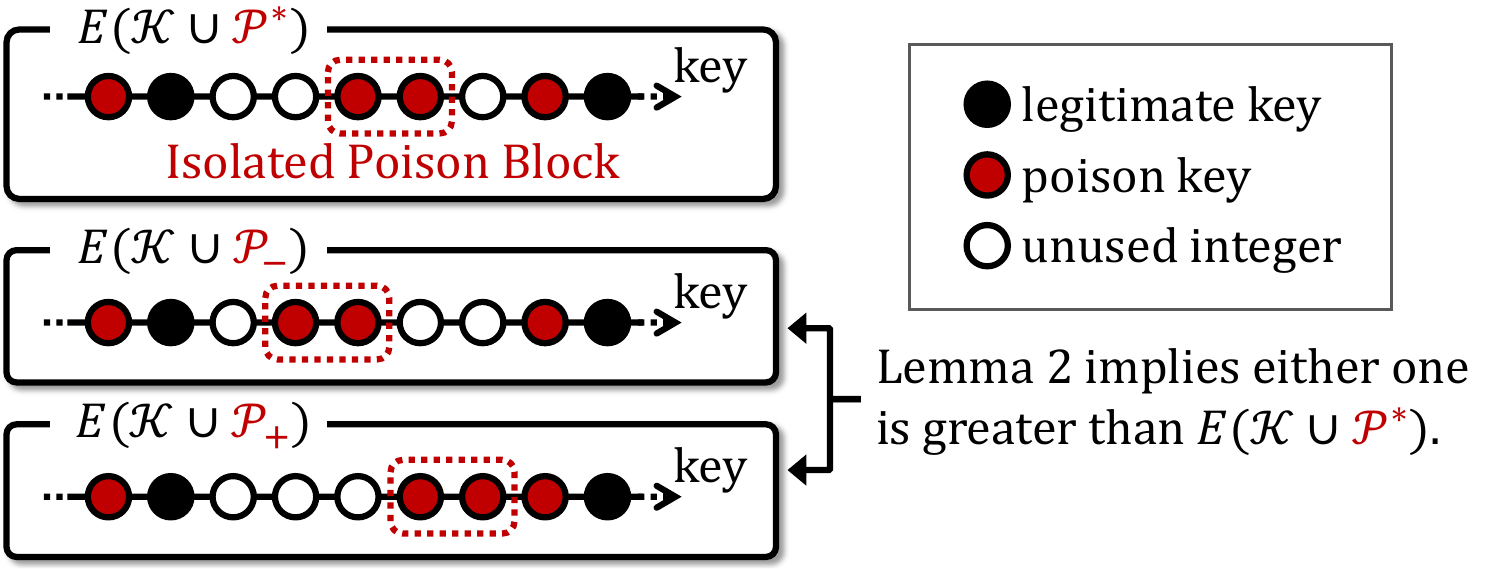}
    \caption{Proof of \cref{thm:structure_of_optimal_multi_point_attack}.
    \cref{lem:multi_point_poisoning_attack} implies that $E(\mathcal{K} \cup \mathcal{P}_{-})$ or $E(\mathcal{K} \cup \mathcal{P}_{+})$ is greater than $E(\mathcal{K} \cup \mathcal{P}^\ast)$.}
    \label{fig:multi_point_theorem_vis}
\end{figure}

Next, we establish the following theorem, which characterizes the structure of the optimal solution for multi-point attacks.
This result can be seen as an extension of \cref{thm:single_point_poisoning_attack}.
\begin{theorem_box}
\begin{theorem}
\label{thm:structure_of_optimal_multi_point_attack}
    The optimal multi-point attack consists only of poison keys that are either adjacent to legitimate keys directly or connected transitively to legitimate keys via chains of neighboring poison keys.
    Formally, the optimal multi-point attack $\mathcal{P}^\ast$ satisfies:
    \begin{equation}
        \forall p \in \mathcal{P}^\ast,~ \exists k \in \mathcal{K}, ~ \{\min(p,k)+1, \dots, \max(p,k)-1\} \subset \mathcal{P}^\ast.
    \end{equation}
\end{theorem}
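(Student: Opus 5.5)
We argue by contradiction, following the proof of \cref{thm:single_point_poisoning_attack} but moving a whole block of poisons rigidly instead of a single poison. Suppose $\mathcal{P}^\ast$ is optimal and contains a poison $p$ that violates the condition. Let $S=\{a,a+1,\dots,b\}\subset\mathcal{P}^\ast$ be the maximal run of consecutive integers in $\mathcal{P}^\ast$ containing $p$.

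By maximality, $a-1,b+1\notin\mathcal{P}^\ast$. If $a-1$ or $b+1$ were in $\mathcal{K}$, every element of $S$ (in particular $p$) would be linked to a legitimate key through a chain of poisons, which is excluded. Hence $a-1$ and $b+1$ are unused integers. Since every poison lies strictly between $k_1$ and $k_n$, both lie in $\{k_1+1,\dots,k_n-1\}\setminus\mathcal{K}$.

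Define $\mathcal{P}_{-}=(\mathcal{P}^\ast\setminus\{b\})\cup\{a-1\}$, i.e.\ $S$ shifted left by one, and $\mathcal{P}_{+}=(\mathcal{P}^\ast\setminus\{a\})\cup\{b+1\}$. Both are feasible with the same cardinality. Optimality then gives $E(\mathcal{K}\cup\mathcal{P}_{\pm})\le E(\mathcal{K}\cup\mathcal{P}^\ast)$ (cf.\ \cref{fig:multi_point_theorem_vis}).

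Now view the block as moving continuously. Let $\mathcal{X}=\mathcal{K}\cup\mathcal{P}^\ast$ and replace each $x_i$ with $i\in I_S$ (the index set of the block) by $x_i+t$, for $t$ in the open interval where the block stays strictly between its two neighbours in $\mathcal{X}$. No element crosses another, so all ranks $r_i$ are unchanged, and $t\in\{-1,0,1\}$ gives exactly $\mathcal{P}_{-},\mathcal{P}^\ast,\mathcal{P}_{+}$. The key step is a block analogue of \cref{lem:single_point_poisoning_attack}:
\begin{quote}
\emph{$\mathrm{sign}(dE/dt)$ is non-decreasing on this interval, and $dE/dt=0$ at most once.}
\end{quote}
Granting this, the same three-case argument as in the proof of \cref{thm:single_point_poisoning_attack} applies. $E(t)$ is strictly increasing, strictly decreasing, or strictly decreasing then strictly increasing, so $E(-1)>E(0)$ or $E(1)>E(0)$, a contradiction.

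To prove the block lemma, I would differentiate the closed form \cref{eq:closed_form_solution}. With $w=w^\ast$ and $b=b^\ast$,
\[
\tfrac{d}{dt}\mathrm{Cov}_\mathrm{XR}=\tfrac1m\sum_{i\in I_S}(r_i-\bar r),\qquad
\tfrac{d}{dt}\mathrm{Var}_\mathrm{X}=\tfrac2m\sum_{i\in I_S}(x_i-\bar x).
\]
Substituting these into $E=\mathrm{Var}_\mathrm{R}-\mathrm{Cov}_\mathrm{XR}^2/\mathrm{Var}_\mathrm{X}$ should give, up to sign convention,
\[
\frac{dE}{dt}=\frac{2w}{m}\sum_{i\in I_S}\bigl(w x_i+b-r_i\bigr).
\]
Since the ranks increase with the keys, $w>0$. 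Hence $\mathrm{sign}(dE/dt)$ is the sign of the total residual $R_S(t)$ of the block under the fit to the full data.

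I would relate $R_S(t)$ to the fit on the remaining points $\mathcal{X}\setminus S$, which does not depend on $t$. That fit has slope $w'>0$, because its keys and ranks are both increasing, exactly as in the proof sketch of \cref{lem:single_point_poisoning_attack}. Its intercept $b'$ is also fixed. The leave-group-out identity for least squares expresses the block residual vector as $(I-H_{SS})$ applied to $(w'x_i+b'-r_i)_{i\in I_S}$, where $H_{SS}$ is the block of the hat matrix. The inner vector is affine in $t$ with every entry having slope $w'>0$.

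The main obstacle is that $H_{SS}$ itself depends on $t$, so it is not immediate that $\mathbf{1}^\top(I-H_{SS})(\cdot)$ has a monotone sign. I would try to exploit the rigidity of the block. Because the offsets $x_i-x_a$ are fixed, the within-block geometry is constant, and I expect $\mathbf{1}^\top(I-H_{SS})$ to equal a positive scalar factor times $\mathbf{1}^\top$ plus a term orthogonal to constants. That would make $\mathrm{sign}\,R_S(t)$ equal to the sign of a strictly increasing affine function of $t$. If this algebra is messy, the fallback is to write $R_S(t)$ explicitly as a rational function of $t$, show its denominator ($m$ times $\mathrm{Var}_\mathrm{X}$) is positive, and show its numerator is affine (or at least monotone) in $t$ with positive leading coefficient given by a covariance of $\mathcal{X}\setminus S$. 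This is the same mechanism that yields positivity in \cref{lem:single_point_poisoning_attack}.
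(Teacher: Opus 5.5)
Your proposal is correct and is essentially the paper's proof. The paper also shifts an isolated block of poisons by $\pm1$ and proves the block analogue of \cref{lem:single_point_poisoning_attack} (its \cref{lem:multi_point_poisoning_attack}); your block residual sum $R_S(t)$ equals $\frac{m}{2\mathrm{Var}_\mathrm{X}}\,g(t)$ for the paper's $g$, so your fallback computation is the paper's argument and the hat-matrix route is not needed. One correction to that fallback: with $s=|S|$, $\Delta_\mathrm{X}=\bar x_S-\bar x$ and $\Delta_\mathrm{R}=\bar r_S-\bar r$, you have $R_S=s\,(\mathrm{Cov}_\mathrm{XR}\Delta_\mathrm{X}-\mathrm{Var}_\mathrm{X}\Delta_\mathrm{R})/\mathrm{Var}_\mathrm{X}$, and the bracket is indeed affine in $t$, but its slope is $\frac{(m-s)^2}{m^2}\mathrm{Cov}_{\mathcal{X}\setminus S}+\frac{s(m-s)}{m^2}\mathrm{Cov}_{S}$ (the key--rank covariances of $\mathcal{X}\setminus S$ and of the block). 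It thus also contains the nonnegative, $t$-independent within-block term, not only the covariance of $\mathcal{X}\setminus S$; positivity still follows since $\mathrm{Cov}_{\mathcal{X}\setminus S}>0$.
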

\end{theorem_box}
\begin{proof}[\Rthree{Proof sketch of \cref{thm:structure_of_optimal_multi_point_attack}}]
\Rthree{To facilitate comprehension of the proof structure, an illustrative figure is presented in \cref{fig:multi_point_theorem_vis}.
First, we prove an extension of \cref{lem:single_point_poisoning_attack}, denoted \cref{lem:multi_point_poisoning_attack} (in \cref{app:proof_of_thm_structure_of_optimal_multi_point_attack}), which implies that if there exists an isolated block of consecutive poison keys (an \emph{Isolated Poison Block}), then moving that block either left or right can strictly increase the loss.
Applying the same argument as in \cref{thm:single_point_poisoning_attack}, but with an Isolated Poison Block instead of a single poison key, yields \cref{thm:structure_of_optimal_multi_point_attack}.}
\end{proof}
\Rthree{We defer the full, formal proof to \cref{app:proof_of_thm_structure_of_optimal_multi_point_attack}.}

An immediate consequence of this theorem is that we can find an optimal solution by examining at most $\binom{2n-2+\lambda}{\lambda}$ candidate multi-point attacks.
This follows because the number of candidate optimal multi-point attacks is bounded by the number of ways to distribute the $\lambda$ units of poison among $2n-1$ groups (corresponding to poison placed to the right of $k_1$, to the left of $k_2$, to the right of $k_2$, $\dots$, to the left of $k_n$, and the unused portion of the budget).
Since the MSE for each poisoning configuration can be computed in $\mathcal{O}(n+\lambda)$ time, the overall running time to find the optimum is $\mathcal{O}\left((n+\lambda)\binom{2n-2+\lambda}{\lambda}\right)$.
Without our structural result, one would have to examine all combinations of integers in the domain that are not occupied by legitimate keys.
The number of such combinations is $\mathcal{O}\left(\binom{k_n-k_1}{\lambda}\right)$, which is computationally infeasible in practice because $k_n-k_1$ is typically at least $10^3$ and can easily exceed $10^9$. 
Our theorem drastically shrinks the search space, making it realistic to compute the optimal multi-point attack for moderately small values of $n$ and $\lambda$.

Next, we highlight two potential misconceptions around the structure of optimal multi-point attacks.
The first potential misconception is that \cref{thm:single_point_poisoning_attack} trivially implies \cref{thm:structure_of_optimal_multi_point_attack}.
For the sake of an example, let the legitimate key set be $\mathcal{K} = \{1, 5\}$.
While \cref{thm:structure_of_optimal_multi_point_attack} allows us to rigorously conclude that $\mathcal{P} = \{3,4\}$ is not an optimal solution, the statement of \cref{thm:single_point_poisoning_attack} alone does not allow us to rule out the possibility that $\mathcal{P} = \{3,4\}$ could be the optimal two-point attack.
Although $\mathcal{P} = \{3\}$ or $\mathcal{P} = \{4\}$ are not optimal in the single-point case, \cref{thm:single_point_poisoning_attack} does not provide any guarantee about whether $\mathcal{P} = \{3,4\}$ could become optimal in the multi-point case.
Only with \cref{thm:structure_of_optimal_multi_point_attack} can we rigorously conclude that $\mathcal{P} = \{3,4\}$ is not the optimal solution.

The second potential misconception is that \cref{thm:structure_of_optimal_multi_point_attack} serves as a proof of the optimality of the greedy approach.
This is also a misconception, which we demonstrated in \cref{fig:multi_point_poisoning_attack}, where the greedy poisoning method is not optimal.
\cref{thm:structure_of_optimal_multi_point_attack} merely shows that the solution obtained by the greedy poisoning method \textbf{can be} an optimal solution in the sense that all poison keys are directly or indirectly adjacent to some legitimate key.
It does not claim that the greedy poisoning method \textbf{is guaranteed to be} optimal.

\subsection{Upper Bound on Impact of Optimal Multi-Point Poisoning Attacks}
\label{sec:upper_bound_multi_point_poisoning_attack}

Here, we propose methods to upper-bound the impact of multi-point attacks.
Our methods run in $\mathcal{O}(n+\lambda)$ or $\mathcal{O}((n+\lambda)\log(n+\lambda))$ time, making them highly efficient in practice.
\Rone{This upper bound is important for both attackers and defenders: it quantifies the maximum possible improvement over existing attacks and provides a worst-case performance guarantee for linear regression (see \cref{sec:discussion} for further discussion).}
We begin by relaxing the poisoning problem, and then describe an efficient method for solving it.

\noindent \textbf{Deriving the Upper-Bounding Problem.}
To upper-bound the loss of multi-point attacks, we relax \cref{def:poisoning_linear_regression_on_cdfs} by allowing duplicate poisons and permitting poisons to lie in $\mathcal{K}$.
Formally, we arrive at the following problem:
\begin{definition_box}
\begin{definition}[\textbf{Relaxed Poisoning Problem}]
\label{def:relaxed_poisoning_problem}
Let $\mathcal{K} = \{k_1, k_2, \dots, k_n\} \subset \mathbb{N}$ be $n$ distinct legitimate keys arranged in increasing order, i.e., $k_1 < k_2 < \dots < k_n$.
Let $\lambda \in \mathbb{N}$ be the maximum number of poison points.
The Relaxed Poisoning Problem is to determine the multiset $\mathcal{P}^\ast$ of integers satisfying $|\mathcal{P}^\ast| \leq \lambda$ and $\mathcal{P}^\ast\subset\{k_1,k_1+1,\dots,k_n\}$ that maximizes $E(\mathcal{K} \uplus \mathcal{P}^\ast)$, where $\uplus$ denotes multiset addition.
Formally,
\begin{equation}
    \mathcal{P}^\ast \coloneq \argmax_{\mathcal{P} \text{ s.t. } |\mathcal{P}| \leq \lambda, \mathcal{P}\subset\{k_1,k_1+1,\dots,k_n\}} E(\mathcal{K} \uplus \mathcal{P}),
\end{equation}
\end{definition}
\end{definition_box}
The solution to \cref{def:relaxed_poisoning_problem} provides an upper bound on the loss of the original poisoning problem (\cref{def:poisoning_linear_regression_on_cdfs}), because \cref{def:relaxed_poisoning_problem} relaxes the constraints on the poison set $\mathcal{P}$.
\Rtwo{Note that unlike \cref{def:poisoning_linear_regression_on_cdfs}, this relaxed formulation remains feasible even when $\mathcal{K}=\{k_1,k_1+1,\dots,k_n\}$, because it allows $\mathcal{P}$ to be a multiset and to include values from $\mathcal{K}$.}

By applying an argument similar to that in \cref{thm:structure_of_optimal_multi_point_attack}, we can establish the following theorem regarding the structure of the optimal solution $\mathcal{P}^\ast$ to the Relaxed Poisoning Problem.
\begin{theorem_box}
\begin{theorem}
\label{thm:relaxed_poisoning_problem}
The optimal solution $\mathcal{P}^\ast$ to the Relaxed Poisoning Problem is a multiset over $\mathcal{K}$, i.e., $\mathrm{Supp}(\mathcal{P}^\ast) \subset \mathcal{K}$, where $\mathrm{Supp}$ denotes the support of the multiset.
\end{theorem}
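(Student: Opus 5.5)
Our plan is to mirror the proof of \cref{thm:structure_of_optimal_multi_point_attack}, but to use the relaxation to simplify the moves. The relaxation lets poisons coincide with each other and with legitimate keys. So instead of shifting blocks of consecutive integers, we can shift a whole group of coinciding poisons sitting at one integer value.

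Suppose for contradiction that an optimal multiset $\mathcal{P}^\ast$ has some value $v \in \mathrm{Supp}(\mathcal{P}^\ast)$ with $v \notin \mathcal{K}$. Let $c \ge 1$ be its multiplicity, and let $i$ be the index with $k_{i-1} < v < k_i$. Consider all points of $\mathcal{K} \uplus \mathcal{P}^\ast$ other than these $c$ copies; together they occupy values in $\{k_1,\dots,k_n\}$. Let $a$ be the largest such value below $v$ and $b$ the smallest such value above $v$, so $k_{i-1} \le a < v < b \le k_i$.

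Now treat the $c$ copies as a single movable block placed at a common real position $t \in (a,b)$. As $t$ varies in this interval, the sorted order of all points is unchanged. Hence the ranks are fixed, and $E$ is a smooth function $f(t)$ of $t$ only.

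The key step is a version of \cref{lem:multi_point_poisoning_attack} (equivalently, the block version of \cref{lem:single_point_poisoning_attack}) for a block of $c$ points at one common location. The claim is that $\mathrm{sign}(f'(t))$ is non-decreasing on $(a,b)$ and $f'$ vanishes at most once there.

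We would prove this as in the sketch of \cref{lem:single_point_poisoning_attack}. Differentiate $E = \mathrm{Var}_\mathrm{R} - \mathrm{Cov}_\mathrm{XR}^2/\mathrm{Var}_\mathrm{X}$ with respect to $t$; note that $\mathrm{Var}_\mathrm{R}$ does not depend on $t$. The aim is to show that, up to a positive factor, $f'(t)$ equals a term linear in $t$ whose slope is a covariance between the remaining positions and the remaining ranks. That covariance is positive because both sequences are monotone.

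If the earlier lemma is already stated for a block of (possibly non-distinct) points shifted together, we simply invoke it. Otherwise the main obstacle is this computation. The $c$ copies occupy ranks $j+1,\dots,j+c$ rather than a single rank, so we must check that the derivative still factors into a positive quantity times the complementary covariance. This uses that shifting the whole block by $\delta$ changes $\bar{x}$, $\mathrm{Var}_\mathrm{X}$ and $\mathrm{Cov}_\mathrm{XR}$ in a way that depends only on $t$ and the block's rank sum.

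Given the sign property, $f$ on $(a,b)$ is strictly increasing, strictly decreasing, or decreasing then increasing. Since $v$ is an integer with $a < v < b$, both $v-1$ and $v+1$ lie in $[a,b]$. Moving the block to $v-1$ or $v+1$ is feasible in the relaxed problem: duplicates and values in $\mathcal{K}$ are allowed, and the endpoints $a,b$ lie in $[k_1,k_n]$. By continuity of $f$ at the endpoints (ties in position do not change $E$), one of these two moves gives a loss at least $f(v)$, and strictly more in every case except a flat function, which the lemma excludes.

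So we get $E(\mathcal{K} \uplus \mathcal{P}_{\pm}) > E(\mathcal{K} \uplus \mathcal{P}^\ast)$, contradicting optimality. Hence every support value of $\mathcal{P}^\ast$ lies in $\mathcal{K}$.

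If uniqueness of the optimum is in doubt, the argument shows in the same way that every optimal $\mathcal{P}^\ast$ has this property.
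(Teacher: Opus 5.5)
Your proof is correct and follows the paper's route: the paper only says the argument is analogous to \cref{thm:structure_of_optimal_multi_point_attack}, namely shift the off-$\mathcal{K}$ poisons as a block and use \cref{lem:multi_point_poisoning_attack} to get a strict gain at one of the two neighbouring integers. Your stack of all copies at a single non-key value $v$, allowed to land on a neighbouring point because ties are permitted, is exactly the right relaxed analogue of an Isolated Poison Block.

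Two small precisions. First, the other points lie in the integer range $\{k_1,k_1+1,\dots,k_n\}$, not in $\mathcal{K}$, since other poisons may also be off $\mathcal{K}$; only the integrality of $a,b$ is used, so nothing breaks. Second, the complementary covariance is positive not merely because both sequences are monotone but because the remaining positions are non-constant: they contain $k_1<k_n$. This matters here because the stack itself contributes zero covariance.
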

\end{theorem_box}
\begin{proof}[\Rthree{Proof of \cref{thm:relaxed_poisoning_problem}}]
\Rthree{The proof is analogous to that of \cref{thm:structure_of_optimal_multi_point_attack}:
if there exists poison points outside of $\mathcal{K}$, we can increase the loss by shifting them to the keys within $\mathcal{K}$.}
\end{proof}
Thus, letting $\mathcal{Q}_{\mathcal{K}}(\bm{d})$ be the multiset containing $d_i$ copies of $k_i$ for each $i$ (with $\bm{d} \in \mathbb{Z}_{\geq 0}^n$), the relaxed problem reduces to finding
\begin{equation}
    \bm{d}^\ast \coloneq \argmax_{\bm{d} \text{ s.t. } \sum_{i=1}^{n} d_i \leq \lambda} E(\mathcal{K} \uplus \mathcal{Q}_\mathcal{K}(\bm{d})).
\end{equation}

The following theorem states that in the Relaxed Poisoning Problem, it is always optimal to fully exhaust the given budget $\lambda$:
\begin{theorem_box}
\begin{theorem}
\label{thm:d_saturation}
The optimal solution $\bm{d}^\ast$ to the Relaxed Poisoning Problem satisfies $\sum_{i=1}^{n} d_i^\ast = \lambda$.
% Therefore,
% \begin{equation}
%     \max_{\bm{d} \text{ s.t. } \sum_{i=1}^{n} d_i \leq \lambda} E(\mathcal{K} \uplus \mathcal{Q}_\mathcal{K}(\bm{d})) = 
%     \max_{\bm{d} \text{ s.t. } \sum_{i=1}^{n} d_i = \lambda} E(\mathcal{K} \uplus \mathcal{Q}_\mathcal{K}(\bm{d})).
% \end{equation}
\end{theorem}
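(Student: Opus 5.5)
The plan is to show that any budget-deficient solution can be strictly improved. Suppose $\bm{d}$ is optimal with $\sum_i d_i < \lambda$, and set $\mathcal{X} = \mathcal{K} \uplus \mathcal{Q}_\mathcal{K}(\bm{d})$ with $m = |\mathcal{X}|$. It suffices to exhibit some index $j$ with $E(\mathcal{X} \uplus \{k_j\}) > E(\mathcal{X})$. Then $\bm{d} + \bm{e}_j$ is feasible and strictly better, a contradiction. By \cref{thm:relaxed_poisoning_problem} we may assume that the extra poison also lies on a legitimate key, so the search for $j$ stays inside $\mathcal{K}$.

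\textbf{Step 1: a rank-one update formula.} Let $(w,b)$ be the least-squares fit of $\mathcal{X}$, with residuals $e_i = w x_i + b - i$. These satisfy $\sum_i e_i = 0$ and $\sum_i e_i x_i = 0$, and $mE(\mathcal{X}) = \sum_i e_i^2$.
\begin{itemize}
\item If the new point is placed at the top value $x_m = k_n$, it receives rank $m+1$ and no other rank changes. The standard leave-one-in formula then gives $(m+1)E(\mathcal{X}\uplus\{k_n\}) = \sum_i e_i^2 + (e_m-1)^2/(1+h_m)$, where $h_m = \tfrac1m + (x_m-\bar{x})^2/(m\,\mathrm{Var}_\mathrm{X})$.
\item If the new point is placed at the bottom value $x_1 = k_1$, every old rank shifts by $+1$. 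This is equivalent to adding a point at $x_1$ with rank $0$, so the same formula applies with $(e_1+1)^2/(1+h_1)$.
\item For an interior $k_j$, the ranks above $j$ shift by one. Here I would use the closed form \cref{eq:closed_form_solution} and express the change in $\mathrm{Var}_\mathrm{R}$, $\mathrm{Cov}_\mathrm{XR}$ and $\mathrm{Var}_\mathrm{X}$ explicitly in terms of $j$.
\end{itemize}

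\textbf{Step 2: the strict-improvement condition.} The condition becomes $(e_{\rm new})^2/(1+h) > E(\mathcal{X})$ for the endpoint insertions, or its analogue for an interior insertion. Rather than prove this for a specific $j$, I would average the increment $\Delta_j = E(\mathcal{X}\uplus\{k_j\}) - E(\mathcal{X})$ over all positions $j$, weighted suitably. One candidate weighting puts weight on the keys proportional to their multiplicities in $\mathcal{X}$, which is the same as duplicating a uniformly random element of $\mathcal{X}$. Under this weighting the averaged change in $\mathrm{Var}_\mathrm{X}$ and $\mathrm{Cov}_\mathrm{XR}$ should simplify using $\sum e_i = \sum e_i x_i = 0$. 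The goal is to show that the weighted average of $\Delta_j$ is strictly positive, which forces some $\Delta_j > 0$.

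The intuition behind the sign is that duplicating a key leaves the $x$-distribution unchanged in shape but creates a vertical jump in the CDF. That jump is an irreducible error which a line cannot fit. Concretely, the contribution of $\mathrm{Var}_\mathrm{R}$ grows by order $m/12$ per unit, and the duplicate pair alone forces extra error.

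\textbf{Main obstacle.} The hard part is Step 2: obtaining a clean strictly positive lower bound on some $\Delta_j$, given that MSE divides by $m+1$ rather than $m$. If averaging does not close cleanly, my fallback is to use the duplicate-pair structure directly. The two copies of $k_j$ share an $x$-value but have consecutive ranks, so their residuals differ by exactly $1$ under any line. This contributes at least $\tfrac12$ to the sum of squared errors that no refit can remove. I would combine this with the endpoint formulas from Step 1 to show that either the bottom or the top insertion satisfies $(e_{\rm new})^2/(1+h) > E(\mathcal{X})$: the residuals $e_1$ and $e_m$ have opposite-favourable signs, so that one of $(e_1+1)^2$ and $(e_m-1)^2$ is large.
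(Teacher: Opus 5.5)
\textbf{There is a genuine gap.} Your reduction is correct and matches the paper: it suffices to show that from any $\bm d$ some single insertion $\bm d+\bm e_j$ strictly increases $E$, and the paper also proves this by averaging the increments $\Delta_j$. Your endpoint formulas in Step~1 are also correct. But Step~2 is the entire proof, and you leave it as a goal.

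With weights proportional to the multiplicities $c_j$ alone, the average does not simplify. Write $N=|\mathcal X|$, and let $V_{\mathrm{K}j},C_j$ be the key variance and key--rank covariance after inserting $k_j$. Then $\Delta_j=\frac{2N+1}{12}-C_j^2/V_{\mathrm{K}j}+C^2/V_{\mathrm K}$, and the denominator $V_{\mathrm{K}j}=\frac{N}{N+1}V_{\mathrm K}+\frac{N}{(N+1)^2}(k_j-\bar{k'})^2$ varies with $j$. The natural way to remove it, bounding $V_{\mathrm{K}j}\ge\frac{N}{N+1}V_{\mathrm K}$, loses too much: combined with the estimates below it gives a negative lower bound.

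The missing idea is to weight by $c_jV_{\mathrm{K}j}$ instead, which clears every denominator. The paper's argument then runs as follows:
\begin{enumerate}
\item It treats all insertion positions uniformly via the Gini-type identity $C_j=\frac{N}{N+1}C+\frac{S_j}{2(N+1)}$ with $S_j=\sum_l c_l|k_j-k_l|$. This identity absorbs the rank shift of an interior insertion, so there is no endpoint/interior case split.
\item It uses $\sum_j c_jS_j=4NC$ and $\sum_j c_jV_{\mathrm{K}j}=\frac{N^2(N+2)}{(N+1)^2}V_{\mathrm K}$.
\item It proves the genuinely nontrivial estimate $\sum_j c_jS_j^2\le\frac53N^3V_{\mathrm K}$. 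Split $S_j=T_j+U_j$ into left and right parts, note $T_j-U_j=N(k_j-\bar{k'})$, and bound $\sum_j c_jT_jU_j\le\frac16N^3V_{\mathrm K}$ by AM--GM plus a symmetrization over triples.
\item It absorbs the leftover $-\frac{2N^2C^2}{(N+1)^2}$ via Cauchy--Schwarz, $C^2\le V_{\mathrm R}V_{\mathrm K}=\frac{N^2-1}{12}V_{\mathrm K}$, leaving $\sum_j c_jV_{\mathrm{K}j}\Delta_j\ge\frac{N^2}{3(N+1)^2}V_{\mathrm K}>0$.
\end{enumerate}

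\textbf{Your fallback cannot work}, because it is false that one of the two endpoint insertions always helps. Take $\mathcal K=\{1,2,3\}$ and $\bm d=(0,3,0)$ (feasible for $\lambda\ge4$), so the keys are $(1,2,2,2,2,3)$. The least-squares line $r=2.5x-1.5$ passes exactly through both end points, so $e_1=e_m=0$, the residual sum of squares is $5$, and $E=5/6$. Inserting $k_1$ (or, symmetrically, $k_3$) adds $1/(1+h)=1/(5/3)=3/5$ to it, giving $E=5.6/7=0.8<5/6$. Only the interior insertion helps, giving $E=10/7$.

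More generally, with a central cluster of size $M$ one has $E\sim M^2/12$, while $(e_{\mathrm{new}})^2/(1+h)<2/3$ at both endpoints. So neither the endpoint formulas nor a fixed lower bound such as the $\frac12$ from a duplicate pair can beat $E$. Interior positions must enter the argument, and that is exactly what the $c_jV_{\mathrm{K}j}$-weighted average over all $j$ provides.
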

\end{theorem_box}
\begin{proof}[\Rthree{Proof Sketch of \cref{thm:d_saturation}}]
    \Rthree{We first prove \cref{lem:d_saturation_lemma} (in \cref{app:proof_of_upper_bound}), which states that in the relaxed setting, there exists some $i\in\{1,2,\dots,n\}$ such that adding a poison at $x_i$ increases the loss.
    This implies that using the full poisoning budget is optimal.}
\end{proof}
\Rthree{The full proof is provided in \cref{app:proof_of_upper_bound}.}
% This result implies that in the Relaxed Poisoning Problem, it is always optimal to fully exhaust the given budget $\lambda$.
% By narrowing the search space in this way, we can reduce the computational cost of the algorithm described later.

Next, we derive the following inequality to further upper-bound the solution of the Relaxed Poisoning Problem:
\begin{align}
&~ \max_{\bm{d} \text{ s.t. } \sum_{i=1}^{n} d_i = \lambda} E(\mathcal{K} \uplus \mathcal{Q}_\mathcal{K}(\bm{d})) \\
=&~ \max_{\bm{d} \text{ s.t. } \sum_{i=1}^{n} d_i = \lambda} \min_{w,b} \mathcal{L}(\mathcal{K} \uplus \mathcal{Q}_\mathcal{K}(\bm{d}); w, b) \\
\label{eq:upper_bound_relaxed_poisoning_problem}
\leq&~ \min_{w} \max_{\bm{d} \text{ s.t. } \sum_{i=1}^{n} d_i = \lambda} \min_{b} \mathcal{L}(\mathcal{K} \uplus \mathcal{Q}_\mathcal{K}(\bm{d}); w, b).
\end{align}
The equality follows from the definition of $E(\mathcal{K} \uplus \mathcal{Q}_\mathcal{K}(\bm{d}))$ in \cref{def:linear_regression_on_cdfs}.
The inequality follows from the standard max-min inequality: $\max_{x} \min_{y} f(x, y) \leq \min_{y} \max_{x} f(x, y)$, for any function $f: \mathcal{X} \times \mathcal{Y} \to \mathbb{R}$. 
By computing the value of \cref{eq:upper_bound_relaxed_poisoning_problem}, we obtain an upper bound on the loss achievable by multi-point attacks.

\noindent \textbf{Computing the Upper Bound.}
To determine the value of \cref{eq:upper_bound_relaxed_poisoning_problem}, we make use of the following two theorems.
\begin{theorem_box}
\begin{theorem}
\label{thm:convexity_of_min_b_l_fixed_d}
For fixed $\bm{d}$,  $\min_{b} \mathcal{L}(\mathcal{K} \uplus \mathcal{Q}_\mathcal{K}(\bm{d}); w, b)$ is a convex quadratic function of $w$.
% \begin{equation}
%     \min_{b} \mathcal{L}(\mathcal{K} \uplus \mathcal{Q}_\mathcal{K}(\bm{d}); w, b)
% \end{equation}
% is a convex quadratic function of $w$.
\end{theorem}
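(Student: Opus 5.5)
For fixed $\bm{d}$, write $\mathcal{X} = \mathcal{K} \uplus \mathcal{Q}_\mathcal{K}(\bm{d})$ with sorted elements $x_1 \le \dots \le x_m$, $m = n + \sum_i d_i$, and ranks $r_i = i$. These are all fixed numbers, since the ranks are determined by $\bm{d}$ and do not depend on $(w,b)$. So $\mathcal{L}(\mathcal{X};w,b)$ is an ordinary least-squares objective in $(w,b)$. The plan is to minimize out $b$ in closed form and then read off a quadratic in $w$.

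For fixed $w$, the map $b \mapsto \frac{1}{m}\sum_i (w x_i + b - r_i)^2$ is a strictly convex quadratic in $b$ with leading coefficient $1$. Its unique minimizer follows from the first-order condition: $b^\ast(w) = \bar{r} - w\bar{x}$, with $\bar{x},\bar{r}$ as in \cref{eq:cov_and_var_and_m}. Substituting back, the residuals become $w(x_i - \bar{x}) - (r_i - \bar{r})$. Expanding the square and using the definitions in \cref{eq:cov_and_var_and_m} gives
\begin{equation}
\min_b \mathcal{L}(\mathcal{X};w,b) = \mathrm{Var}_\mathrm{X}\, w^2 - 2\,\mathrm{Cov}_\mathrm{XR}\, w + \mathrm{Var}_\mathrm{R}.
\end{equation}
This is a quadratic polynomial in $w$ whose coefficients depend only on $\bm{d}$.

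Convexity requires only that the leading coefficient be nonnegative. Indeed $\mathrm{Var}_\mathrm{X} \ge 0$ because it is an average of squares, and $\mathrm{Var}_\mathrm{X} > 0$ because $\mathcal{X}$ contains the two distinct values $k_1 < k_n$ (the standing assumption $x_1 \ne x_m$ of \cref{def:linear_regression_on_cdfs}). So the function is in fact strictly convex. As a consistency check, minimizing over $w$ recovers $w^\ast$ and $E(\mathcal{X})$ from \cref{eq:closed_form_solution}.

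There is no real obstacle here. The only point needing care is that the ranks $r_i$ are constants once $\bm{d}$ is fixed, so the rank-shifting effect does not interact with the optimization over $(w,b)$. The rest is routine algebra.
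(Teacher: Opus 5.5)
Your proof is correct and follows essentially the same route as the paper: eliminate $b$ via the first-order condition $b^\ast = \bar{r} - w\bar{x}$, substitute to obtain $w^2\,\mathrm{Var}_\mathrm{X} - 2w\,\mathrm{Cov}_\mathrm{XR} + \mathrm{Var}_\mathrm{R}$, and note that the leading coefficient $\mathrm{Var}_\mathrm{X}$ is positive. Your use of $m = n + \sum_i d_i$ instead of the paper's $n+\lambda$ is a harmless and slightly more general variant, since it does not rely on the budget being fully used.
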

\end{theorem_box}
\begin{proof}[\Rthree{Proof Sketch of \cref{thm:convexity_of_min_b_l_fixed_d}}]
    \Rthree{For fixed $w$, the optimal $b^\ast$ can be solved in closed form, and substituting it shows the loss is a quadratic function in $w$ with positive quadratic coefficient.}
\end{proof}
\begin{theorem_box}
\begin{theorem}
\label{thm:optimal_c_for_fixed_w}
    For fixed $w > 0$, the choice of $\bm{d}$ (subject to $\sum_{i=1}^{n} d_i = \lambda$) that maximizes $\min_{b} \mathcal{L}(\mathcal{K} \uplus \mathcal{Q}_\mathcal{K}(\bm{d}); w, b)$ is characterized as follows:
    \begin{itemize}[leftmargin=1.5em]
        \item If $w (k_n - k_1) < n + \lambda$, all poison points are concentrated on a single key; that is, $\exists i \in \{1, 2, \dots, n\}$ such that $d_i = \lambda$.
        \item Otherwise, all poison points are concentrated at the endpoints $k_1$ and $k_n$; that is, $d_1 + d_n = \lambda$.
    \end{itemize}
\end{theorem}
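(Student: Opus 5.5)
The plan is to work with the \emph{residual} representation of the inner problem. For fixed $w$, minimizing $\mathcal{L}$ over $b$ gives $b=\overline{r}-w\bar{x}$. Hence $\min_b \mathcal{L}(\mathcal{K}\uplus\mathcal{Q}_\mathcal{K}(\bm{d});w,b)$ equals the (population) variance of the residuals $u_j \coloneq w x_j - r_j$, $j=1,\dots,m$, where $m=n+\lambda$ is fixed because $\sum_i d_i=\lambda$.

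Writing $D_{i}=\sum_{l\le i} d_l$, the $1+d_i$ copies of $k_i$ occupy consecutive ranks. Their residuals are therefore
\[
u_{i,t}=w k_i - i - D_{i-1} - t,\qquad t=0,\dots,d_i .
\]
So the residual sequence is a ``staircase'': it drops by $1$ inside a block of equal keys and changes by $w(k_{i+1}-k_i)-1$ between blocks. The problem becomes maximizing the variance of this explicit multiset over integer $\bm{d}$ on the simplex $\sum_i d_i=\lambda$.

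The first step is a pairwise exchange argument. Fix all coordinates except $d_i,d_j$ ($i<j$) with $d_i+d_j=c$, and let $F(s)$ be the variance when $d_i=s$. Moving one poison from $k_j$ to $k_i$ inserts the value $wk_i-(\text{rank})$, deletes a value at the end of block $j$, and shifts every residual strictly between the two blocks down by exactly $1$. I would compute $\Delta F(s)=F(s+1)-F(s)$ explicitly in terms of the sum and sum of squares of the shifted middle residuals and of the two block endpoints, and then compute the second difference $\Delta^2F(s)$. The goal is to show $\Delta^2F\ge 0$, so that $F$ is discretely convex and its maximum over $s\in\{0,\dots,c\}$ is attained at $s=0$ or $s=c$.

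Iterating this exchange over pairs empties all but one key. That would give a single-key concentration in all cases, which contradicts the second bullet, so this is where the threshold must enter. I expect convexity to hold cleanly only for pairs not of the form $(1,n)$, or only in the regime $w(k_n-k_1)<m$. I would therefore split the argument as follows.

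(a) For an \emph{interior} key $k_i$ with $1<i<n$, I would show via the sign of $\Delta F$, in the spirit of \cref{lem:single_point_poisoning_attack}, that moving its poisons entirely to $k_1$, to $k_n$, or to another single key never decreases the variance. This reduces the problem to supports of the form $\{k_i\}$ or $\{k_1,k_n\}$.

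(b) On the endpoint family $d_1=s$, $d_n=\lambda-s$, I would compute $F(s)$ directly. All middle residuals shift by $-(s)$ and the two end blocks are arithmetic progressions, so $F$ is an explicit low-degree polynomial in $s$. The sign of its curvature, or the location of its interior critical point, should be governed by the total residual drift $u_m-u_1$. For the concentrated configurations this drift is $w(k_n-k_1)-(m-1)$, which is where the comparison $w(k_n-k_1)$ versus $n+\lambda$ appears. When $w(k_n-k_1)<n+\lambda$, the staircase ends at or below where it starts, so stacking all poisons on one key maximizes the spread and an interior split is beaten by a single key. When $w(k_n-k_1)\ge n+\lambda$, the two end blocks extend the residual range in opposite directions, so the endpoint split dominates any single-key placement.

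The main obstacle is step (b) together with the exact threshold: getting the explicit second difference of the variance under a block-transfer, with its cubic terms $\sum_t t^2$, into a form whose sign is visibly controlled by $w(k_n-k_1)-(n+\lambda)$. I would first do the computation for $\lambda=1,2$ to locate the correct sign condition, possibly off by one between $m$ and $m-1$, and then generalize. I would use the centered representation $\mathrm{Var}=\frac1{2m^2}\sum_{j,l}(u_j-u_l)^2$, which turns each transfer into a change in pairwise gaps and avoids tracking the mean.
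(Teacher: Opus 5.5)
\paragraph{What works.} Your plan for the first bullet is sound and is the paper's argument in different notation. Along a transfer line $d_i=s$, $d_j=c-s$ with $i<j$, the residual variance $F$ is \emph{exactly} quadratic in $s$. The cubic terms you expect from $\sum_t t^2$ cancel, because the ranks are always $1,\dots,m$ and so $\sum_r r^2$ never changes. The second difference is therefore constant:
\[
F(s+1)-2F(s)+F(s-1)=\frac{2w(k_j-k_i)}{m}\left(1-\frac{w(k_j-k_i)}{m}\right).
\]
This is $1/m$ times the paper's $\Delta\mathrm{RSS}_{i\to j}+\Delta\mathrm{RSS}_{j\to i}$. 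When $w(k_n-k_1)<m$ it is positive for every pair, so no maximizer has two poisoned keys. The same formula corrects your guess about where convexity breaks. It is not only the pair $(1,n)$ that loses convexity: every pair with $w(k_j-k_i)\ge m$ does, including pairs of interior keys. So nothing supports moving an interior key's poisons wholesale to an endpoint, as step (a) requires. In the second regime, the formula yields only that any two poisoned keys of a maximizer satisfy $w(k_j-k_i)\ge m$.

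\paragraph{The gap: steps (a)--(b) target a false statement.} Take $\mathcal{K}=\{1,4,5,6,7,8,9,11\}$, $\lambda=2$ and $w=1$. Then $w(k_n-k_1)=10=n+\lambda$, so the second bullet applies. Each of the three choices with $d_1+d_8=2$ gives $\min_b\mathcal{L}=0.56$. Putting both poisons on the interior key $7$ instead gives residuals $wx_j-r_j=(0,2,2,2,2,1,0,0,0,1)$ and $\min_b\mathcal{L}=0.8$. This single-interior-key placement keeps beating every endpoint configuration for all $w\in[1,1.04]$. Hence the endpoint split does not dominate single-key placements, and step (b) cannot be completed.

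\paragraph{The paper's Case~2 has the same problem.} In its identity for $\frac{\Delta\mathrm{RSS}_{i\to 1}}{w(k_i-k_1)}+\frac{\Delta\mathrm{RSS}_{i\to n}}{w(k_n-k_i)}$, the bracket $\frac{R_i}{k_i-k_1}-\frac{R_n-R_i}{k_n-k_i}$ should carry a factor $2$. At the configuration above, the left side equals $-1.5$, so both single moves to an endpoint lower the loss; the stated right side evaluates to $4.75$. With the factor restored, the bound $\ge -(n+\lambda-1)$ on the bracket no longer yields positivity. What is actually provable is the first bullet together with the separation property above.
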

\end{theorem_box}
\begin{proof}[\Rthree{Proof Sketch of \cref{thm:optimal_c_for_fixed_w}}]
    \Rthree{When $w(k_n - k_1) < n+\lambda$, for any $1\leq i < j \leq n$ with $d_i,d_j \geq 1$, moving a poison from $i$ to $j$, or from $j$ to $i$, always increases the loss.
    When $w(k_n - k_1) \geq n+\lambda$, for any $2\leq i \leq n-1$ with $d_i \geq 1$, moving a poison from $i$ to $1$, or from $i$ to $n$, always increases the loss.}
\end{proof}
\Rthree{The full proofs are provided in \cref{app:proof_of_upper_bound}.}
By \cref{thm:optimal_c_for_fixed_w}, the number of possible $\bm{d}$ configurations that maximize $\min_{b} \mathcal{L}(\mathcal{K} \uplus \mathcal{Q}_\mathcal{K}(\bm{d}); w, b)$ is at most $\mathcal{O}(n + \lambda)$.
For each such $\bm{d}$, \cref{thm:convexity_of_min_b_l_fixed_d} implies that $\min_{b} \mathcal{L}(\mathcal{K} \uplus \mathcal{Q}_\mathcal{K}(\bm{d}); w, b)$ is a convex quadratic function of~$w$.
By properly implementing differential computation, we can compute the coefficients of these quadratic functions in time $\mathcal{O}(n + \lambda)$, as detailed in \cref{sec:efficient_calculation_for_upper_bound}.
Thus, evaluating \cref{eq:upper_bound_relaxed_poisoning_problem} reduces to the following problem: given convex quadratic functions $f_i(w): \mathbb{R} \to \mathbb{R}$ for $i \in \{1,2,\dots,\mathcal{O}(n + \lambda)\}$, determine the value of $\min_w \max_i f_i(w)$.
There exist several methods to efficiently solve this problem.
Three representative approaches are as follows:
\begin{enumerate}[leftmargin=1.5em]
    \item \textbf{Golden section search over $w$}: Minimize the convex function $\max_i f_i(w)$ via golden section search~\citep{press2007numerical}. With $T$ iterations, the time complexity is $\mathcal{O}(T(n+\lambda))$.
    \item \textbf{Binary search over $y$ (function values)}: Given $y\in\mathbb{R}$, we can test in $\mathcal{O}(n+\lambda)$ time whether $y \ge \min_w \max_i f_i(w)$. Thus, binary search~\citep{cormen2009introduction} with $T$ iterations can solve the problem in $\mathcal{O}(T(n + \lambda))$ time.
    \item \textbf{Exact structural analysis}: Since $\max_i f_i(w)$ is piecewise quadratic with $\mathcal{O}(n+\lambda)$ pieces by Davenport--Schinzel sequence theory~\citep{sharir1988davenport}, we can construct its exact representation in $\mathcal{O}((n+\lambda)\log(n+\lambda))$ time and then compute the exact minimum directly (unlike the previous two iterative methods).
\end{enumerate}
Detailed algorithms are given in \cref{sec:efficient_calculation_for_upper_bound}.
By employing one of these methods, we can evaluate \cref{eq:upper_bound_relaxed_poisoning_problem} with a time complexity of either $\mathcal{O}(T(n + \lambda))$ or $\mathcal{O}((n + \lambda)\log(n + \lambda))$, thereby obtaining an upper bound on the impact of multi-point poisoning attacks.

\subsection{Segment + Endpoint Attack}
\label{sec:segment_plus_endpoint_attack}

\begin{figure}[t]
    \centering
    \includegraphics[width=\columnwidth]{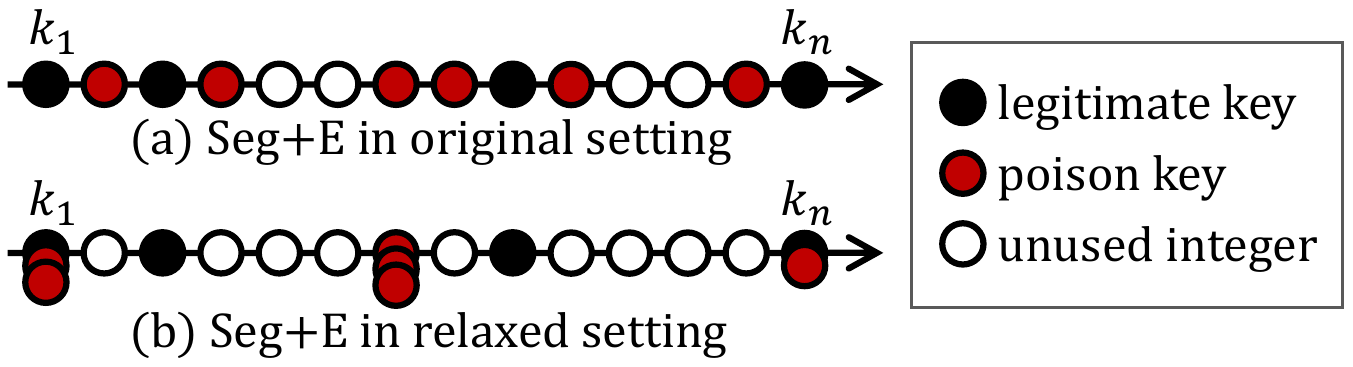}
    \caption{Examples of the Segment + Endpoint (Seg+E) attack in original and relaxed settings.}
    \label{fig:sege}
\end{figure}

Motivated by \cref{thm:optimal_c_for_fixed_w}---which implies that under the relaxed setting with a fixed $w$, the optimal poison mass concentrates either at the two extremes or at a single interior point---we propose a simple structured attack strategy called \emph{Segment + Endpoint (Seg+E)}.
A Seg+E attack uses as poisons at most three consecutive blocks: two on the extreme blocks adjacent to $k_1$ and $k_n$, and a single consecutive segment not adjacent to these extremes. 
We define Seg+E in the relaxed setting (\cref{def:relaxed_poisoning_problem}) analogously: a Seg+E multiset in the relaxed setting consists of at most three integers, namely those corresponding to $k_1$, $k_n$, and a single interior integer.
\cref{fig:sege} illustrates Seg+E attacks in the original and relaxed settings.
The formal mathematical definition is given in \cref{app:segment_plus_endpoint}.

We propose three algorithms for finding the Seg+E solution.
First, we present an $\mathcal{O}(n\lambda^3)$-time exact algorithm for the original setting, followed by an $\mathcal{O}(n\lambda)$-time exact algorithm for the relaxed setting.
Finally, by using the relaxed-setting solution as guidance, we design an $\mathcal{O}(n\lambda)$-time algorithm that finds a solution in the original setting that is empirically very close to the optimal one.

\textbf{$\mathcal{O}(n\lambda^3)$-time algorithm for the original setting.}
In the original setting, we can find an exact Seg+E solution in $\mathcal{O}(n\lambda^3)$ time.
The number of ways to allocate the $\lambda$ poison budget to the left endpoint, the middle segment, and the right endpoint is $\mathcal{O}(\lambda^3)$.
For each such allocation, an argument analogous to the proof of \cref{thm:structure_of_optimal_multi_point_attack} shows that the middle segment admits only $\mathcal{O}(n)$ candidate locations that need to be considered;
see \cref{thm:seg_e_in_original_setting} in \cref{sec:algo_exact_seg_plus_e_original} for a formal proof.
Therefore the total number of Seg+E candidates to evaluate is $\mathcal{O}(n\lambda^3)$, and by computing the MSE for each candidate efficiently via differential updates we obtain the exact Seg+E solution in $\mathcal{O}(n\lambda^3)$ time.

\textbf{$\mathcal{O}(n\lambda)$-time algorithm for the relaxed setting.}
In the relaxed setting, we can find an exact Seg+E solution in $\mathcal{O}(n\lambda)$ time.
Using an argument similar to the proof of \cref{thm:d_saturation}, we first show that any exact Seg+E solution in the relaxed problem exhausts the budget $\lambda$.
As in the original setting, the location of the middle segment admits only $\mathcal{O}(n)$ candidate positions.
Crucially, when the number of poisons placed at the left endpoint and the middle-segment location are fixed, the optimal number of poisons assigned to the middle segment can be determined in $\mathcal{O}(1)$ time, by extending the loss to a continuous function in the middle-segment multiplicity and using its derivative to identify the optimum.
Hence, iterating over all choices of left-endpoint count and middle-segment location (there are $\mathcal{O}(n\lambda)$ such combinations) and evaluating the corresponding MSEs yields an $\mathcal{O}(n\lambda)$-time algorithm for finding the exact Seg+E solution in the relaxed setting.

\textbf{$\mathcal{O}(n\lambda)$-time heuristic algorithm for the original setting.}
We also design an $\mathcal{O}(n\lambda)$-time heuristic algorithm that yields a Seg+E solution that is not guaranteed to be exact but is empirically very close to the exact one.
The heuristic algorithm leverages the Seg+E solution in the relaxed setting as a guide:
we fixed each of the number of poisons at the left endpoint, middle-segment, and right endpoint as the same as the relaxed-setting solution.
Then, we try the $\mathcal{O}(n)$ candidate middle-segment positions and select the best placement.
Because the Seg+E solution in the relaxed setting can be found in $\mathcal{O}(n\lambda)$ time and we can evaluate the MSE for each candidate position in $\mathcal{O}(1)$ time, the overall time complexity of the heuristic algorithm is $\mathcal{O}(n\lambda)$.

\section{Experimental Evaluation}
\label{sec:experimental_evaluation}

In this section, we experimentally evaluate our upper-bounding methods and Seg+E algorithms.
We first assess how tightly the proposed bounds match the attack impact achieved by the greedy method (\cref{sec:experiment:upper_bound_vs_greedy}), then compare the loss achieved by Seg+E and the greedy attack (\cref{sec:experiment:greedy_vs_sege}).
We also measure running time (\cref{sec:experiment:runningtime}), conduct ablation studies on key parameters (\cref{sec:experiment:ablation_study_of_n_and_r}), and present a detailed analysis in the small-scale setting, where the optimum is computable (\cref{sec:experiment:upper_bound_vs_greedy_brute_force}).
\Rone{Finally, we evaluate the impact of poisoning on \emph{lookup time}, i.e., the time to retrieve values from a sorted array using the linear regression model (\cref{sec:experiment:impact_of_poisoning_on_lookup_time}).}

\textbf{Key take-aways:} The main findings are as follows:
\begin{enumerate}[leftmargin=1.5em]
\item For both real-world and synthetic datasets, there is only a small gap between the upper bound and the iterative greedy approach. This indicates that the bound is tight and that the greedy approach achieves near-optimal solutions.
\item Exact Seg+E is never worse than the greedy attack, and the Seg+E solution by our heuristic algorithm always attains a loss very close to that of the exact Seg+E attack.
\item The upper bound can be determined in time faster than running the greedy algorithm, allowing one to effectively judge the quality of the computed solution.
\item The tightness of the bound remains stable across both key density and dataset size, with only moderate degradation observed under extremely dense distributions.
\item The gap decomposition analysis identifies the main source of looseness in the upper bounds: the max-min step dominates when key density is low, while the duplicate-allowing relaxation dominates when key density is extremely high.
\item \Rone{Poisoning substantially increases lookup time: at a 20\% poisoning ratio, it slows down lookups by up to $1.6\times$.}
\end{enumerate}

\subsection{Experimental Setup}

We implemented all upper-bounding, Seg+E, and greedy poisoning algorithms, in C++.
All experiments were conducted on a Linux machine equipped with an Intel\textregistered~ Core\texttrademark~ i9-11900H CPU @ 2.50GHz and 64GB of memory.
We compiled the code using GCC version 9.4.0 with the \texttt{-O3} optimization flag enabled and ran all experiments in a single-threaded setting.
\Rtwo{To verify that our claims hold across typical key-distribution patterns, we evaluate on several synthetic and real-world datasets used in a standard learned-index benchmark.}

\noindent \textbf{Synthetic Datasets.}
Each synthetic legitimate key set is generated based on a specified distribution (Uniform, Normal, or Exponential), random seed, range $R \in \mathbb{N}$, and value of $n \in \mathbb{N}$, using the following procedure:
We sampled $n$ real values, scaled them to $[0, R]$, rounded to integers, and removed duplicates.
\Rone{When $R$ is small, a larger fraction of the integer domain is occupied by legitimate keys, leaving fewer integer values available for the attacker to use.}

\noindent \textbf{Real-World Datasets.}
Each real-world legitimate key set is constructed by extracting $n$ consecutive keys from a real sorted dataset, where the starting position is selected randomly.
We use Amzn, Face, and Osmc datasets from SOSD~\cite{sosd-vldb}, a benchmark for learned indexes.
This setup reflects the practical use case where linear regression models are often deployed as leaf models handling consecutive keys in learned indexes~\cite{ding2020alex,li2021finedex,wang2020sindex,sun2023learned}.

\noindent \textbf{Upper Bound Algorithms.}
We evaluate the three upper bound computation algorithms described in \cref{sec:upper_bound_multi_point_poisoning_attack}.
For the first (golden section search) and second (binary search) approaches, we fix the number of iterations $T$ to 50.
Since the difference among their results is always very small (less than $10^{-9}$), we report results from the first algorithm unless otherwise noted (e.g., for running time evaluation in \cref{sec:experiment:runningtime}).

\subsection{Upper Bound vs Greedy Poisoning}
\label{sec:experiment:upper_bound_vs_greedy}

\begin{figure*}[t]
    \centering
    \includegraphics[width=\textwidth]{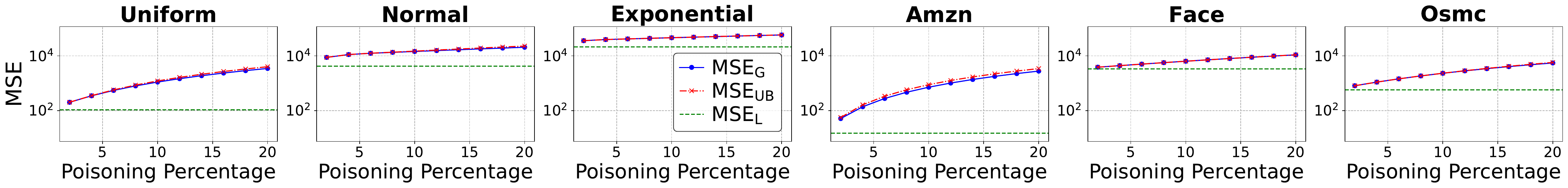}
    \caption{$\mathrm{MSE}_\mathrm{UB}$ (our upper bound), $\mathrm{MSE}_\mathrm{G}$ (greedy attack), and $\mathrm{MSE}_\mathrm{L}$ (legitimate) for seed = 0.
    We set $n=1{,}000$ and $R=100{,}000$.
    We observe that $\mathrm{MSE}_\mathrm{UB} \geq \mathrm{MSE}_\mathrm{G}$ in all cases and the gap is small, which indicates that our upper bound is tight and greedy attack is close to the optimal attack.}
    \label{fig:upper_bound_vs_legitimate_seed0}
\end{figure*}

\begin{figure*}[t]
    \centering
    \includegraphics[width=\textwidth]{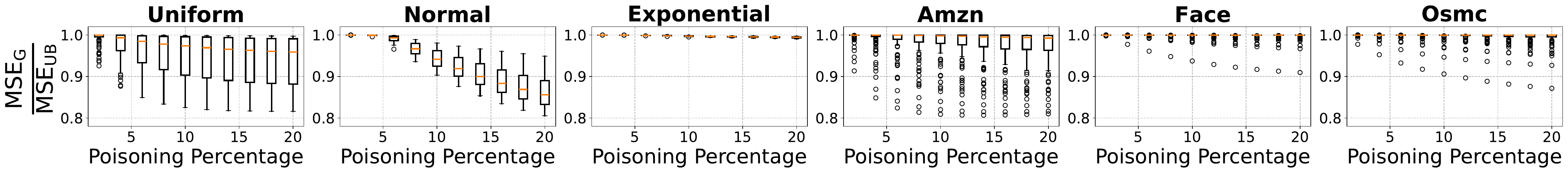}
    \caption{Boxplots of $\mathrm{MSE}_\mathrm{G}/\mathrm{MSE}_\mathrm{UB}$ over 100 seeds for each dataset and poisoning percentage.
    We set $n=1{,}000$ and $R=100{,}000$.
    The ratio never exceeds 1 and is always greater than $0.8$.
    In particular, for Exponential, Amzn, Face, and Osmc, the median ratio is $\geq 0.99$, which indicates that our upper bound is tight and greedy attack is close to the optimal attack.}
    \label{fig:upper_bound_vs_greedy}
\end{figure*}

\begin{figure*}[t]
    \centering
    \includegraphics[width=\textwidth]{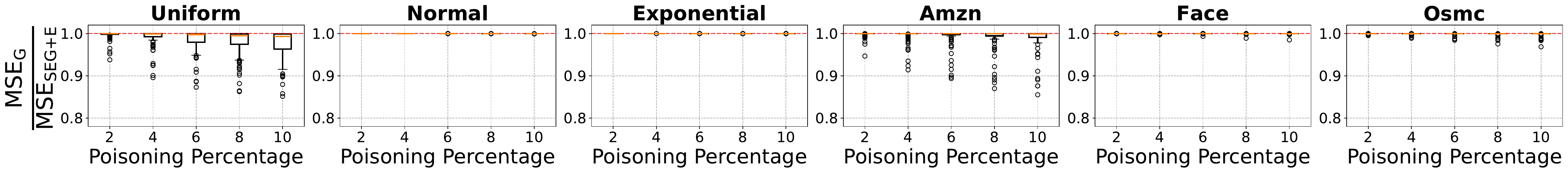}
    \caption{Boxplots of $\mathrm{MSE}_\mathrm{G}/\mathrm{MSE}_\mathrm{Seg+E}$ over 100 seeds for each dataset and poisoning percentage.
    We set $n=1{,}000$ and $R=100{,}000$.
    The ratio never exceeds 1, that is, the Seg+E attack is never worse than the greedy attack.}
    \label{fig:greedy_vs_sege}
\end{figure*}

\begin{figure*}[t]
    \centering
    \includegraphics[width=\textwidth]{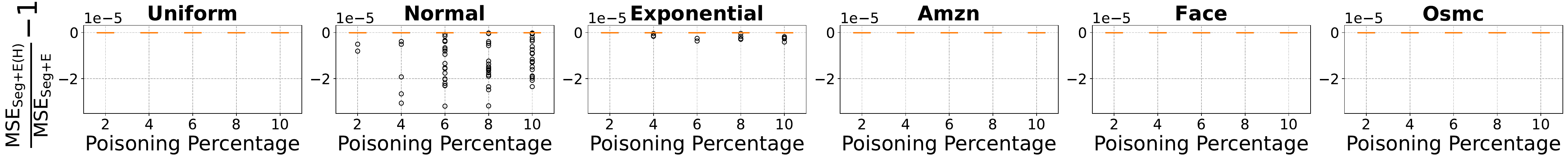}
    \caption{Boxplots of $\mathrm{MSE}_\mathrm{Seg+E(H)}/\mathrm{MSE}_\mathrm{Seg+E} - 1$ over 100 seeds for each dataset and poisoning percentage (for better visibility, we plot the value obtained by subtracting 1 from the ratio).
    We set $n=1{,}000$ and $R=100{,}000$.
    The ratio consistently remains very close to 1, indicating that our heuristic algorithm yields solutions nearly identical to those of the exact Seg+E algorithm.}
    \label{fig:sege_h_vs_sege}
\end{figure*}

Here, we show that the gap between the upper bound computed by our proposed method ($\mathrm{MSE}_\mathrm{UB}$) and the loss from the greedy poisoning method ($\mathrm{MSE}_\mathrm{G}$) is consistently small.
This demonstrates that our upper bound method provides an efficient approach to measuring the impact of multi-point attacks, and that the greedy attack approximates the optimal attack reasonably well.

First, \cref{fig:upper_bound_vs_legitimate_seed0} shows the comparison among $\mathrm{MSE}_\mathrm{G}$, $\mathrm{MSE}_\mathrm{UB}$, and the MSE of legitimate keys ($\mathrm{MSE}_\mathrm{L}$) for seed = 0.
We set $n=1{,}000$ for all datasets, and set $R=100{,}000$ for synthetic data.
The poisoning percentage is defined as $\lambda / n$.

We observe that $\mathrm{MSE}_\mathrm{G}$ increases with larger poisoning percentages.
The MSE of legitimate keys varies significantly depending on the distribution and $n$, and so does the increase in MSE caused by poisoning.
In all cases, $\mathrm{MSE}_\mathrm{UB}$ is greater than $\mathrm{MSE}_\mathrm{G}$ and \Rtwo{\textit{tight} in the sense that $\mathrm{MSE}_\mathrm{G} / \mathrm{MSE}_\mathrm{UB}$ is always at least $0.81$}.

\Rtwo{
We further observe differences across datasets: settings that are locally closer to uniform (e.g., Uniform and Amzn) tend to yield smaller MSE on legitimate keys, yet can exhibit larger increases in MSE after the attack.
For such near-uniform datasets, we also find that the discrepancy between $\mathrm{MSE}_\mathrm{UB}$ and $\mathrm{MSE}_\mathrm{G}$ tends to be larger.
}

Next, \cref{fig:upper_bound_vs_greedy} presents the distribution of the ratio $\mathrm{MSE}_\mathrm{G} / \mathrm{MSE}_\mathrm{UB}$ over 100 seeds.
Values of this ratio close to $1$ indicate a tight upper bound and near-optimal performance of the greedy attack, whereas values close to $0$ indicate a loose bound or weak greedy performance.
Each boxplot represents the distribution across 100 seeds for a given configuration of ($n$, dataset, poisoning percentage).
Note that the randomness comes only from data generation; all algorithms are deterministic.
We set $n=1{,}000$ for all datasets, and set $R=100{,}000$ for synthetic data.

We observe that, in every configuration, $\mathrm{MSE}_\mathrm{G} / \mathrm{MSE}_\mathrm{UB}$ never exceeds 1, and the median over 100 trials is always above 0.85.
In particular, for Exponential, Amzn, Face, and Osmc, the median is consistently above 0.99, showing that our upper bound and the greedy results are very close.
For Uniform and Normal, the gap between upper bound and greedy results tends to grow, especially when the poisoning percentage is large.
That said, across all $6{,}000$ trials, the ratio never falls below 0.8, and the overall average remains as high as 0.97.
This demonstrates that (i) the greedy multi-point poisoning attack leaves at most about 25\% room for improvement (and only around 3\% on average), and (ii) our proposed upper bound is consistently tight across datasets and settings.

\subsection{Greedy Poisoning vs Seg+E}
\label{sec:experiment:greedy_vs_sege}

\begin{figure*}[t]
    \centering
    \includegraphics[width=\textwidth]{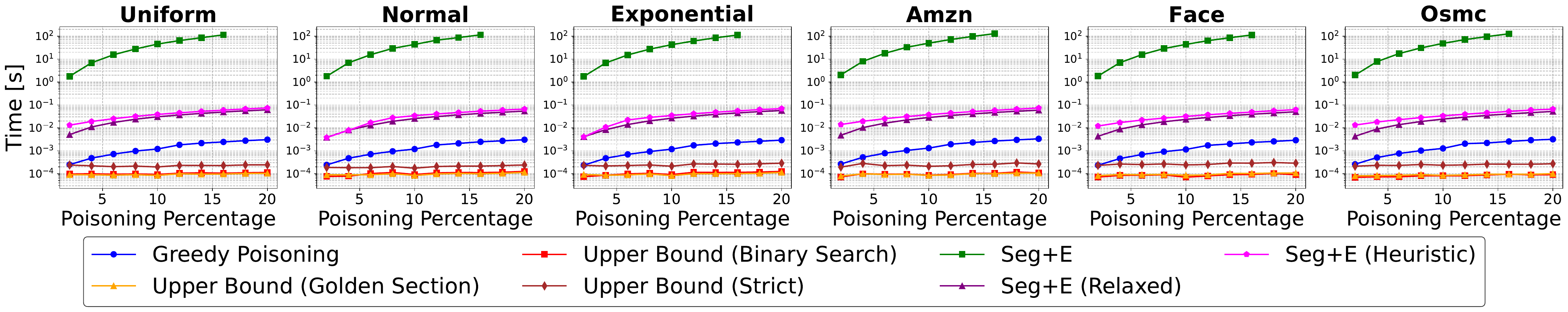}
    \caption{Running time vs poisoning percentage.
    We set $n=1{,}000$ and $R=100{,}000$.
    The running time of the greedy poisoning method grows linearly with the poisoning percentage (consistent with $\mathcal{O}((n + \lambda) \lambda)$ time complexity), whereas our upper-bound methods (Golden, Binary, Strict) are nearly flat (consistent with $\mathcal{O}(n + \lambda)$ or $\mathcal{O}((n + \lambda) \log(n + \lambda))$ time complexity).}
    \label{fig:runningtime_percentage}
\end{figure*}

\begin{figure*}[t]
    \centering
    \includegraphics[width=\textwidth]{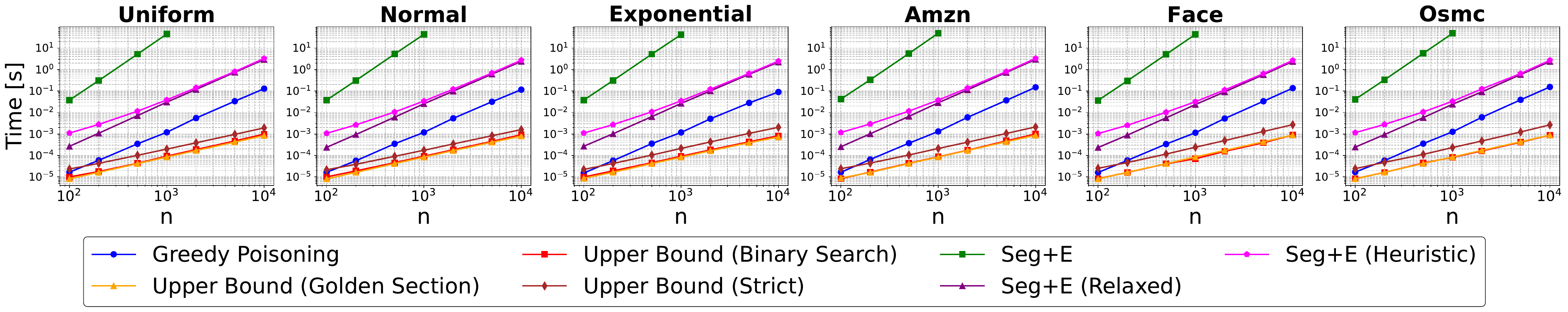}
    \caption{Running time vs $n$.
    We set the poisoning percentage to 10\% and $R=100{,}000$.
    The running time of the greedy poisoning method grows quadratically with $n$ (consistent with $\mathcal{O}((n + \lambda) \lambda)$ time complexity), whereas our upper-bound methods (Golden, Binary, Strict) grow linearly with $n$ (consistent with $\mathcal{O}(n + \lambda)$ or $\mathcal{O}((n + \lambda) \log(n + \lambda))$ time complexity).}
    \label{fig:runningtime_n}
\end{figure*}

Here, we compare $\mathrm{MSE}_\mathrm{G}$, $\mathrm{MSE}_\mathrm{Seg+E}$ (the MSE with the exact Seg+E solution), and $\mathrm{MSE}_\mathrm{Seg+E(H)}$ (the MSE with the Seg+E solution computed by our $\mathcal{O}(n\lambda)$ heuristic algorithm).

\cref{fig:greedy_vs_sege} shows the distribution of $\mathrm{MSE}_\mathrm{G} / \mathrm{MSE}_{\mathrm{Seg+E}}$ over 100 seeds.
\Rtwo{A ratio below 1 means that Seg+E yields a larger MSE than Greedy.}
Across all $3{,}000$ cases (6 datasets $\times$ 5 poisoning percentages $\times$ 100 seeds), the ratio never exceeds 1, \Rtwo{i.e., the exact Seg+E solution is never worse than the Greedy.}
\Rtwo{For the Uniform and Amzn datasets}, the two methods sometimes differ noticeably; the ratio $\mathrm{MSE}_\mathrm{G} / \mathrm{MSE}_\mathrm{Seg+E}$ can drop to below 0.86, \Rtwo{which means that Seg+E yields up to about 1.16 times larger MSE than Greedy.}

\cref{fig:sege_h_vs_sege} shows the distribution of $\mathrm{MSE}_\mathrm{Seg+E(H)}/\mathrm{MSE}_\mathrm{Seg+E}$ over 100 seeds.
Over the entire set of $3{,}000$ measurements, we observed that this ratio is always greater than $0.99996$.
This indicates that the Seg+E attack produced by the fast heuristic algorithm is very close to the one obtained by the exact algorithm.
Although $\mathrm{MSE}_\mathrm{Seg+E(H)}$ is rarely worse than $\mathrm{MSE}_\mathrm{G}$, the maximum observed value of the ratio $\mathrm{MSE}_\mathrm{G}/\mathrm{MSE}_\mathrm{Seg+E(H)}$ is only $1.00004$.
In other words, the heuristic algorithm for finding Seg+E solutions provides an excellent trade-off between computational efficiency and attack effectiveness.

\subsection{Running time Evaluation}
\label{sec:experiment:runningtime}

\begin{figure*}[t]
    \centering
    \includegraphics[width=\textwidth]{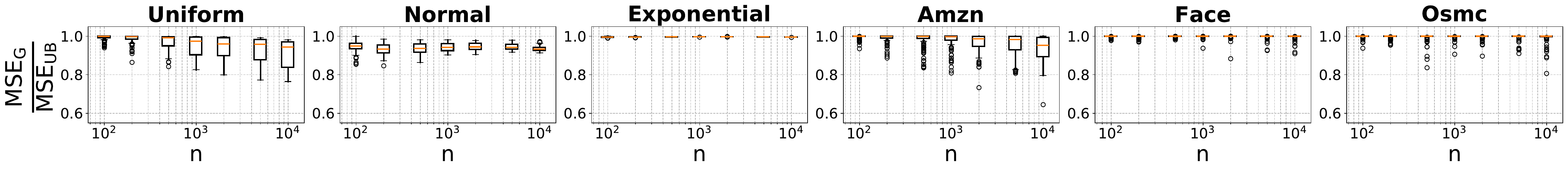}
    \caption{$\mathrm{MSE}_\mathrm{G}/\mathrm{MSE}_\mathrm{UB}$ vs $n$.
    We set the poisoning percentage to 10\% and $R=100{,}000$.
    As $n$ increases, $\mathrm{MSE}_\mathrm{G}/\mathrm{MSE}_\mathrm{UB}$ shows a slight tendency to decrease.}
    \label{fig:ablation_n}
\end{figure*}

\begin{figure}[t]
    \centering
    \includegraphics[width=\columnwidth]{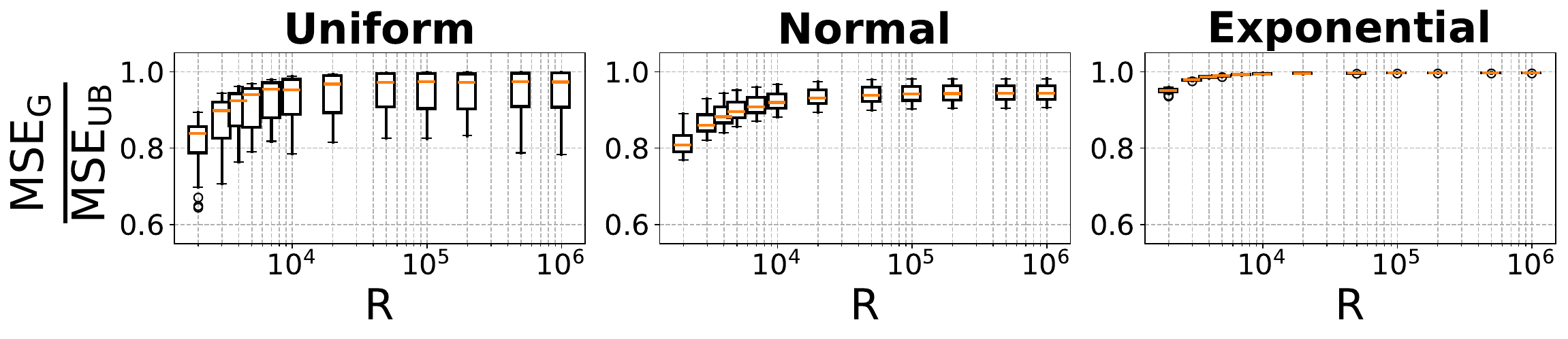}
    \caption{$\mathrm{MSE}_\mathrm{G}/\mathrm{MSE}_\mathrm{UB}$ vs $R$ for synthetic data.
    We set the poisoning percentage to 10\% and $n=1{,}000$ for all datasets.
    When $R$ is quite small, the ratio tends to decrease.}
    \label{fig:ablation_R}
\end{figure}

\begin{figure*}[t]
    \centering
    \includegraphics[width=\textwidth]{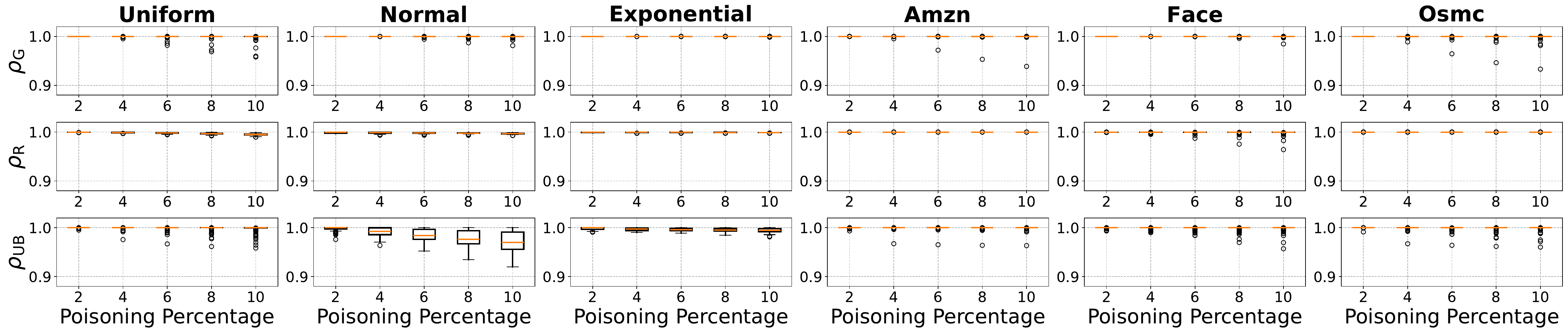}
    \caption{Boxplots of the ratios $\rho_\mathrm{G}$, $\rho_\mathrm{R}$, and $\rho_\mathrm{UB}$, defined in \cref{eq:rho_definitions}, over 100 seeds for each dataset and poisoning percentage.
    We set $n=50$ and $R=1{,}000$.
    All ratios remain close to $1$ (minimum $0.92$), with $\mathrm{MSE}_\mathrm{OPT}/\mathrm{MSE}_\mathrm{ROPT}$ particularly tight.}
    \label{fig:brute_force_vs_greedy}
\end{figure*}

\begin{figure}[t]
    \centering
    \includegraphics[width=\columnwidth]{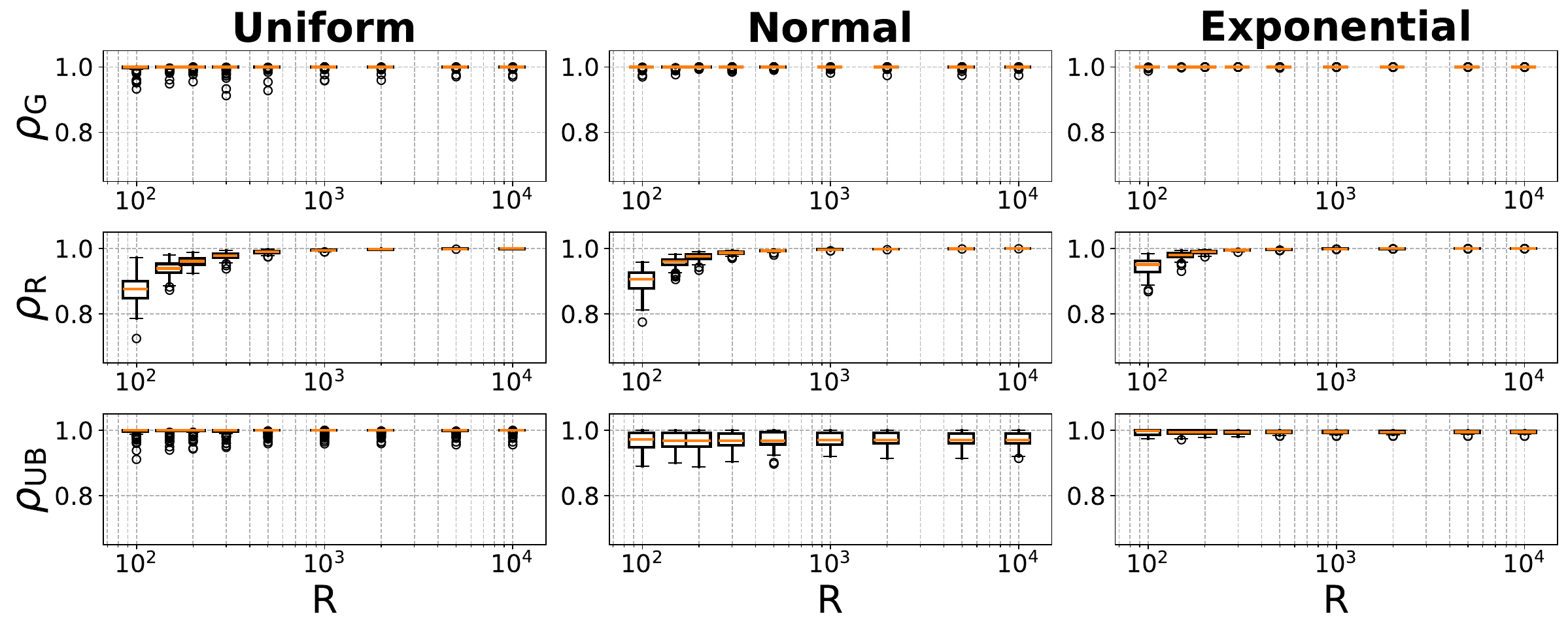}
    \caption{The ratios ($\rho_\mathrm{G}$, $\rho_\mathrm{R}$, and $\rho_\mathrm{UB}$) vs $R$ for synthetic data.
    We set the poisoning percentage to 10\% and $n=50$.
    $\rho_\mathrm{UB}$ and $\rho_\mathrm{G}$ remain stable as $R$ varies, whereas $\rho_\mathrm{R}$ stays close to $1$ for large $R$ but drops when $R$ is small.}
    \label{fig:brute_force_vs_greedy_R}
\end{figure}

\begin{figure}[t]
    \centering
    \includegraphics[width=\columnwidth]{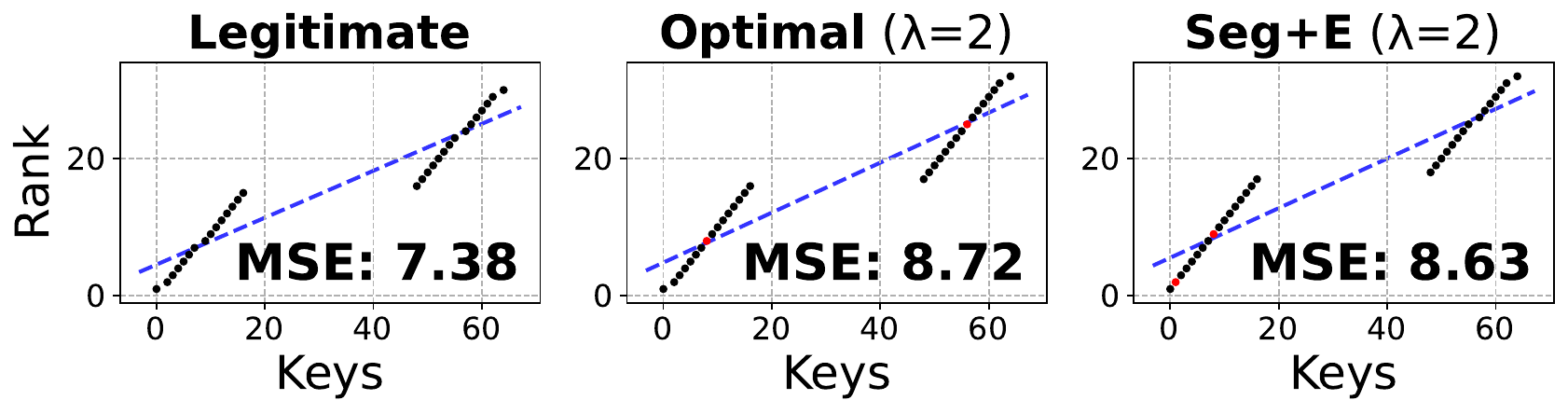}
    \caption{A counter example where the exact Seg+E solution is not globally optimal.
    The legitimate key set is $\mathcal{K}=\{0, 1, \dots, 16, 48, 49, \dots, 64\} \setminus \{1,8,56,63\}$.
    The optimal solution is $\mathcal{P}=\{8,56\}$, but the exact Seg+E solution is $\mathcal{P}=\{1,8\}$.}
    \label{fig:counter_example}
\end{figure}

\begin{figure*}[t]
    \centering
    \includegraphics[width=\textwidth]{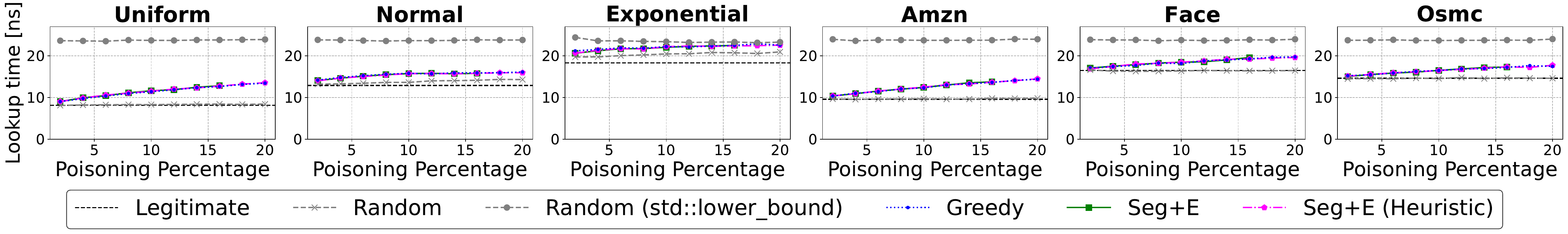}
    \caption{\Rone{Lookup time. The lookup time includes both regression inference and exponential search. We measure it 10 times for each key and report the average. We set $n = 1{,}000$ and $R = 100{,}000$. Poisoning increases the lookup time by up to 1.6 times.}}
    \label{fig:lookuptime_percentage}
\end{figure*}

We measure the running time of our three upper-bounding methods, whose time complexities are $\mathcal{O}(n + \lambda)$, $\mathcal{O}(n + \lambda)$, and $\mathcal{O}((n + \lambda) \log(n + \lambda))$, respectively.
We also report the running time of our Seg+E algorithms (exact: $\mathcal{O}(n\lambda^3)$, heuristic: $\mathcal{O}(n\lambda)$, and exact in the relaxed setting: $\mathcal{O}(n\lambda)$), as well as the running time of the greedy poisoning attack~\citep{kornaropoulos2022price}, which runs in $\mathcal{O}((n + \lambda) \lambda)$ time.

\cref{fig:runningtime_percentage} shows the relationship between poisoning percentage and their running time for each algorithm.
Since $\lambda$ is the product of $n$ and the poisoning percentage, $\lambda$ is proportional to the poisoning percentage.
We set $n=1{,}000$ and $R=100{,}000$.
We see that the running time of the greedy poisoning increases linearly with poisoning percentage, aligning with the complexity $\mathcal{O}((n + \lambda) \lambda)$ (note that $\lambda < n$ in our setting).
In contrast, the proposed upper-bound algorithms (Golden, Binary, Strict) show minimal change in running time.
This is because, when $\lambda < n$, the time complexity of the proposed upper-bound algorithms is $\mathcal{O}(n)$ or $\mathcal{O}(n \log n)$.

As expected, the $\mathcal{O}(n \lambda^3)$-time exact Seg+E algorithm is significantly slower than the other algorithms, taking more than $10{,}000$ times longer than the greedy algorithm.
In contrast, the exact Seg+E algorithm in the relaxed setting and the heuristic Seg+E algorithm, both of which run in $\mathcal{O}(n\lambda)$ time, are much faster.
However, they are still about 20 times slower than the greedy algorithm despite having the same asymptotic time complexity.
This is because the Seg+E algorithms involve heavier constant-time operations, such as solving cubic equations, resulting in a larger constant factor.

\cref{fig:runningtime_n} shows the relationship between $n$ and the running time of each algorithm.
We set the poisoning percentage to 10\% and $R=100{,}000$.
The running time of the greedy poisoning method increases quadratically with $n$, consistent with the time complexity $\mathcal{O}((n + \lambda) \lambda)$.
In contrast, the proposed upper-bound algorithms (Golden, Binary, Strict) scale as $\mathcal{O}(n)$ or $\mathcal{O}(n\log n)$ and are much faster than the greedy algorithm.
In particular, the two approximation methods (Golden and Binary) are extremely fast, with running times no greater than 0.003 seconds.

These results show that our methods compute upper bounds in practical time even for large $n$ and $\lambda$.
\Rone{This efficiency enables downstream uses, such as lightweight attack-quality assessment and as a surrogate objective for higher-level attack-generation procedures; we discuss these use cases in \cref{sec:discussion}.}

% This property allows us to use the upper-bound algorithm to efficiently assess the quality of a computed solution: one can run the upper-bound method concurrently with the greedy method; if the gap between them is small, we can stop, whereas if the gap is large, we can switch to a more sophisticated (and possibly more expensive) attack algorithm to approach the optimal solution.
% Moreover, the upper bound can be used as an approximation of the MSE under an optimal attack. For instance, in the attack on RMI proposed by \citet{kornaropoulos2022price}, the algorithm performs hill climbing by evaluating the post-attack MSE of each linear regression model. In such cases, using the upper bound as a proxy for the MSE under the optimal attack can significantly reduce the computational cost.
% In this way, the fact that our upper-bound algorithm runs faster than the greedy algorithm (and that the bound is tight) makes it practical in various scenarios.

\subsection{Ablation Study of $n$ and $R$}
\label{sec:experiment:ablation_study_of_n_and_r}

Here, we investigate how $\mathrm{MSE}_\mathrm{G} / \mathrm{MSE}_\mathrm{UB}$ changes with respect to $n$ and $R$.
\cref{fig:ablation_n} shows the relationship between $n$ and the ratio $\mathrm{MSE}_\mathrm{G} / \mathrm{MSE}_\mathrm{UB}$ for synthetic data.
We set the poisoning percentage to 10\% and $R=100{,}000$ for synthetic data.
As $n$ increases, we observe a slight decreasing trend in the ratio.
That is, the ratio remains consistently tight: the minimum value is $0.64$ (or $0.73$ if excluding a single outlier seed from Amzn), while the average value is always above $0.91$.
This indicates that, regardless of $n$, our proposed upper bound tightly bounds the attack impact.

\cref{fig:ablation_R} shows the relationship between $R$ and $\mathrm{MSE}_\mathrm{G} / \mathrm{MSE}_\mathrm{UB}$.
We set the poisoning percentage to 10\% and $n=1{,}000$ for all datasets.
We observe that the smaller the value of $R$, the smaller the ratio tends to be.
This is expected because our upper bound is obtained by relaxing the constraint to allow duplicate keys, and the effect of this relaxation becomes more pronounced when $R$ is small.
In particular, the ratio worsens when $R < 10^4$.
However, note that $R=10^4$ already corresponds to a fairly narrow key range (i.e., a dense distribution), where 10\% of the integers contained in the range are legitimate.
In the real datasets we used, the range spanned by $n=1{,}000$ consecutive keys is at least $1.8\times 10^{13}$ (Amzn), $2.8\times 10^4$ (Face), and $2.3\times 10^{10}$ (Osmc).
Thus, the degradation observed for small $R$ is negligible in practice, and our upper bound remains tight under realistic settings.

\subsection{Detailed Analysis in Small-Scale Settings}
\label{sec:experiment:upper_bound_vs_greedy_brute_force}

Here, we present a detailed analysis in the small-scale setting, where our theory makes it possible to obtain the optimal solution within practical time.
As explained in \cref{sec:multi_point_poisoning_attack}, \cref{thm:structure_of_optimal_multi_point_attack} implies that the number of candidates for the optimal solution to the original problem is at most $\binom{2n - 2 + \lambda}{\lambda}$, allowing us to compute $\mathrm{MSE}_\mathrm{OPT}$ in $\mathcal{O}\left((n+\lambda)\binom{2n - 2 + \lambda}{\lambda}\right)$ time.
Similarly, \cref{thm:structure_of_optimal_multi_point_attack} and \cref{thm:d_saturation} imply that there are at most $\binom{n+\lambda-1}{\lambda}$ candidates in the relaxed setting, allowing us to compute $\mathrm{MSE}_\mathrm{ROPT}$ in $\mathcal{O}\left((n+\lambda)\binom{n+\lambda-1}{\lambda}\right)$ time.
Hence, we can compute the exact optimum for moderately small $n$ and $\lambda$; in our experiments, we set $n=50$ and $\lambda \leq 5$.

First, we examine the gap between the greedy solution and the upper bound.
Since our algorithm produces upper bounds by further upper-bounding the optimal solution of the relaxed problem, letting $\mathrm{MSE}_\mathrm{ROPT}$ as the MSE under the optimal solution of the relaxed problem, the inequalities $\mathrm{MSE}_\mathrm{G} \leq \mathrm{MSE}_\mathrm{OPT} \leq \mathrm{MSE}_\mathrm{ROPT} \leq \mathrm{MSE}_\mathrm{UB}$ hold.
Accordingly, we evaluate the following three ratios:
\begin{equation}
\label{eq:rho_definitions}
    \rho_\mathrm{G} \coloneq \frac{\mathrm{MSE}_\mathrm{G}}{\mathrm{MSE}_\mathrm{OPT}}, \quad
    \rho_\mathrm{R} \coloneq \frac{\mathrm{MSE}_\mathrm{OPT}}{\mathrm{MSE}_\mathrm{ROPT}}, \quad
    \rho_\mathrm{UB} \coloneq \frac{\mathrm{MSE}_\mathrm{ROPT}}{\mathrm{MSE}_\mathrm{UB}}.
\end{equation}
All of these ratios are less than or equal to 1.
Values of $\rho_\mathrm{G}$ near $1$ indicate that the greedy attack is close to optimal,
values of $\rho_\mathrm{R}$ near $1$ indicate that the gap introduced by the relaxation is small,
and values of $\rho_\mathrm{UB}$ near $1$ indicate that the max-min upper-bounding step (\cref{eq:upper_bound_relaxed_poisoning_problem}) is tight.
Note that the product of these ratios equals the ratio examined in the previous sections, $\mathrm{MSE}_\mathrm{G}/\mathrm{MSE}_\mathrm{UB}$.
By decomposing it in this manner, we can analyze in detail which component contributes to the observed deviation from $1$.

\cref{fig:brute_force_vs_greedy} reports the distributions of $\rho_\mathrm{G}$, $\rho_\mathrm{R}$, and $\rho_\mathrm{UB}$ over 100 seeds;
we zoom the $y$-axis to a narrow neighborhood of 1 for readability.
Across datasets and settings, all three ratios are consistently close to 1 (all exceed $0.92$) and exhibit a mild decreasing trend as $\lambda$ increases.
We observe that $\rho_\mathrm{G}$ tends to be smaller (i.e., the gap between the greedy and the optimal solutions is larger) on Uniform, Amzn, and Osmc.
We also observe that $\rho_\mathrm{R}$ is generally closer to 1 than $\rho_\mathrm{UB}$, suggesting that the main source of looseness comes from the max-min inequality rather than from permitting duplicate poisons.
We also note that $\rho_\mathrm{UB}$ is most frequently smaller on the Normal dataset, whereas $\rho_\mathrm{R}$ tends to be smaller on the Face dataset.

\cref{fig:brute_force_vs_greedy_R} studies how the ratios vary with $R$ on synthetic data (with $n=50$ and a 10\% poisoning percentage).
We find that both $\rho_\mathrm{UB}$ and $\rho_\mathrm{G}$ are largely insensitive to $R$.
In contrast, $\rho_\mathrm{R}$ is close to 1 for moderately large $R$ (e.g., $R\ge 1{,}000$) but can be somewhat smaller when $R$ is small.
Taken together with the ablation in \cref{sec:experiment:ablation_study_of_n_and_r}, these findings reinforce that the dominant contributor to the overall gap at small $R$ is the duplicate-allowing relaxation, whereas for larger $R$ the relaxation is tight and the remaining looseness mainly stems from the max-min upper-bounding step.

Next, we compare Seg+E with the optimal solution.
We set $n=50$ and $R=1{,}000$.
The summary statistics of $\mathrm{MSE}_\mathrm{Seg+E}/\mathrm{MSE}_\mathrm{OPT}$ for both the original and relaxed settings are reported in \cref{tab:seg_e} as \textit{Empirical Performance (Ratio)}.
We observed that, under both the original and relaxed settings, the exact Seg+E solution always matched the global optimal solution in all $3{,}000$ cases (6 datasets, 5 poisoning percentages, 100 seeds).
However, in the original setting, we have already found corner cases where the Seg+E solution is not globally optimal.
\cref{fig:counter_example} (along with additional corner cases in \cref{app:observations_and_conjectures}) shows one such example.
Here, the optimal solution is $\mathcal{P}=\{8,56\}$, consisting of two isolated points, which is not Seg+E and has larger MSE than the exact Seg+E solution.
Thus, while there exist rare corner cases where Seg+E is suboptimal, it almost always coincides with the global optimum in practice.

\subsection{\Rone{Impact of Poisoning on Lookup Time}}
\label{sec:experiment:impact_of_poisoning_on_lookup_time}

\Rone{
To quantify the practical slowdown caused by poisoning in linear models of learned indexes, we measure its impact on the \emph{lookup time}.
We define lookup time as the time from receiving a query key to obtaining its position in the sorted array, i.e., the total time for linear-regression inference and an exponential search around the predicted position.
We query all legitimate keys, repeat the measurement 10 times per key, and report the average.
We set $n=1{,}000$ and $R=100{,}000$.
As a reference, we include a random-poisoning baseline, where poisons are sampled uniformly from the key domain, and we also report the time of \texttt{std::lower\_bound}.
}

\Rone{
\Cref{fig:lookuptime_percentage} reports lookup time before and after poisoning.
We find that poisoning increases lookup time by up to $1.6\times$ at 20\% poisoning.
By contrast, randomly chosen poisons barely affect lookup time, indicating that poisons generated by Greedy~\cite{kornaropoulos2022price} or our Seg+E algorithm increase the lookup time effectively.
These results quantify how poisoning can slow down the leaf-model lookup routine by degrading the model accuracy, suggesting that poisoning can measurably increase the end-to-end lookup time in learned indexes.
}

\section{\Rone{Discussion}}
\label{sec:discussion}

\Rone{\textbf{On Defending Against Poisoning Attacks.}
Defending linear regression over the CDF against poisoning attacks is challenging, as also discussed in \cite{kornaropoulos2022price}.}
\Rtwo{In our experiments, adding only 4\% poison points increased the MSE by up to 10$\times$, suggesting that simply tracking the cardinality of the key set is unlikely to be sufficient for reliable poisoning detection.}
\Rone{Furthermore, poisons are not outliers in this setting: they lie within the domain and are placed adjacent to legitimate keys, making them hard to detect or mitigate with robust regression methods such as Huber regression~\cite{huber1992robust} and RANSAC~\cite{fischler1981random}.
Developing defenses tailored to regression over the CDF remains an important direction for future work.}

\Rone{\textbf{On Using the Upper Bound for Defense.}
Our upper bound provides a worst-case guarantee that the poisoning impact never exceeds the bound.
For example, if an index must operate within a tolerance of at most a $10\times$ increase in MSE, a defender can estimate how many additional keys can be inserted while ensuring that the MSE provably remains within this range.}

\Rone{\textbf{On Using the Upper Bound for Attack.}}
\Rone{Computing the upper bound is substantially faster than the greedy attack (\cref{sec:experiment:runningtime}), enabling several downstream uses.
First, an attacker can use the bound to cheaply assess candidate solutions: one can run the upper-bound method in parallel with the greedy attack and stop early when the gap is small; otherwise, switch to a more sophisticated (and potentially more expensive) attack algorithm to approach the optimum.
Second, the bound can serve as a proxy for the post-attack MSE in higher-level attack-generation procedures.
For example, the RMI attack of \citet{kornaropoulos2022price} performs hill climbing by repeatedly evaluating the post-attack MSE of each linear model; replacing these evaluations with our bound can substantially reduce the computational cost.}

\Rthree{\textbf{On Empirical Observations and Conjectures.}
Based on extensive experiments and analyses, we report several empirical observations and formulate key conjectures (see Appendix~\ref{app:observations_and_conjectures} for details).
First, we observe that optimal poisons tend to concentrate near the endpoints or near locations where the regression line intersects the point sequence induced by the legitimate keys (\cref{observ:1});
\Cref{app:observations_and_conjectures} also provides an intuitive mathematical explanation for this phenomenon.
As observed in \cref{sec:experiment:upper_bound_vs_greedy_brute_force}, Seg+E matches the optimum in many cases, while \cref{fig:counter_example} shows intentionally constructed corner cases where Seg+E is not optimal (\cref{observ:2}).
Moreover, in every instance we tested, Seg+E is optimal in the relaxed setting; we leave as a conjecture that this holds in general (\cref{con:2}).
We also propose an auxiliary conjecture (\cref{con:3}) which, if proven, would imply \cref{con:2}.}

\Rtwo{\textbf{On Non-Integer Keys.}
Although we state the problem for integer keys, the same analyses can be extended to non-integer keys.
For ordered non-integer keys (e.g., strings), we can apply an order-preserving bijection to map them to a contiguous integer range.
For multi-dimensional keys, as in prior spatial learned indexes~\cite{wang2019learned,wang2020spatial}, we can map them to one-dimensional integer keys via space-filling curves.
Training and poisoning are performed on the resulting integers, and all of our analyses apply directly.}

\Rone{\textbf{On the Implications for Learned Indexes.}
Directly applying our analysis to learned indexes (e.g., RMI~\cite{kraska2018case_li}) is not feasible because they are hierarchical and nonlinear.
However, typical learned indexes rely on a large number of linear regression models.
In such cases, our analysis can be applied to obtain (near-)optimal attacks or upper bounds for each linear regression model.
Moreover, motivated by the performance of our Seg+E technique, we conjecture that for some models, a global worst-case attack influences only a small part of the model.}

\Rthree{\textbf{On Dynamic Settings.}
Our contributions may also offer insights into dynamic settings, where the key set evolves through updates and an attacker injects poisons sequentially to maximize cumulative loss.
A key challenge is to quantify the gap between online attacks (which choose poisons without knowing future updates) and offline attacks (which plan poisons with full knowledge of the update sequence), and to understand how defenses should adapt.
Our upper bound enables lightweight evaluation of the worst-case impact at each update step.
Moreover, the structure of our Seg+E approach may inspire simple heuristics for online poison selection, and it may help defenders design triggers for poisoning detection and retraining.}

\section{Limitations and Future Work}
\label{sec:limitations_and_future_work}

One limitation of our theoretical guarantees is that we were unable to prove the convexity of the loss function in single-point poisoning attacks.
While experimental results consistently suggest that this convexity holds, we could not provide a formal proof.
Although our current findings are sufficient to ensure that the single-point poisoning attack can efficiently lead to the optimal solution, proving this convexity would be a valuable theoretical insight.

Additionally, developing methods to compute tighter bounds remains an important direction for future research.
We observed that the gap between our upper bound and the greedy solution can sometimes be relatively large.
Part of this gap reflects the difference between the greedy method and the true optimum, but some of it stems from the looseness of our bound itself (as discussed in \cref{sec:experiment:upper_bound_vs_greedy_brute_force}).
This looseness stems from two approximations made in the derivation of the bound:
(i) allowing duplicate poison keys, and
(ii) swapping the order of $\min_{w}$ and $\max_{\bm{d}}$ in \cref{eq:upper_bound_relaxed_poisoning_problem}.
Avoiding these approximations and designing methods to obtain tighter bounds remain important topics for future work.

Another important future direction is to extend our analysis beyond linear regression.
We deliberately focused on linear regression, as it is the simplest and most widely used leaf model in learned indexes.
This restriction enabled a clean theoretical foundation and clear insights into adversarial robustness, but it also limits generality.
A natural next step is to broaden the framework to nonlinear regression models (e.g., higher-order polynomials or neural networks) and to multi-stage learned indexes.

% \Rthree{Finally, extending our study to dynamic settings, where legitimate keys can be inserted or deleted over time, is an important direction for future work.
% In such scenarios, an attacker may need to place poison keys online, without knowing future updates, aiming to maximize the cumulative loss over the evolving key set.
% A promising direction is to evaluate, both empirically and theoretically, the gap between the online attack and the offline optimum, i.e., the best attack computed with full knowledge of the entire update sequence.
% Another key question is how defenses should adapt under continuous updates.}

\section{Conclusion}
\label{sec:conclusion}

In this work, we provided the first theoretical advancements on single-point and multi-point poisoning attacks against linear regression models trained on CDFs.
For single-point attacks, we proved that a previously proposed heuristic---whose optimality had remained unclear---is indeed guaranteed to yield the optimal solution.
For multi-point attacks, we derived the key structural properties of the optimal attack, proposed methods to guarantee an upper bound on the achievable loss, and proposed algorithms to find exact and heuristic Seg+E solutions efficiently.
Through experiments, we confirmed that the greedy poisoning method achieves near-optimal attacks, and at the same time, our methods tightly upper-bound the actual attack impact.
Overall, our contributions advance the theoretical understanding of linear regression models on CDFs, providing a foundation for analyzing both the vulnerabilities and robustness of learned indexes under adversarial conditions.

\clearpage
%%
%% The next two lines define the bibliography style to be used, and
%% the bibliography file.
\bibliographystyle{ACM-Reference-Format}
\bibliography{
    bib/attack,
    bib/classic_index,
    bib/learned_data_structures,
    bib/learned_index,
    bib/learned_sort,
    bib/others
}

%%
%% If your work has an appendix, this is the place to put it.
\clearpage
\appendix

\section{Proof of the Lemma on Single-Point Poisoning Attack}
\label{app:proof_of_lem_single_point_poisoning_attack}

Here, we prove \cref{lem:single_point_poisoning_attack}.
\begin{proof}[Proof of \cref{lem:single_point_poisoning_attack}]
From the closed-form solution of \cref{def:linear_regression_on_cdfs}, $E(\mathcal{X})$ can be expressed as follows:
\begin{equation}
    E(\mathcal{X}) = \mathrm{Var}_\mathrm{R} - \frac{{\mathrm{Cov}_\mathrm{XR}}^2}{\mathrm{Var}_\mathrm{X}}.
\end{equation}
Therefore,
\begin{align}
    \frac{d E(\mathcal{X})}{dx_i} &= - \frac{2 \mathrm{Cov}_\mathrm{XR} \frac{d \mathrm{Cov}_\mathrm{XR}}{dx_i} \mathrm{Var}_\mathrm{X} - \mathrm{Cov}_\mathrm{XR}^2 \frac{d \mathrm{Var}_\mathrm{X}}{dx_i}}{{\mathrm{Var}_\mathrm{X}}^2} \\
    &= \frac{\mathrm{Cov}_\mathrm{XR}}{{\mathrm{Var}_\mathrm{X}}^2} \left( - 2 \frac{d \mathrm{Cov}_\mathrm{XR}}{dx_i} \mathrm{Var}_\mathrm{X} + \mathrm{Cov}_\mathrm{XR} \frac{d \mathrm{Var}_\mathrm{X}}{dx_i} \right).
\end{align}
Now, let $g_i(\mathcal{X})$ be defined as follows:
\begin{equation}
\label{eq:g_i_x}
    g_i(\mathcal{X}) = - 2 \frac{d \mathrm{Cov}_\mathrm{XR}}{dx_i} \mathrm{Var}_\mathrm{X} + \mathrm{Cov}_\mathrm{XR} \frac{d \mathrm{Var}_\mathrm{X}}{dx_i}.
\end{equation}
Since $\mathrm{Cov}_\mathrm{XR} > 0$ and $\mathrm{Var}_\mathrm{X} > 0$,
\begin{equation}
\label{eq:sign_of_d_e_x_i}
    \mathrm{sign}\left(\frac{d E(\mathcal{X})}{dx_i}\right) = \mathrm{sign}(g_i(\mathcal{X})).
\end{equation}
Taking the derivative of \cref{eq:g_i_x} with respect to $x_i$, we get:
\begin{align}
    \frac{d g_i}{dx_i} (\mathcal{X}) &= - 2 \frac{d^2 \mathrm{Cov}_\mathrm{XR}}{{dx_i}^2} \mathrm{Var}_\mathrm{X} - \frac{d \mathrm{Cov}_\mathrm{XR}}{dx_i} \frac{d \mathrm{Var}_\mathrm{X}}{dx_i} + \mathrm{Cov}_\mathrm{XR} \frac{d^2 \mathrm{Var}_\mathrm{X}}{{dx_i}^2} \\
    &= - \frac{d \mathrm{Cov}_\mathrm{XR}}{dx_i} \frac{d \mathrm{Var}_\mathrm{X}}{dx_i} + \mathrm{Cov}_\mathrm{XR} \frac{d^2 \mathrm{Var}_\mathrm{X}}{{dx_i}^2}.
\end{align}
Here, we use the fact that $\mathrm{Cov}_\mathrm{XR}$ does not contain ${x_i}^2$ terms.
Now, since
\begin{align}
    \frac{d \mathrm{Cov}_\mathrm{XR}}{dx_i} &= \frac{d}{dx_i} \left( \frac{1}{n} \sum_{j=1}^{n} x_j j - \bar{x} \bar{r} \right)
    = \frac{1}{n} \left( i - \bar{r} \right), \\
    \frac{d \mathrm{Var}_\mathrm{X}}{dx_i} &= \frac{d}{dx_i} \left(\left( \frac{1}{n} \sum_{j=1}^{n} {x_j}^2 \right) - \bar{x}^2 \right)
    = \frac{2}{n} \left( x_i - \bar{x} \right). \\ 
    \frac{d^2 \mathrm{Var}_\mathrm{X}}{{dx_i}^2} &= \frac{d}{dx_i} \left(\frac{2}{n} \left( x_i - \bar{x} \right)\right)
    = \frac{2}{n} \left( 1 - \frac{1}{n} \right),
\end{align}
we can rewrite $\frac{d g_i}{dx_i} (\mathcal{X})$ as follows:
\begin{align}
    \frac{d g_i}{dx_i} (\mathcal{X}) &= - \frac{1}{n} \left( i - \bar{r} \right) \frac{2}{n} \left( x_i - \bar{x} \right) + \mathrm{Cov}_\mathrm{XR} \frac{2}{n} \left( 1 - \frac{1}{n} \right) \\
    &= \frac{2}{n^2} \left( - (i - \bar{r})(x_i - \bar{x}) + (n - 1) \mathrm{Cov}_\mathrm{XR} \right). \label{eq:dg_i_dx_i}
\end{align}
Now, let $\bm{x}' \in \mathbb{R}^{n-1}$ and $\bm{r}' \in \mathbb{N}^{n-1}$ be defined as follows:
\begin{align}
    \bm{x}' &\coloneq [x_1, x_2, \dots, x_{i-1}, x_{i+1}, \dots, x_n]\\
    \bm{r}' &\coloneq [1, 2, \dots, i-1, i+1, \dots, n].
\end{align}
Since $\bm{x}'$ and $\bm{r}'$ are both strictly monotonically increasing, $\mathrm{Cov}_\mathrm{X'R'}$ is positive, i.e.,
\begin{equation}
    \label{eq:cov_x'r'_is_positive}
    \mathrm{Cov}_\mathrm{X'R'} = \frac{1}{n-1} \sum_{j=1}^{n-1} x'_j r'_j - \left( \frac{1}{n-1} \sum_{j=1}^{n-1} x'_j \right) \left( \frac{1}{n-1} \sum_{j=1}^{n-1} r'_j \right) > 0.
\end{equation}
Now, since
\begin{align}
    \sum_{j=1}^{n-1} x'_j r'_j
&= \sum_{j=1}^{n} x_j j - x_i i
= n \left( \frac{1}{n} \sum_{j=1}^{n} x_j j\right) - x_i i \\
&= n \left( \mathrm{Cov}_\mathrm{XR} + \bar{x} \bar{r} \right) - x_i i. \\
\sum_{j=1}^{n-1} x'_j &= n \bar{x} - x_i,\\
\sum_{j=1}^{n-1} r'_j &= n \bar{r} - i,
\end{align}
we can rewrite $\mathrm{Cov}_\mathrm{X'R'}$ as follows:
\begin{align}
    \mathrm{Cov}_\mathrm{X'R'} &= \frac{1}{n-1} \left( n (\mathrm{Cov}_\mathrm{XR} + \bar{x} \bar{r}) - x_i i \right) - \left( \frac{n \bar{x} - x_i}{n-1} \right) \left( \frac{n \bar{r} - i}{n-1} \right) \\
    &= \frac{n}{(n-1)^2} \Big[(n-1)(\mathrm{Cov}_\mathrm{XR} + \bar{x} \bar{r}) - \frac{n-1}{n} x_i i \notag \\ &\qquad\qquad\qquad - \frac{1}{n} \left( n \bar{x} - x_i \right) \left( n \bar{r} - i \right)\Big] \\
    &= \frac{n}{(n-1)^2} \Big[ - \bar{x} \bar{r} -x_i i + x_i \bar{r} + i \bar{x} + (n-1) \mathrm{Cov}_\mathrm{XR} \Big] \\
    &= \frac{n}{(n-1)^2} \left( - (i - \bar{r})(x_i - \bar{x}) + (n-1) \mathrm{Cov}_\mathrm{XR} \right). \label{eq:con_x'r'}
\end{align}
From \cref{eq:cov_x'r'_is_positive,eq:con_x'r'}, we have
\begin{equation}
    - (i - \bar{r})(x_i - \bar{x}) + (n-1) \mathrm{Cov}_\mathrm{XR} > 0.
\end{equation}
Therefore, from \cref{eq:dg_i_dx_i}, we have
\begin{equation}
    \frac{d g_i}{dx_i} (\mathcal{X}) > 0.
\end{equation}
Therefore, $g_i(\mathcal{X})$ is strictly monotonically increasing with respect to $x_i$.
From \cref{eq:sign_of_d_e_x_i}, $\mathrm{sign}\left(\frac{d E(\mathcal{X})}{dx_i}\right)$ is non-decreasing with respect to $x_i$, and there is at most one $x_i$ such that $\frac{d E(\mathcal{X})}{dx_i}=0$.
\end{proof}

\section{Proof of Theorem and Lemma on Multi-Point Poisoning Attack}
\label{app:proof_of_thm_structure_of_optimal_multi_point_attack}

To prove \cref{thm:structure_of_optimal_multi_point_attack}, we first establish the following lemma.
\begin{theorem_box}
\begin{lemma}
\label{lem:multi_point_poisoning_attack}
    Let $\mathcal{X}$ be the set of keys stored in the index, with $m = |\mathcal{X}| \geq 3$, and let $x_1 \leq x_2 \leq \dots \leq x_m$ be the elements of $\mathcal{X}$ in increasing order.
    For $2 \leq i \leq j \leq m - 1$ and $\epsilon \in (x_{i-1} - x_i, x_{j+1} - x_j)$, define:
    \begin{equation}
        \mathcal{X}_{i,j,\epsilon} = \{x_1, x_2, \dots, x_{i-1}, x_i + \epsilon, x_{i+1} + \epsilon, \dots, x_j + \epsilon, x_{j+1}, \dots, x_m\}.
    \end{equation}
    Then, $\mathrm{sign}\left(\frac{d E(\mathcal{X}_{i,j,\epsilon})}{d\epsilon}\right)$ is non-decreasing over $\epsilon \in (x_{i-1} - x_i, x_{j+1} - x_j)$, and there exists at most one value of $\epsilon$ in this interval for which $\frac{d E(\mathcal{X}_{i,j,\epsilon})}{d\epsilon} = 0$.
\end{lemma}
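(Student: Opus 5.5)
The plan is to repeat the argument of \cref{lem:single_point_poisoning_attack}, with the single coordinate $x_i$ replaced by the block shift $\epsilon$. For $\epsilon$ in the open interval $(x_{i-1}-x_i,\,x_{j+1}-x_j)$ the sorted order of $\mathcal{X}_{i,j,\epsilon}$ does not change. Hence every rank $r_l=l$ stays fixed, and only the $x$-coordinates indexed by $S\coloneq\{i,\dots,j\}$ move, each at unit speed. Write $s\coloneq|S|$, with $1\le s\le m-2$. By \cref{eq:closed_form_solution}, $E=\mathrm{Var}_\mathrm{R}-\mathrm{Cov}_\mathrm{XR}^2/\mathrm{Var}_\mathrm{X}$, and $\mathrm{Var}_\mathrm{R}$ does not depend on $\epsilon$. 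As in \cref{app:proof_of_lem_single_point_poisoning_attack}, I will write
\begin{equation*}
\frac{dE}{d\epsilon}=\frac{\mathrm{Cov}_\mathrm{XR}}{\mathrm{Var}_\mathrm{X}^2}\,g(\epsilon),\qquad g(\epsilon)\coloneq-2\,\mathrm{Cov}_\mathrm{XR}'\,\mathrm{Var}_\mathrm{X}+\mathrm{Cov}_\mathrm{XR}\,\mathrm{Var}_\mathrm{X}',
\end{equation*}
where primes denote $d/d\epsilon$. Since the $x$'s are nondecreasing, the ranks are strictly increasing and $x_1<x_m$, we have $\mathrm{Cov}_\mathrm{XR}>0$ and $\mathrm{Var}_\mathrm{X}>0$. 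So $\mathrm{sign}(dE/d\epsilon)=\mathrm{sign}(g)$, and it suffices to show $g'(\epsilon)>0$ on the whole interval.

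\textbf{Computing $g'$.} $\mathrm{Cov}_\mathrm{XR}$ is affine in $\epsilon$, so $\mathrm{Cov}_\mathrm{XR}''=0$ and
\begin{equation*}
g'=-\mathrm{Cov}_\mathrm{XR}'\,\mathrm{Var}_\mathrm{X}'+\mathrm{Cov}_\mathrm{XR}\,\mathrm{Var}_\mathrm{X}''.
\end{equation*}
The three derivatives are
\begin{equation*}
\mathrm{Cov}_\mathrm{XR}'=\tfrac1m\textstyle\sum_{l\in S}(l-\bar r),\qquad \mathrm{Var}_\mathrm{X}'=\tfrac2m\sum_{l\in S}(x_l+\epsilon\,[l\in S]-\bar x),\qquad \mathrm{Var}_\mathrm{X}''=\tfrac{2s(m-s)}{m^2}.
\end{equation*}
Let $T$ be the complement of $S$. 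Let $\bar x_S,\bar r_S$ and $\bar x_T,\bar r_T$ be the group means, and set $\Delta_x=\bar x_S-\bar x_T$ and $\Delta_r=\bar r_S-\bar r_T$. Then $\sum_{l\in S}(l-\bar r)=\tfrac{s(m-s)}{m}\Delta_r$, and likewise for the $x$-sum. Therefore
\begin{equation*}
g'=\tfrac{2s(m-s)}{m^2}\Bigl(\mathrm{Cov}_\mathrm{XR}-\tfrac{s(m-s)}{m^2}\Delta_x\Delta_r\Bigr).
\end{equation*}

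\textbf{Key step (the analogue of $\mathrm{Cov}_\mathrm{X'R'}>0$).} Instead of deleting one point as in \cref{eq:con_x'r'}, I will use the within/between decomposition of the covariance of a two-group mixture:
\begin{equation*}
m\,\mathrm{Cov}_\mathrm{XR}=s\,\mathrm{Cov}_S+(m-s)\,\mathrm{Cov}_T+\tfrac{s(m-s)}{m}\Delta_x\Delta_r.
\end{equation*}
Substituting this into the expression for $g'$, the between-group term cancels exactly. This leaves
\begin{equation*}
g'=\tfrac{2s(m-s)}{m^3}\bigl(s\,\mathrm{Cov}_S+(m-s)\,\mathrm{Cov}_T\bigr).
\end{equation*}
Within each group, the (shifted) $x$-values are nondecreasing and the ranks are strictly increasing, so $\mathrm{Cov}_S\ge0$ and $\mathrm{Cov}_T\ge0$. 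Moreover $T$ contains both $x_1$ and $x_m$, so $\mathrm{Cov}_T>0$ provided the $x$-values in $T$ are not all equal. That holds because $x_1<x_m$ (for sets; this is where $x_1\neq x_m$ from \cref{def:linear_regression_on_cdfs} is used). Hence $g'>0$.

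\textbf{Conclusion and main obstacle.} Since $g$ is strictly increasing on the interval, it has at most one zero there, and $\mathrm{sign}(g)=\mathrm{sign}(dE/d\epsilon)$ is non-decreasing, which proves the lemma. The main obstacle is getting the algebra of $g'$ into a form whose sign is evident. The remedy is the group decomposition above, which plays the role that $\bm{x}',\bm{r}'$ played in \cref{lem:single_point_poisoning_attack}. One must also check that the cancellation of the $\Delta_x\Delta_r$ term is exact, and that strict positivity of $\mathrm{Cov}_T$ survives degenerate cases.
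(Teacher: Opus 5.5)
Your proof is correct and follows the paper's argument. The paper likewise reduces the claim to $g'(\epsilon)>0$, with $g'=-\mathrm{Cov}_\mathrm{XR}'\mathrm{Var}_\mathrm{X}'+\mathrm{Cov}_\mathrm{XR}\mathrm{Var}_\mathrm{X}''$, and then rewrites this via the identity expressing the complement covariance $\mathrm{Cov}_\mathrm{X'R'}$ through $\mathrm{Cov}_\mathrm{XR}$, the mean offsets, and the block covariance $\mathrm{Cov}_\mathrm{X''R''}$; that identity is exactly your within/between decomposition, rearranged.

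Your handling of strictness is actually more careful than the paper's. The paper asserts $\mathrm{Cov}_\mathrm{X''R''}>0$, which fails when $i=j$ because a one-point block has zero covariance. Taking only $\mathrm{Cov}_S\ge 0$ and getting strict positivity from $\mathrm{Cov}_T$ (since $T$ contains indices $1$ and $m$, and $x_1<x_m$) is the right justification.
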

\end{theorem_box}
\begin{proof}[Proof of \cref{lem:multi_point_poisoning_attack}]
Let $\bar{x}(\epsilon)$ be the average of $\mathcal{X}_{i,j,\epsilon}$, $\mathrm{Var}_\mathrm{X}(\epsilon)$ be the variance of $\mathcal{X}_{i,j,\epsilon}$, and $\mathrm{Cov}_\mathrm{XR}(\epsilon)$ be the covariance between $\mathcal{X}_{i,j,\epsilon}$ and its rank.
As in the case of single-point poisoning attacks, we have
\begin{equation}
    E(\mathcal{X}_{i,j,\epsilon})=\mathrm{Var}_\mathrm{R}-\frac{\mathrm{Cov}_\mathrm{XR}(\epsilon)^2}{\mathrm{Var}_\mathrm{X}(\epsilon)}.
\end{equation}
Now, let $g(\epsilon)$ be defined as follows:
\begin{equation}
    g(\epsilon)
    =-2\frac{d\mathrm{Cov}_\mathrm{XR}}{d\epsilon}(\epsilon)\,\mathrm{Var}_X(\epsilon)
        +\mathrm{Cov}_\mathrm{XR}(\epsilon)\,\frac{d\mathrm{Var}_X}{d\epsilon}(\epsilon).
    \label{eq:def_g}
\end{equation}
Then, we have
\begin{equation}
    \frac{dE}{d\epsilon}(\mathcal{X}_{i,j,\epsilon})
    =\frac{\mathrm{Cov}_\mathrm{XR}(\epsilon)}{\mathrm{Var}_X(\epsilon)^2}\,g(\epsilon).
\end{equation}
Since $\mathrm{Cov}_\mathrm{XR}(\epsilon) > 0$ and $\mathrm{Var}_\mathrm{X}(\epsilon) > 0$,
\begin{equation}
\label{eq:dE_d_epsilon}
    \mathrm{sign}\left(\frac{dE}{d\epsilon}(\mathcal{X}_{i,j,\epsilon})\right)=\mathrm{sign}(g(\epsilon)).
\end{equation}

First, we calculate the first and second derivatives of each element with respect to $\epsilon$.
Since $j-i+1$ keys move at the same speed, we have
\begin{align}
    \frac{d\bar{x}}{d\epsilon}(\epsilon)&=\frac{m}{n},\\
    \frac{d\mathrm{Var}_\mathrm{X}}{d\epsilon}(\epsilon)
        &=\frac{2m}{n}\left(\bar{x}_{i:j}+\epsilon-\bar{x}(\epsilon)\right)=:\frac{2m}{n}\Delta_\mathrm{X},\\
    \frac{d^2\mathrm{Var}_\mathrm{X}}{d\epsilon^2}(\epsilon)
        &=\frac{2m}{n}\left(1-\frac{m}{n}\right), \label{eq:d2Var}\\
    \frac{d\mathrm{Cov}_\mathrm{XR}}{d\epsilon}(\epsilon)
        &=\frac{m}{n}\left(\bar{r}_{i:j}-\bar{r}\right)=:\frac{m}{n}\Delta_\mathrm{R},\\
    \frac{d^2\mathrm{Cov}_\mathrm{XR}}{d\epsilon^2}(\epsilon)&=0. \label{eq:d2Cov}
\end{align}

From \cref{eq:def_g,eq:d2Var,eq:d2Cov}, we have
\begin{align}
    \frac{dg}{d\epsilon}(\epsilon)
        &= -\frac{d\mathrm{Cov}_\mathrm{XR}}{d\epsilon}(\epsilon)\frac{d\mathrm{Var}_\mathrm{X}}{d\epsilon}(\epsilon)
            +\mathrm{Cov}_\mathrm{XR}(\epsilon)\frac{d^2\mathrm{Var}_\mathrm{X}}{d\epsilon^2}(\epsilon) \\
        &= -\left(\tfrac{m}{n}\Delta_\mathrm{R}\right)
                \left(\tfrac{2m}{n}\Delta_\mathrm{X}\right)
            +\mathrm{Cov}_\mathrm{XR}(\epsilon)\left(\tfrac{2m}{n}\left(1-\tfrac{m}{n}\right)\right) \\
        &=\frac{2m}{n^2}\left(
                -m\Delta_\mathrm{R}\Delta_\mathrm{X}
                +\left(n-m\right)\mathrm{Cov}_\mathrm{XR}(\epsilon)
            \right).
    \label{eq:dgdeps}
\end{align}

Now, let $\bm{x}' \in \mathbb{R}^{n-m}$ and $\bm{r}' \in \mathbb{N}^{n-m}$ be the vectors obtained by removing the $i$th to $j$th elements from $\bm{x}$ and $\bm{r}$, respectively, and let $\bm{x}'' \in \mathbb{R}^{m}$ and $\bm{r}'' \in \mathbb{N}^{m}$ be the vectors obtained by extracting the $i$th to $j$th elements from $\bm{x}$ and $\bm{r}$, respectively.
That is,
\begin{align}
    \bm{x}'&=(x_1,\dots,x_{i-1},x_{j+1},\dots,x_n), \quad \bm{x}''=(x_i + \epsilon,\dots,x_j + \epsilon),\\
    \bm{r}'&=(1,\dots,i-1,j+1,\dots,n), \quad \bm{r}''=(i,\dots,j).
\end{align}
Since $\bm{x}'$, $\bm{r}'$, $\bm{x}''$, $\bm{r}''$ are all strictly monotonically increasing, we have
\begin{equation}
\label{eq:cov_positive}
    \mathrm{Cov}_\mathrm{X'R'}>0,\quad \mathrm{Cov}_\mathrm{X''R''}>0.
\end{equation}
Now, we can calculate $\mathrm{Cov}_\mathrm{X'R'}$ as follows:
\begin{align}
\label{eq:cov_expand}
    \mathrm{Cov}_\mathrm{X'R'} &= \frac{n}{(n-m)^2} \left(
        -m\Delta_\mathrm{R}\Delta_\mathrm{X}
        +\left(n-m\right)\mathrm{Cov}_\mathrm{XR}(\epsilon)
    \right) \notag \\
    &\qquad - \frac{m}{n-m} \mathrm{Cov}_\mathrm{X''R''}.
\end{align}

From \cref{eq:cov_positive,eq:cov_expand,eq:dgdeps}, we have
\begin{equation}
    \frac{dg}{d\epsilon}(\epsilon) = \frac{2m}{n^2} \frac{(n-m)^2}{n} \left( \mathrm{Cov}_\mathrm{X'R'} + \frac{m}{n-m} \mathrm{Cov}_\mathrm{X''R''} \right) > 0.
\end{equation}
Therefore, $g(\epsilon)$ is strictly monotonically increasing with respect to $\epsilon$.
From \cref{eq:dE_d_epsilon}, $\mathrm{sign}(\frac{dE(\epsilon)}{d\epsilon})$ is non-decreasing with respect to $\epsilon$, and there is at most one $\epsilon$ such that $\frac{dE(\epsilon)}{d\epsilon}=0$.
\end{proof}

Based on this lemma, we now prove \cref{thm:structure_of_optimal_multi_point_attack}:
\begin{proof}[Proof of \cref{thm:structure_of_optimal_multi_point_attack}]
The case where $\mathcal{P}^\ast = \varnothing$ trivially satisfies the proposition, so we focus on the case where $\mathcal{P}^\ast \neq \varnothing$.
To facilitate comprehension of the proof structure, an illustrative figure is presented in \cref{fig:multi_point_theorem_vis}.
Assume, for the sake of contradiction, that there exists an optimal multi-point attack $\mathcal{P}^\ast \neq \varnothing$ such that:
\begin{equation}
    \exists p \in \mathcal{P}^\ast, ~ \forall k \in \mathcal{K}, ~ \{\min(p,k)+1, \dots, \max(p,k)-1\} \not\subset \mathcal{P}^\ast.
\end{equation}

We define a set of consecutive integers ${l, l + 1, \dots, r}$ as an \textit{Isolated Poison Block} (ISB) if it satisfies:
\begin{equation}
    \{l, l+1, \dots, r\} \subset \mathcal{P}^\ast ~~ \land ~~ l-1 \notin \mathcal{K} \cup \mathcal{P}^\ast ~~ \land ~~ r+1 \notin \mathcal{K} \cup \mathcal{P}^\ast.
\end{equation}
Under the contradiction assumption, such an ISB must exist; otherwise, every poison key would be (directly or transitively) connected to a legitimate key, violating the assumption.

Let $\mathcal{P}_\mathrm{ISB} = \{l, l + 1, \dots, r\}$ denote one such ISB in $\mathcal{P}^\ast$.
For $\epsilon \in [-1, 1]$, define $\mathcal{P}(\epsilon)$ by replacing $\mathcal{P}_\mathrm{ISB}$ in $\mathcal{P}^\ast$ with its $\epsilon$-shifted block:
\begin{equation}
    \mathcal{P}(\epsilon) \coloneq \left(\mathcal{P}^\ast \setminus \{l, l+1, \dots, r\} \right) \cup \{l + \epsilon, l+1 + \epsilon, \dots, r + \epsilon\}.
\end{equation}
In particular, we define $\mathcal{P}_{-}$ and $\mathcal{P}_{+}$ as follows:
\begin{align}
    \mathcal{P}_{-} &\coloneq \mathcal{P}(-1) = \left(\mathcal{P}^\ast \setminus \{r\}\right) \cup \{l-1\},\\
    \mathcal{P}_{+} &\coloneq \mathcal{P}(1) = \left(\mathcal{P}^\ast \setminus \{l\}\right) \cup \{r+1\}.
\end{align}
Note that by the definition of ISB, we have $l - 1 \notin \mathcal{K} \cup \mathcal{P}^\ast$ and $r + 1 \notin \mathcal{K} \cup \mathcal{P}^\ast$.
Since $\mathcal{P}^\ast$ is assumed to be an optimal multi-point attack, we must have:
\begin{equation}
\label{eq:multi_point_from_assumption}
    E(\mathcal{K} \cup \mathcal{P}_{-}) \leq E(\mathcal{K} \cup \mathcal{P}^\ast) ~~ \land ~~ E(\mathcal{K} \cup \mathcal{P}^\ast) \geq E(\mathcal{K} \cup \mathcal{P}_{+}).
\end{equation}

On the other hand, this leads to a contradiction with \cref{lem:multi_point_poisoning_attack}.
According to \cref{lem:multi_point_poisoning_attack}, over the interval $\epsilon \in [-1, 1]$, the function $\mathrm{sign}\left(\frac{d E(\mathcal{K} \cup \mathcal{P}(\epsilon))}{d\epsilon}\right)$ is non-decreasing, and there exists at most one point $\epsilon$ where $\frac{d E(\mathcal{K} \cup \mathcal{P}(\epsilon))}{d\epsilon} = 0$.
Therefore, the behavior of $E(\mathcal{K} \cup \mathcal{P}(\epsilon))$ over $\epsilon \in [-1, 1]$ falls into one of three cases:
(i) strictly increasing,
(ii) strictly decreasing, or
(iii) strictly decreasing over $[-1, \epsilon')$ and strictly increasing over $(\epsilon', 1]$ for some $\epsilon' \in [-1, 1]$.
Noting that $\mathcal{P}^\ast = \mathcal{P}(0)$, $\mathcal{P}_{-} = \mathcal{P}(-1)$, and $\mathcal{P}_{+} = \mathcal{P}(1)$, it follows that in all cases:
\begin{equation}
    \label{eq:multi_point_from_lemma}
    E(\mathcal{K} \cup \mathcal{P}_{-}) > E(\mathcal{K} \cup \mathcal{P}^\ast) ~~ \lor ~~ E(\mathcal{K} \cup \mathcal{P}^\ast) < E(\mathcal{K} \cup \mathcal{P}_{+}).
\end{equation}

Since \cref{eq:multi_point_from_assumption} and \cref{eq:multi_point_from_lemma} are contradictory, we conclude that the assumption is false.
\end{proof}

\section{Proof of the Upper Bound}
\label{app:proof_of_upper_bound}

Here, we prove \cref{thm:d_saturation,thm:convexity_of_min_b_l_fixed_d,thm:optimal_c_for_fixed_w}.
First, we introduce the notation.
Let $c_i$ be the number of $k_i$ in the key (multi-set) set $\mathcal{K} \uplus \mathcal{Q}_\mathcal{K}(\bm{d})$ after poisoning (i.e., $c_i = d_i + 1$).
Let $\lambda' \coloneq \sum_{i=1}^{n} d_i$.
Then, $\sum_{i=1}^{n} c_i = n + \lambda' \leq n + \lambda$ holds.
Let $\bm{k}' \in \mathbb{N}^{n+\lambda'}$ be the vector obtained by sorting the elements of $\mathcal{K} \uplus \mathcal{Q}_\mathcal{K}(\bm{d})$ in ascending order.
Also, for convenience, let $\bm{r}' \in \mathbb{N}^{n+\lambda'}$ be the vector such that $r'_i = i ~ (i\in\{1,2,\dots,n+\lambda'\})$.

First, we prove the following lemma:
\begin{theorem_box}
\begin{lemma}
\label{lem:d_saturation_lemma}
Let $E(\mathcal{K} \uplus \mathcal{Q}_\mathcal{K}(\bm{d}))$ denote the objective function defined in the Relaxed Poisoning Problem (\cref{def:relaxed_poisoning_problem}). For any $\bm{d} \in \mathbb{Z}_{\ge 0}^n$, there exists some $i \in \{1,2,\dots,n\}$ such that
\begin{equation}
    E(\mathcal{K} \uplus \mathcal{Q}_\mathcal{K}(\bm{d} + \bm{e}_i)) > E(\mathcal{K} \uplus \mathcal{Q}_\mathcal{K}(\bm{d})).
\end{equation}
\end{lemma}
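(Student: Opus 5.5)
The plan is to compare $E$ before and after one extra copy of a key by the standard least-squares update formula. The two cheapest insertions are at the endpoints $k_1$ and $k_n$, so I start there and hope to conclude that one of them always works.

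\textbf{Setup.} Write $\mathcal{X}=\mathcal{K}\uplus\mathcal{Q}_\mathcal{K}(\bm{d})$, $M=n+\lambda'$, and let $(w^\ast,b^\ast)$ be the optimal line for $\mathcal{X}$. Let $S=M\,E(\mathcal{X})$ be the residual sum of squares, and define the residuals $e_t=w^\ast x_t+b^\ast-t$.

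\textbf{Step 1: both endpoint insertions are ``one extra point''.}
\begin{itemize}
\item Adding a copy of $k_n$ appends the point $(k_n,M+1)$ and leaves every other rank unchanged.
\item Adding a copy of $k_1$ shifts every old rank by $+1$ and puts the new point at rank $1$. The shift is absorbed into $b$, so this is the same as appending the point $(k_1,0)$ to the old data.
\end{itemize}
In both cases the Sherman--Morrison update for ordinary least squares with intercept gives
\[
S_{\mathrm{new}} = S + \frac{e^2}{1+h}, \qquad h=\frac{1}{M}+\frac{(x-\bar{x})^2}{M\,\mathrm{Var}_\mathrm{X}},
\]
where $e$ is the residual of the appended point with respect to $(w^\ast,b^\ast)$. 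The two residuals are
\[
e_L=w^\ast k_1+b^\ast, \qquad e_R=M+1-w^\ast k_n-b^\ast .
\]
Since $E_{\mathrm{new}}=S_{\mathrm{new}}/(M+1)$, the condition $E_{\mathrm{new}}>E$ is equivalent to
\[
\frac{e^2}{1+h}>\frac{S}{M}=E .
\]
So it suffices to prove that
\[
\max\!\left(\frac{e_L^2}{1+h_1},\frac{e_R^2}{1+h_n}\right)>E .
\]

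\textbf{Step 2: relate the endpoint residuals to $E$.} Both endpoint leverages satisfy $h\le 1$, and this should hold with room to spare because both endpoints are present. The old endpoint residuals are $e_1=e_L-1$ and $e_M=e_R-1$ when read against the old ranks $1$ and $M$. Because the ranks form a step function above the data and the fitted line crosses it, I expect $e_1\ge 0$-ish and $e_M\le 0$-ish. That would make $|e_L|$ and $|e_R|$ at least of the order of the typical residual plus one.

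To make this precise, I would use the identity $\sum_t e_t=0$ together with the normal equation $\sum_t e_t(x_t-\bar{x})=0$. The aim is to show that a convex combination of $e_L^2/(1+h_1)$ and $e_R^2/(1+h_n)$, weighted by quantities like $(\bar{x}-k_1)$ and $(k_n-\bar{x})$, dominates $\tfrac{1}{M}\sum_t e_t^2$. The monotonicity facts available here are the same ones used for $\mathrm{Cov}_{\mathrm{X'R'}}>0$ in the proof of \cref{lem:single_point_poisoning_attack}: ranks are strictly increasing and keys are non-decreasing. From these I expect $|e_t|$ to be bounded by the endpoint residuals up to the unit step, roughly $e_1 - e_t \ge -(t-1) + w^\ast(x_t-k_1)$ and similarly from the right.

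\textbf{Step 3 (fallback).} If the endpoint-only inequality fails in some regime, the likely case is $w^\ast(k_n-k_1)\approx M+1$, where both $e_L$ and $e_R$ are small. Then I would add interior candidates $i$, with the goal of showing
\[
\sum_i \alpha_i\,\bigl(E(\mathcal{X}\uplus\{k_i\})-E(\mathcal{X})\bigr)>0
\]
for suitable nonnegative weights $\alpha_i$. The natural choice is $\alpha_i\propto c_i$, which averages over all positions at which a duplicate could be inserted. The rank-shift argument still writes each interior insertion as ``old data with a block of ranks shifted by one, plus a point'', so the same update formula applies with an extra covariance term.

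\textbf{Main obstacle.} The crux is Step 2: proving strictly that an endpoint residual satisfies $e^2/(1+h)>E$. The identity-based bookkeeping is routine, but controlling $\sum_t e_t^2$ by the endpoint residuals requires genuinely using the integer step structure of the ranks. I would attempt that first via the two normal equations combined with the monotonicity inequalities above.
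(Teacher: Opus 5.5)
Your Step 1 reduction is correct, but Step 2 is false, so the endpoint-only route cannot work; this is a genuine gap, not just the hard part.

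\textbf{Counterexample to Step 2.} Take $\mathcal{K}=\{1,2,3\}$ and $\bm{d}=(0,3,0)$, i.e.\ keys $1,2,2,2,2,3$ with $M=6$.
\begin{itemize}
\item The fitted line (slope $5/2$) passes exactly through $(1,1)$ and $(3,6)$, so both endpoint residuals vanish.
\item The four tied copies of $2$ have residuals $\pm\tfrac12,\pm\tfrac32$, so $S=5$ and $E=5/6$.
\item For either endpoint insertion, $e^2=1$ and $h=\tfrac16+\tfrac12=\tfrac23$, so $e^2/(1+h)=\tfrac35<\tfrac56$. Indeed the MSE drops to $4/5$.
\item Only the interior insertion helps: it raises the MSE to $10/7$.
\end{itemize}
With $m$ tied middle copies, the endpoint residuals stay zero and $E=m(m^2-1)/(12(m+2))$ grows like $m^2/12$, while both endpoint ratios stay below $1$. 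So for every $m\ge 4$ neither endpoint insertion increases the loss. This is not a borderline regime: heavy interior ties make the typical residual far larger than $|e_L|$ and $|e_R|$, contrary to your Step 2 heuristic.

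\textbf{What Step 3 is missing.} Everything therefore rests on Step 3, which you only name. The paper does use an averaging argument over all $i$, but with weights $c_iV_{\mathrm{K}i}$, not $c_i$. Here $c_i=d_i+1$ and $V_{\mathrm{K}i}$ is the key variance after inserting $k_i$. The factor $V_{\mathrm{K}i}$ clears the only $i$-dependent denominator:
\[
\Delta_iV_{\mathrm{K}i}=\tfrac{2N+1}{12}V_{\mathrm{K}i}-C_i^2+C^2V_{\mathrm{K}i}/V_{\mathrm{K}},
\]
where $N$ is your $M$. The paper then needs the following ingredients, none of which appears in your proposal.
\begin{itemize}
\item \emph{Rank-shift handling.} Instead of Sherman--Morrison bookkeeping for a shifted block, the paper uses the Gini-type identity $\sum_{i,j}c_ic_j|k_i-k_j|=4NC$. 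This gives the closed-form updates $C_i=\frac{N}{N+1}C+\frac{S_i}{2(N+1)}$ with $S_i=\sum_jc_j|k_i-k_j|$, and $V_{\mathrm{K}i}=\frac{N}{N+1}V_{\mathrm{K}}+\frac{N}{(N+1)^2}(k_i-\bar{k'})^2$.
\item \emph{Weighted sums.} It uses $\sum_ic_iS_i=4NC$ and $\sum_ic_i(k_i-\bar{k'})^2=NV_{\mathrm{K}}$.
\item \emph{A nontrivial bound.} It proves $\sum_ic_iS_i^2\le\frac53N^3V_{\mathrm{K}}$ by splitting $S_i=T_i+U_i$ into left and right parts, using $T_i-U_i=N(k_i-\bar{k'})$, and applying AM--GM to $T_iU_i$.
\item \emph{Closing step.} The leftover $-\frac{2N^2C^2}{(N+1)^2}$ is absorbed by Cauchy--Schwarz, $C^2\le\frac{N^2-1}{12}V_{\mathrm{K}}$. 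This leaves $\frac{N^2V_{\mathrm{K}}}{3(N+1)^2}>0$.
\end{itemize}
Without the $V_{\mathrm{K}i}$ weighting, you give no indication of how the plain average $\sum_ic_i\Delta_i$ would be bounded.
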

\end{theorem_box}
\begin{proof}[Proof of \cref{lem:d_saturation_lemma}]
Let $N \coloneq n + \lambda'$.
Let $V_{\mathrm{K}}$ be the variance of $\bm{k}'$, $V_{\mathrm{R}}$ be the variance of $\bm{r}'$, and $C$ be the covariance between $\bm{k}'$ and $\bm{r}'$.
\begin{align}
    V_{\mathrm{K}} &\coloneq \frac{1}{N} \sum_{i=1}^{N} (k'_i - \bar{k'})^2, \\
    V_{\mathrm{R}} &\coloneq \frac{1}{N} \sum_{i=1}^{N} (r'_i - \bar{r'})^2 = \frac{N^2 - 1}{12}, \\
    C &\coloneq \frac{1}{N} \sum_{i=1}^{N} (k'_i - \bar{k'})(r'_i - \bar{r'}),
\end{align}
where
\begin{equation}
    \bar{k'} \coloneq \frac{1}{N} \sum_{i=1}^{N} k'_i = \frac{1}{N} \sum_{i=1}^{n} c_i k_i, \quad \bar{r'} \coloneq \frac{1}{N} \sum_{i=1}^{N} r'_i = \frac{N + 1}{2}.
\end{equation}
Similarly, let $V_{\mathrm{K}i}$ be the variance of $\bm{k}' + \bm{e}_i$, $V_{\mathrm{R}i}$ be the variance of $\bm{r}' + \bm{e}_i$, and $C_{i}$ be the covariance between $\bm{k}' + \bm{e}_i$ and $\bm{r}' + \bm{e}_i$, where $\bm{e}_i$ is the $i$th standard basis vector.
Note that when $d_i$ is increased by 1, both $\bm{k}'$ and $\bm{r}'$ change.

We prove the following inequality:
\begin{equation}
    \label{eq:delta_v_ki_c_i_positive}
    \sum_{i=1}^{n} \Delta_i \cdot V_{\mathrm{K}i} c_i > 0,
\end{equation}
where
\begin{equation}
    \Delta_i \coloneq E(\mathcal{K} \uplus \mathcal{Q}_\mathcal{K}(\bm{d} + \bm{e}_i)) - E(\mathcal{K} \uplus \mathcal{Q}_\mathcal{K}(\bm{d})).
\end{equation}
Since $c_i > 0$ and $V_{\mathrm{K}i} > 0$, the inequality \cref{eq:delta_v_ki_c_i_positive} implies that there exists some $i \in \{1,2,\dots,n\}$ such that $E(\mathcal{K} \uplus \mathcal{Q}_\mathcal{K}(\bm{d} + \bm{e}_i)) > E(\mathcal{K} \uplus \mathcal{Q}_\mathcal{K}(\bm{d}))$.

Since
\begin{align}
    \Delta_i &= \left(V_{\mathrm{R}i} - \frac{{C_i}^2}{V_{\mathrm{K}i}}\right) - \left(V_{\mathrm{R}} - \frac{C^2}{V_{\mathrm{K}}}\right) \\
&= \frac{2N+1}{12} - \frac{C_i^2}{V_{\mathrm{K}i}} + \frac{C^2}{V_{\mathrm{K}}},
\end{align}
we have
\begin{align}
\label{eq:delta_i_v_ki}
    \Delta_i \cdot V_{\mathrm{K}i} = \frac{2N+1}{12} V_{\mathrm{K}i} - C_i^2 + C^2 \frac{V_{\mathrm{K}i}}{V_{\mathrm{K}}}.
\end{align}

Now, we calculate $V_{\mathrm{K}i}$ and $C_{i}$.
\begin{align}
\label{eq:v_ki}
V_{\mathrm{K}i} &= \frac{N}{N+1} V_{\mathrm{K}} + \frac{N}{(N+1)^2} (k_i - \bar{k'})^2,\\
\label{eq:c_ki}
C_{i} &= \frac{N}{N+1} C + \frac{1}{2(N+1)} S_i,
\end{align}
where
\begin{equation}
\label{eq:s_i}
    S_i \coloneq \sum_{j=1}^{n} c_j |k_i - k_j|.
\end{equation}
From \cref{eq:delta_i_v_ki,eq:v_ki,eq:c_ki}, we have
\begin{align}
    \Delta_i \cdot V_{\mathrm{K}i}
&= \frac{2N+1}{12} V_{\mathrm{K}i} 
- \left( \frac{N}{N+1} C + \frac{1}{2(N+1)} S_i \right)^2 \notag \\
&\qquad + C^2 \left( \frac{N}{N+1} + \frac{N (k_i - \bar{k'})^2}{(N+1)^2 V_{\mathrm{K}}} \right) \\
\label{eq:delta_cov_kr_cov_kr_var_k_expression}
&= \frac{2N+1}{12} V_{\mathrm{K}i} + \frac{NC^2}{(N+1)^2} - \frac{NC}{(N+1)^2} S_i \notag \\
&\qquad - \frac{1}{4(N+1)^2} S_i^2 + \frac{N C^2}{(N+1)^2 V_{\mathrm{K}}} (k_i - \bar{k'})^2.
\end{align}

Now, we calculate (or upper bound) the following four expressions: (i) $\sum_{i=1}^{n} c_i V_{\mathrm{K}i}$, (ii) $\sum_{i=1}^{n} c_i S_i$, (iii) $\sum_{i=1}^{n} c_i S_i^2$, (iv) $\sum_{i=1}^{n} c_i (k_i - \bar{k'})^2$.

\noindent \textbf{(i) $\sum_{i=1}^{n} c_i V_{\mathrm{K}i}$.}
\begin{align}
    \sum_{i=1}^{n} c_i V_{\mathrm{K}i}
&= \sum_{i=1}^{n} c_i \left( \frac{N}{N+1} V_{\mathrm{K}} + \frac{N}{(N+1)^2} (k_i - \bar{k'})^2 \right) \\
&= \frac{N V_{\mathrm{K}}}{N+1} \sum_{i=1}^{n} c_i + \frac{N}{(N+1)^2} \sum_{i=1}^{n} c_i (k_i - \bar{k'})^2 \\
&= \frac{N^2 V_{\mathrm{K}}}{N+1} + \frac{N^2 V_{\mathrm{K}}}{(N+1)^2} \\
&= \frac{N^2 (N + 2)}{(N+1)^2} V_{\mathrm{K}}.
\end{align}

\noindent \textbf{(ii) $\sum_{i=1}^{n} c_i S_i$.}
\begin{align}
    \sum_{i=1}^{n} c_i S_i
&= \sum_{i=1}^{n} \sum_{j=1}^{n} c_i c_j |k_i - k_j| \\
&= 4 N C.
\end{align}

\noindent \textbf{(iii) $\sum_{i=1}^{n} c_i S_i^2$.}
\begin{align}
    \sum_{i=1}^{n} c_i S_i^2
&= \sum_{i=1}^{n} c_i \left( \sum_{j=1}^{n} c_j |k_i - k_j| \right)^2 \\
&= \sum_{i=1}^{n} c_i \left( \sum_{j=1}^{i} c_j (k_i - k_j) + \sum_{j=i+1}^{n} c_j (k_j - k_i) \right)^2 \\
&= \sum_{i=1}^{n} c_i \left( T_i + U_i \right)^2 \\
&= \sum_{i=1}^{n} c_i \left( T_i - U_i \right)^2 + 4 \sum_{i=1}^{n} c_i T_i U_i,
\end{align}
where
\begin{equation}
    T_i \coloneq \sum_{j=1}^{i} c_j (k_i - k_j), \quad U_i \coloneq \sum_{j=i+1}^{n} c_j (k_j - k_i)
\end{equation}

We calculate (or upper bound) $\sum_{i=1}^{n} c_i \left( T_i - U_i \right)^2$ and $\sum_{i=1}^{n} c_i T_i U_i$.

First, we can rewrite $\sum_{i=1}^{n} c_i \left( T_i - U_i \right)^2$ as follows:
\begin{align}
    \sum_{i=1}^{n} c_i \left( T_i - U_i \right)^2
&= \sum_{i=1}^{n} c_i \left( \sum_{j=1}^{i} c_j (k_i - k_j) - \sum_{j=i+1}^{n} c_j (k_j - k_i) \right)^2 \\
&= \sum_{i=1}^{n} c_i \left( \sum_{j=1}^{n} c_j (k_i - k_j) \right)^2 \\
&= \sum_{i=1}^{n} c_i \left( N k_i - N \bar{k'} \right)^2 \\
&= N^2 \sum_{i=1}^{n} c_i (k_i - \bar{k'})^2 \\
&= N^3 V_{\mathrm{K}}.
\end{align}

Next, we can upper bound $\sum_{i=1}^{n} c_i T_i U_i$ as follows:
\begin{align}
    &~ \sum_{i=1}^{n} c_i T_i U_i \\
=&~ \sum_{i=1}^{n} c_i \left( \sum_{j=1}^{i} c_j (k_i - k_j) \right) \left( \sum_{j=i+1}^{n} c_j (k_j - k_i) \right) \\
=&~ \sum_{i=1}^{n} \sum_{j=1}^{i-1} \sum_{l=i+1}^{n} c_i c_j c_l (k_i - k_j) (k_l - k_i) \\
\leq&~ \frac{1}{4} \sum_{i=1}^{n} \sum_{j=1}^{i-1} \sum_{l=i+1}^{n} c_i c_j c_l (k_l - k_j)^2 \\
=&~ \frac{1}{24} \sum_{i=1}^{n} 
\sum_{\substack{1\leq j \leq n,\\j \neq i}}
\sum_{\substack{1\leq l \leq n,\\l \neq i,j}}
c_i c_j c_l (k_{\max(i,j,l)} - k_{\min(i,j,l)})^2 \\
\leq&~ \frac{1}{24} \sum_{i=1}^{n} \sum_{j=1}^{n} \sum_{l=1}^{n} c_i c_j c_l (k_{\max(i,j,l)} - k_{\min(i,j,l)})^2 \\
\leq&~ \frac{1}{36} \sum_{i=1}^{n} \sum_{j=1}^{n} \sum_{l=1}^{n} c_i c_j c_l \left[ (k_i - k_j)^2 + (k_j - k_l)^2 + (k_l - k_i)^2 \right] \\
=&~ \frac{1}{12} N \sum_{i=1}^{n} \sum_{j=1}^{n} c_i c_j (k_i - k_j)^2 \\
=&~ \frac{1}{6} N^3 V_{\mathrm{K}}.
\end{align}

Therefore,
\begin{equation}
    \sum_{i=1}^{n} c_i S_i^2 \leq N^3 V_{\mathrm{K}} + \frac{4}{6} N^3 V_{\mathrm{K}} = \frac{5}{3} N^3 V_{\mathrm{K}}.
\end{equation}

\noindent \textbf{(iv) $\sum_{i=1}^{n} c_i (k_i - \bar{k'})^2$.}
By definition,
\begin{align}
    \sum_{i=1}^{n} c_i (k_i - \bar{k'})^2 = N V_{\mathrm{K}}.
\end{align}

From (i), (ii), (iii), (iv), and \cref{eq:delta_cov_kr_cov_kr_var_k_expression}, we have
\begin{align}
    \sum_{i=1}^{n} \Delta_i \cdot V_{\mathrm{K}i} c_i \geq&~ \frac{2N+1}{12} \frac{N^2 (N+2) V_{\mathrm{K}}}{(N+1)^2} + \frac{N^2 C^2}{(N+1)^2} - \frac{NC \cdot 4NC}{(N+1)^2} \notag \\ &\qquad - \frac{5 N^3 V_{\mathrm{K}}}{4(N+1)^2 \cdot 3} + \frac{N C^2 \cdot N V_{\mathrm{K}}}{(N+1)^2 V_{\mathrm{K}}} \\
=&~ \frac{N^2 (N^2 + 1)}{6(N+1)^2} V_{\mathrm{K}} - \frac{2 N^2 C^2}{(N+1)^2} \\
\geq&~ \frac{N^2 (N^2 + 1)}{6(N+1)^2} V_{\mathrm{K}} - \frac{2 N^2}{(N+1)^2} \frac{N^2 - 1}{12} V_{\mathrm{K}} \notag \\ &\qquad\qquad \left(\because ~ C^2 \leq V_{\mathrm{R}} V_{\mathrm{K}}  = \frac{N^2 - 1}{12} V_{\mathrm{K}} \right) \\
=&~ \frac{N^2}{3 (N+1)^2} V_{\mathrm{K}} > 0.
\end{align}
Therefore, there exists some $i \in \{1,2,\dots,n\}$ such that $E(\mathcal{K} \uplus \mathcal{Q}_\mathcal{K}(\bm{d} + \bm{e}_i)) > E(\mathcal{K} \uplus \mathcal{Q}_\mathcal{K}(\bm{d}))$.
\end{proof}

Using \cref{lem:d_saturation_lemma}, we can immediately prove \cref{thm:d_saturation}.
\begin{proof}[Proof of \cref{thm:d_saturation}]
\Cref{lem:d_saturation_lemma} implies that, for any $\bm{d}$, there exists some $i$ such that adding $k_i$ as a new poison strictly increases the loss.
Therefore, it immediately follows that the optimal solution $\bm{d}^\ast$ of the Relaxed Poisoning Problem must satisfy $\sum_{i=1}^{n} d^\ast_i = \lambda$.
\end{proof}

\begin{proof}[Proof of \cref{thm:convexity_of_min_b_l_fixed_d}]
After poisoning, the MSE for the key multi-set $\mathcal{K} \uplus \mathcal{Q}_\mathcal{K}(\bm{d})$ can be expressed as follows:
\begin{equation}
    \mathcal{L}(\mathcal{K} \uplus \mathcal{Q}_\mathcal{K}(\bm{d}); w, b) = \frac{1}{n + \lambda} \sum_{i=1}^{n+\lambda} (w k'_i + b - r'_i)^2.
\end{equation}
By solving $\frac{\partial \mathcal{L}(\mathcal{K} \uplus \mathcal{Q}_\mathcal{K}(\bm{d}); w, b)}{\partial b} = 0$, we have
\begin{equation}
    b^\ast = \bar{r'} - w \bar{k'},
\end{equation}
where
\begin{align}
    \bar{r'} &= \frac{1}{n+\lambda} \sum_{i=1}^{n+\lambda} r'_i = \frac{n + \lambda + 1}{2}, \\
    \bar{k'} &= \frac{1}{n+\lambda} \sum_{i=1}^{n+\lambda} k'_i = \frac{1}{n + \lambda} \sum_{i=1}^{n} c_i k_i.
\end{align}
Therefore,
\begin{align}
&~\min_{b} \mathcal{L}(\mathcal{K} \uplus \mathcal{Q}_\mathcal{K}(\bm{d}); w, b)\\
=&~ \frac{1}{n+\lambda} \sum_{i=1}^{n+\lambda} (w k'_i + b^\ast - r'_i)^2 \\
=&~ \frac{1}{n+\lambda} \sum_{i=1}^{n+\lambda} \left( (k'_i - \bar{k'})w - (r'_i - \bar{r'}) \right)^2 \label{eq:mse_expression} \\
=&~ w^2 \mathrm{Var}_\mathrm{K'} - 2 w \mathrm{Cov}_\mathrm{K'R'} + \mathrm{Var}_\mathrm{R'}, \label{eq:mse_expression_2}
\end{align}
where $\mathrm{Var}_\mathrm{K'}$ and $\mathrm{Var}_\mathrm{R'}$ denote the variances of the keys and the ranks after poisoning, respectively, and $\mathrm{Cov}_\mathrm{K'R'}$ denotes the covariance between the keys and the ranks after poisoning.

Therefore, when $\bm{d}$ is fixed, $\min_{b} \mathcal{L}(\mathcal{K} \uplus \mathcal{Q}_\mathcal{K}(\bm{d}); w, b)$ is a quadratic function of $w$.
Since the coefficient of $w^2$ is $\mathrm{Var}_\mathrm{K'} > 0$, $\min_{b} \mathcal{L}(\mathcal{K} \uplus \mathcal{Q}_\mathcal{K}(\bm{d}); w, b)$ is a convex function of $w$.
\end{proof}

\begin{proof}[Proof of \cref{thm:optimal_c_for_fixed_w}]
We first derive the change in loss due to the movement of poison, and then use it to prove \cref{thm:optimal_c_for_fixed_w} by contradiction.

\noindent \textbf{1. Change in loss due to the movement of poison.}
From the conclusion of \cref{thm:d_saturation}, we can assume that $\sum_{i=1}^{n} c_i = n + \lambda$.
Here, we derive the change in loss due to the movement of poison, specifically, the change in Residual Sum of Squares (RSS) when one poison is moved from $k_i$ to $k_j$.
Let $\mathrm{RSS} (\bm{c}) \coloneq (n + \lambda) \min_{b} \mathcal{L}(\mathcal{K} \uplus \mathcal{Q}_\mathcal{K}(\bm{d}); w, b)$ and define $\Delta \mathrm{RSS}_{i \rightarrow j}$ as follows:
\begin{equation}
    \Delta \mathrm{RSS}_{i \rightarrow j} (\bm{c}) = \mathrm{RSS}(\bm{c} - \bm{e}_i + \bm{e}_j) - \mathrm{RSS}(\bm{c}),
\end{equation}
where $i,j$ are natural numbers between 1 and $n$, and $c_i \geq 2$.

Using \cref{eq:mse_expression}, we can calculate and simplify $\Delta \mathrm{RSS}_{i \rightarrow j} (\bm{c})$ as follows:
\begin{align}
\label{eq:delta_rss_ij}
    \Delta \mathrm{RSS}_{i \rightarrow j} (\bm{c}) &= w^2 \left[
        k_j^2 - k_i^2 - \frac{2 Q_{n+1}}{n + \lambda}(k_j - k_i) - \frac{(k_j - k_i)^2}{n + \lambda}
    \right] \notag \\
    &\qquad -2w \Bigg[
        (k_j P_j - Q_j) - (k_i P_i - Q_i) + k_j \notag \\
    &\qquad\qquad - k_{\max(i,j)} - \frac{n + \lambda + 1}{2}(k_j - k_i)
    \Bigg],
\end{align}
where
\begin{equation}
    P_t = \sum_{l=1}^{t-1} c_l, ~~ Q_t = \sum_{l=1}^{t-1} c_l k_l, ~~ T = \sum_{l=1}^{n} c_l k_l.
\end{equation}

\noindent \textbf{2. Proof of \cref{thm:optimal_c_for_fixed_w} by contradiction.}
Using the conclusion of the previous step, we prove \cref{thm:optimal_c_for_fixed_w}.

\noindent \textbf{2.1. Case $w (k_n - k_1) < n + \lambda$.}
We prove that the optimal poison set must consist of only one key by contradiction.
Assume that there exists some $\bm{c}^\ast$ that maximizes $\mathrm{RSS}(\bm{c})$ and satisfies $c^\ast_i \geq 2, c^\ast_j \geq 2$ for $1 \leq i < j \leq n$.
From \cref{eq:delta_rss_ij}, we have
\begin{equation}
\label{eq:delta_rss_ij_ji}
    \Delta \mathrm{RSS}_{i \rightarrow j} (\bm{c}^\ast) + \Delta \mathrm{RSS}_{j \rightarrow i} (\bm{c}^\ast) = 2w (k_j - k_i) \left[1 - \frac{w(k_j - k_i)}{n + \lambda}\right].
\end{equation}
Since $\bm{k}$ is monotonically increasing and $w (k_n - k_1) < n + \lambda$, we have
\begin{equation}
\label{eq:w_ratio_ineq}
    \frac{w(k_j - k_i)}{n + \lambda} \leq \frac{w(k_n - k_1)}{n + \lambda} < 1.
\end{equation}
From \cref{eq:delta_rss_ij_ji,eq:w_ratio_ineq}, we have
\begin{equation}
    \Delta \mathrm{RSS}_{i \rightarrow j} (\bm{c}^\ast) + \Delta \mathrm{RSS}_{j \rightarrow i} (\bm{c}^\ast) > 0.
\end{equation}
Therefore, $\mathrm{RSS}(\bm{c}^\ast - \bm{e}_i + \bm{e}_j) > \mathrm{RSS}(\bm{c}^\ast)$ or $\mathrm{RSS}(\bm{c}^\ast - \bm{e}_j + \bm{e}_i) > \mathrm{RSS}(\bm{c}^\ast)$.
This contradicts the assumption that $\bm{c}^\ast$ maximizes $\mathrm{RSS}(\bm{c})$.
Therefore, the optimal poison set must consist of only one key when $w (k_n - k_1) < n + \lambda$.

\noindent \textbf{2.2. Case $w (k_n - k_1) \geq n + \lambda$.}
We prove that the optimal poison set must consist of $k_1$ and $k_n$ by contradiction.
Assume that there exists some $\bm{c}^\ast$ that maximizes $\mathrm{RSS}(\bm{c})$ and satisfies $c^\ast_i \geq 2$ for $2 \leq i \leq n - 1$.
From \cref{eq:delta_rss_ij}, we have
\begin{align}
\label{eq:delta_rss_i1_in}
    &~ \frac{\Delta \mathrm{RSS}_{i \rightarrow 1} (\bm{c}^\ast)}{w (k_i - k_1)} + \frac{\Delta \mathrm{RSS}_{i \rightarrow n} (\bm{c}^\ast)}{w (k_n - k_i)} \\
    =&~  2 + \left(\frac{R_i}{k_i - k_1} - \frac{R_n - R_i}{k_n - k_i}\right) + w (k_n - k_1) \left(1 - \frac{1}{n + \lambda}\right),
\end{align}
where
\begin{equation}
\label{eq:Rt_define}
    R_t = \sum_{l=1}^{t-1} (k_t - k_l) c_l.
\end{equation}

Now, we have
\begin{equation}
\label{eq:term2}
    \frac{R_i}{k_i - k_1} - \frac{R_n - R_i}{k_n - k_i} \geq - (n + \lambda - 1)
\end{equation}
because
\begin{align}
    &~ \frac{R_i}{k_i - k_1} - \frac{R_n - R_i}{k_n - k_i} \\
=&~ \sum_{l=1}^{i-1} \frac{k_i - k_l}{k_i - k_1} c_l - \left( \sum_{l=1}^{i-1} c_l + \sum_{l=i}^{n-1} \frac{k_n - k_l}{k_n - k_i} c_l \right) \\
=&~ - \sum_{l=1}^{i-1} \frac{k_l - k_1}{k_i - k_1} c_l - \sum_{l=i}^{n-1} \frac{k_n - k_l}{k_n - k_i} c_l \\
\geq&~ - \sum_{l=1}^{i-1} c_l - \sum_{l=i}^{n-1} c_l \\
=&~ - (n + \lambda - c_n) \\
\geq&~ - (n + \lambda - 1).
\end{align}
Here, we used the monotonicity of $\bm{k}$ and $c_n \geq 1$.

Since $w (k_n - k_1) \geq n + \lambda$, we have
\begin{equation}
\label{eq:term3}
    w (k_n - k_1) \left(1 - \frac{1}{n + \lambda}\right) \geq n + \lambda - 1.
\end{equation}

From \cref{eq:delta_rss_i1_in,eq:term2,eq:term3}, we have
\begin{equation}
    \frac{\Delta \mathrm{RSS}_{i \rightarrow 1} (\bm{c}^\ast)}{w (k_i - k_1)} + \frac{\Delta \mathrm{RSS}_{i \rightarrow n} (\bm{c}^\ast)}{w (k_n - k_i)} \geq 2 > 0.
\end{equation}
Therefore, $\mathrm{RSS}(\bm{c}^\ast - \bm{e}_i + \bm{e}_1) > \mathrm{RSS}(\bm{c}^\ast)$ or $\mathrm{RSS}(\bm{c}^\ast - \bm{e}_i + \bm{e}_n) > \mathrm{RSS}(\bm{c}^\ast)$.
This contradicts the assumption that $\bm{c}^\ast$ maximizes $\mathrm{RSS}(\bm{c})$.
Therefore, the optimal poison set must consist of $k_1$ and $k_n$.
\end{proof}

\section{Efficient Calculation Method for the Upper Bound}
\label{sec:efficient_calculation_for_upper_bound}

We efficiently evaluate \cref{eq:upper_bound_relaxed_poisoning_problem} by reducing it to determine the value of $\min_w \max_i f_i(w)$, i.e., the minimum of a function defined as the pointwise maximum of given convex quadratic functions.
In this section, we first describe the efficient algorithm to evaluate the coefficients of the quadratic functions.
Then, we detail three algorithms for solving this problem: (1) a method using golden section search over $w$, (2) a method using binary search over $y$ (function values), and (3) a method using explicit calculation of $\max_{i} f_i(w)$.

\subsection{Efficient Computation of Coefficients}
\label{sec:efficient_computation_of_coefficients}

\Cref{thm:optimal_c_for_fixed_w} implies that the vector $\bm{d}$ we need to consider can be represented as follows:
\begin{equation}
    \bm{d} = a \bm{e}_1 + b \bm{e}_i + (\lambda - a - b) \bm{e}_n,
\end{equation}
where $a,b \in \mathbb{Z}$ and $i \in \{2,3,\dots,n-1\}$.
Specifically, the cases where $a=\lambda$, $b=\lambda$, or $a + b = 0$ correspond to the first type of configuration (i.e., $\exists i \in {1,2,\dots,n}$ such that $d_i = \lambda$), while the case $b=0$ corresponds to the second type (i.e., $d_1 + d_n = \lambda$).

We show that with $\mathcal{O}(n)$ preprocessing, the quantities $\mathrm{Var}_\mathrm{K'}$, $\mathrm{Var}_\mathrm{R'}$, and $\mathrm{Cov}_\mathrm{K'R'}$ for a given $(a,b,i)$ can each be computed in $\mathcal{O}(1)$ time.
From \cref{eq:mse_expression_2}, once these three quantities are obtained, the coefficients of the quadratic function in $w$ can be determined.

\textbf{Precomputation}
We first precompute three arrays $S_t, T_t,$ and $U_t$ for $t = 0, 1, 2, \dots, n$:
\begin{equation}
    S_t \coloneqq \sum_{l=1}^{t} k_l, ~~ 
    T_t \coloneqq \sum_{l=1}^{t} k_l^2, ~~
    U_t \coloneqq \sum_{l=1}^{t} k_l l.
\end{equation}
This preprocessing can be performed in $\mathcal{O}(n)$ time.

\textbf{Computation of Coefficients}
Let $N = n + \lambda$, $c = \lambda - a - b$, and define the vector $\bm{x}(a,b,i)$ as
\begin{align}
    \bm{x}(a,b,i) &\coloneq [\underbrace{k_1, \dots, k_1}_{a + 1},\;
        k_2, \dots, k_{i-1},\;
        \underbrace{k_i, \dots, k_i}_{b + 1},\; \nonumber\\
        &\qquad\qquad\qquad k_{i+1}, \dots, k_{n-1},\;
        \underbrace{k_n, \dots, k_n}_{c + 1}].
\end{align}
Also, define the following quantities:
\begin{align}
    M_{x}(a,b,i) &\coloneq \frac{1}{N} \sum_{j=1}^{N} x(a,b,i)_j, \\
    M_{r} &\coloneq \frac{1}{N} \sum_{j=1}^{N} j = \frac{N+1}{2}, \\
    M_{x^2}(a,b,i) &\coloneq \frac{1}{N} \sum_{j=1}^{N} x(a,b,i)_j^2, \\
    M_{xr}(a,b,i) &\coloneq \frac{1}{N} \sum_{j=1}^{N} x(a,b,i)_j \cdot j, \\
    M_{r^2} &\coloneq \frac{1}{N} \sum_{j=1}^{N} j^2 = \frac{(N+1)(2N+1)}{6}.
\end{align}
Then the desired coefficients $\mathrm{Var}_\mathrm{K'}$, $\mathrm{Var}_\mathrm{R'}$, and $\mathrm{Cov}_\mathrm{K'R'}$ can be expressed as follows:
\begin{align}
    \mathrm{Var}_\mathrm{K'} &= M_{x^2}(a,b,i) - M_{x}(a,b,i)^2, \\
    \mathrm{Var}_\mathrm{R'} &= M_{r^2} - M_{r}^2, \\
    \mathrm{Cov}_\mathrm{K'R'} &= M_{xr}(a,b,i) - M_{x}(a,b,i) M_{r}.
\end{align}
Furthermore, $M_{x}(a,b,i)$, $M_{x^2}(a,b,i)$, and $M_{xr}(a,b,i)$ can be computed as follows:
\begin{align}
    M_{x}(a,b,i) &= \frac{1}{N} \left(a k_1 + b k_i + c k_n + S_n\right), \\
    M_{x^2}(a,b,i) &= \frac{1}{N} \left(a k_1^2 + b k_i^2 + c k_n^2 + T_n\right), \\
    M_{xr}(a,b,i) &= \frac{1}{N}\Biggl[
        U_n + a S_n + b (S_n - S_{i-1})
        + \frac{a(a+1)}{2} k_1 \nonumber\\
        & + \left(b(a+i) + \frac{b(b-1)}{2}\right) k_i
        + \left(cN - \frac{c(c-1)}{2}\right) k_n
    \Biggr].
\end{align}
Since $S_t, T_t,$ and $U_t$ have already been precomputed, all these quantities can be computed in $\mathcal{O}(1)$ time.
Therefore, $\mathrm{Var}_\mathrm{K'}$, $\mathrm{Var}_\mathrm{R'}$, and $\mathrm{Cov}_\mathrm{K'R'}$ can be computed in $\mathcal{O}(1)$ time.

\subsection{Golden Section Search over $w$}
\label{sec:golden_section_search_over_w}

\begin{algorithm}[t]
    \caption{A Method with Golden Section Search over $w$}
    \label{algo:golden_section_search}
    \begin{algorithmic}[1]
    \Require Convex quadratic functions $f_1, f_2, \dots, f_q$
    \Require Search interval $[a, b]$, iterations $T$
    \Function{MinMaxQuadratics\_1}{$f_1, \dots, f_q$, $a$, $b$, $T$}
        \State $\phi \gets \frac{1 + \sqrt{5}}{2}$ \Comment{Golden ratio}
        \State $w_l \gets b - \frac{b - a}{\phi}, \quad w_r \gets a + \frac{b - a}{\phi}$
        \State $y_l \gets \max_i f_i(w_l), \quad y_r \gets \max_i f_i(w_r)$
        \For{$t \gets 1$ to $T$}
            \If{$y_l > y_r$}
                \State $a \gets w_l, \quad w_l \gets w_r, \quad w_r \gets a + \frac{b - a}{\phi}$
                \State $y_l \gets y_r, \quad y_r \gets \max_i f_i(w_r)$
            \Else
                \State $b \gets w_r, \quad w_r \gets w_l, \quad w_l \gets b - \frac{b - a}{\phi}$
                \State $y_r \gets y_l, \quad y_l \gets \max_i f_i(w_l)$
            \EndIf
        \EndFor
        \State $w^\ast \gets \frac{a + b}{2}$, $y^\ast \gets \max_i f_i(w^\ast)$
        \State \Return $y^\ast$
    \EndFunction
    \end{algorithmic}
\end{algorithm}

The function $\max_{i} f_i(w)$ is convex with respect to $w$.
This follows immediately from \cref{thm:convexity_of_min_b_l_fixed_d}, which establishes that each $f_i$ is convex, together with the well-known fact that the pointwise maximum of convex functions is also convex.

The golden section search algorithm is an efficient algorithm for finding the minimum of a unimodal function, and thus can be applied to convex functions.
In each iteration, the golden section search reduces the size of the search interval by a factor of $\frac{1}{\phi} \approx 0.618$, where $\phi$ denotes the golden ratio.
As a result, the interval guaranteed to contain the minimizer $w^\ast$ shrinks exponentially with the number of iterations.
In our setting, evaluating $\max_{i} f_i(w)$ once requires $\mathcal{O}(n + \lambda)$ time.
Therefore, the total computational cost over $T$ iterations is $\mathcal{O}(T (n + \lambda))$.

The initial search interval $[a, b]$ must be chosen to include the optimal value $w^\ast$.
We set $a$ and $b$ as the smallest and largest vertex positions, respectively, among all quadratic functions $f_i$.
This choice ensures that $[a, b]$ always contains the global minimizer $w^\ast$.

\subsection{Binary Search over $y$}
\label{sec:binary_search_over_y}

\begin{algorithm}[t]
    \caption{A Method with Binary Search over $y$}
    \label{algo:binary_search_over_y}
    \begin{algorithmic}[1]
    \Require Convex quadratic functions $f_1, f_2, \dots, f_q$
    \Require Search interval $[y_l, y_r]$, iterations $T$
    \Function{MinMaxQuadratics\_2}{$f_1, \dots, f_q$, $y_l$, $y_r$, $T$}
        \For{$t \gets 1$ to $T$}
            \State $y_m \gets \frac{y_l + y_r}{2}$
            \State $w_l \gets - \infty, ~ w_r \gets \infty, ~ \mathrm{valid} \gets \mathrm{True}$
            \For{$i \gets 1$ to $q$}
                \State $\alpha, \beta \gets \mathrm{SolveQuadEq}(f_i, y_m)$ \Comment{$w$ s.t. $f_i(w) = y_m$.}
                \If{$\alpha \mathrm{~is~None}$}
                    \State $\mathrm{valid} \gets \mathrm{False}$
                    \State \textbf{break}
                \EndIf
                \State $w_l \gets \max(w_l, \alpha), ~ w_r \gets \min(w_r, \beta)$
            \EndFor
            \If{$\mathrm{valid} = \mathrm{True} \land w_l \leq w_r$}
                \State $y_r \gets y_m$
            \Else
                \State $y_l \gets y_m$
            \EndIf
        \EndFor
        \State \Return $y_r$
    \EndFunction
    \end{algorithmic}
\end{algorithm}

We found that for a given $y_m \in \mathbb{R}$, we can determine whether $\min_{w} \max_{i} f_i(w) \leq y_m$ in $\mathcal{O}(n + \lambda)$ time.
This decision procedure relies on two key observations.

First, since each $f_i(w)$ is a convex quadratic function, the interval $\mathcal{W}_i \coloneqq \{w \in \mathbb{R} \mid f_i(w) \leq y_m\}$ can be computed in $\mathcal{O}(1)$ time.
This is achieved by solving the quadratic equation $f_i(w) = y_m$ using the closed-form solution for quadratic equations.

Second, the inequality $\min_{w} \max_{i} f_i(w) \leq y_m$ can be efficiently verified using the intervals $\mathcal{W}_i$ as follows:
\begin{align}
    \min_{w} \max_{i} f_i(w) \leq y_m &\iff
    \exists w \in \mathbb{R}, \forall i, f_i(w) \leq y_m \\
    &\iff \bigcap_{i} \mathcal{W}_i \neq \varnothing.
\end{align}
Hence, the decision problem reduces to checking whether the intersection of all $\mathcal{W}_i$ is nonempty.
Since each $\mathcal{W}_i$ is either an empty set or a continuous interval, the intersection of all $\mathcal{W}_i$ can be computed in $\mathcal{O}(n + \lambda)$ time.
Therefore, we can determine whether $\min_{w} \max_{i} f_i(w) \leq y_m$ in $\mathcal{O}(n + \lambda)$ time.

Using this decision procedure, we can exponentially narrow down the range containing the value $\min_{w} \max_{i} f_i(w)$ via binary search.
Since each decision procedure takes $\mathcal{O}(n + \lambda)$ time, the total computational cost over $T$ iterations is $\mathcal{O}(T(n + \lambda))$.

\subsection{Explicit Calculation of $\max_{i} f_i(w)$}
\label{sec:explicit_calculation_of_max_i_fi_w}

\begin{algorithm}[t]
    \caption{Explicitly Calculates $\max_{i} f_i(w)$}
    \label{algo:explicit_calculation_of_max_i_fi_w}
    \begin{algorithmic}[1]
    \Require Convex quadratic functions $f_1, f_2, \dots, f_q$
    \Function{ExplicitlyCalcMaxFIs}{$f_1, \dots, f_q$}
        \If{$q = 1$}
            \State \Return $[-\infty, \infty]$, $[f_1]$
        \Else
            \State $\bm{t}, \bm{g} \gets \textsc{ExplicitlyCalcMaxFIs}(f_1, \dots, f_{\lfloor q/2 \rfloor})$
            \State $\bm{u}, \bm{h} \gets \textsc{ExplicitlyCalcMaxFIs}(f_{\lfloor q/2 \rfloor + 1}, \dots, f_q)$
            \State \Return $\textsc{MergePiecewiseQuadratics}(\bm{t}, \bm{g}, \bm{u}, \bm{h})$
        \EndIf
    \EndFunction
    \end{algorithmic}
\end{algorithm}

\begin{algorithm}[t]
    \caption{Merge Piecewise Quadratic Functions}
    \label{algo:merge_piecewise_quadratic_functions}
    \begin{algorithmic}[1]
    \Require Thresholds: $\bm{t} \in \mathbb{R}^{n_1+1}$ s.t. $-\infty = t_1 < \dots < t_{n_1+1} = \infty$
    \Require Quadratics: $g_i(w): \mathbb{R} \to \mathbb{R}$ for $i \in \{1, 2, \dots, n\}$
    \Require Thesholds: $\bm{u} \in \mathbb{R}^{n_2+1}$ s.t. $-\infty = u_1 < \dots < u_{n_2+1} = \infty$
    \Require Quadratics: $h_i(w): \mathbb{R} \to \mathbb{R}$ for $i \in \{1, 2, \dots, m\}$
    \Function{MergePiecewiseQuadratics}{$\bm{t}$, $\bm{g}$, $\bm{u}$, $\bm{h}$}
        \State $n_1 \gets |\bm{g}|$, $n_2 \gets |\bm{h}|$
        \State $i \gets 1$, $j \gets 1$
        \State $\bm{s} \gets [-\infty]$
        \State $\bm{p} \gets [~]$
        \While{$i \le n_1 \land j \le n_2$}
            \State $a \gets \max(t_i, u_j), \quad b \gets \min(t_{i+1}, u_{j+1})$
            \If{$a \ge b$}
                \If{$t_{i+1} \le u_{j+1}$} $i \gets i + 1$ \EndIf
                \If{$u_{j+1} \le t_{i+1}$} $j \gets j + 1$ \EndIf
                \State \textbf{continue}
            \EndIf
            \\
            \State $\alpha, \beta \gets \textsc{SolveQuadEq}(g_i,h_j)$ \Comment{$w$ s.t. $g_i(w) = h_j(w)$.}
            \State $\bm{v} \gets [a,b]$
            \If{$\alpha \in (a, b)$} $\bm{v}.\textsc{append}(\alpha)$ \EndIf
            \If{$\beta \in (a, b)$} $\bm{v}.\textsc{append}(\beta)$ \EndIf
            \State $\bm{v} \gets \textsc{Sort}(\bm{v})$

            \\
            \For{$k \gets 1, \dots, |\bm{v}|-1$}
                \State $l \gets v_k, \quad r \gets v_{k+1}$
                \State $x \gets (l + r)/2$
                \If{$g_i(x) \geq h_j(x)$}
                    \State $\bm{s}.\textsc{append}(r), ~~ \bm{p}.\textsc{append}(g_i)$
                \Else
                    \State $\bm{s}.\textsc{append}(r), ~~ \bm{p}.\textsc{append}(h_j)$
                \EndIf
            \EndFor
            \\
            \If{$t_{i+1} = b$} $i \gets i + 1$ \EndIf
            \If{$u_{j+1} = b$} $j \gets j + 1$ \EndIf
        \EndWhile
        \State $i \gets 1$
        \While{$i + 1 \leq |\bm{p}|$}
            \If{$p_i = p_{i+1}$}
                \State $\bm{p}.\textsc{remove}(i+1)$ \Comment{Merge adjacent intervals}
                \State $\bm{s}.\textsc{remove}(i+1)$
            \Else
                \State $i \gets i + 1$
            \EndIf
        \EndWhile
        \State \Return $\bm{s}, \bm{p}$
    \EndFunction
    \end{algorithmic}
\end{algorithm}

We found that the shape of $\max_{i} f_i(w)$, which is a \textit{piecewise quadratic function}, can be computed both efficiently and exactly.
This result is based on the following two key observations:
\begin{enumerate}[leftmargin=1.5em]
    \item Given two piecewise quadratic functions, each of which consists of $n_1$ and $n_2$ segments, respectively, their pointwise maximum can be computed in $\mathcal{O}(n_1 + n_2)$ time.
    \item The number of segments required to represent the pointwise maximum of $m$ quadratic functions as a piecewise quadratic function is at most $\mathcal{O}(m)$.
\end{enumerate}
By leveraging these observations, we can explicitly construct the shape of $\max_{i} f_i(w)$ using a divide-and-conquer strategy.
At each step, we recursively divide the set of functions into halves, merge their piecewise quadratic representations, and continue until a single unified representation is obtained.
This process computes the exact shape of $\max_{i} f_i(w)$ in $\mathcal{O}((n + \lambda) \log (n + \lambda))$ time (\cref{algo:explicit_calculation_of_max_i_fi_w}).

\textbf{(1) Computing the Maximum of Two Piecewise Quadratic Functions in $\mathcal{O}(n_1 + n_2)$ Time}
We first describe how to compute the maximum of two piecewise quadratic functions in linear time.
The pseudocode is shown in \cref{algo:merge_piecewise_quadratic_functions}.

A piecewise quadratic function $g$ consisting of $n_1$ segments is defined by two components: a threshold vector $\bm{t} \in \mathbb{R}^{n_1 + 1}$, such that $-\infty = t_1 < \dots < t_{n_1 + 1} = \infty$, and a sequence of quadratic functions $\{ g_i : \mathbb{R} \to \mathbb{R} \}_{i=1}^{n_1}$.
The function $g$ is defined as follows:
\begin{equation}
    g(w) = g_i(w) \quad \text{if } t_i \le w < t_{i+1} \quad \text{for } i \in \{1, 2, \dots, n_1\}.
\end{equation}
Similarly, let $h$ be another piecewise quadratic function consisting of $n_2$ segments, represented by thresholds $\bm{u} \in \mathbb{R}^{n_2 + 1}$ and quadratic functions $\{h_i\}_{i=1}^{n_2}$.
Our goal is to obtain a piecewise quadratic representation of
\begin{equation}
    p(w) \coloneqq \max(g(w), h(w)).
\end{equation}

The algorithm sequentially scans the intervals of $g$ and $h$ in ascending order.
At each step, it compares the active segments $g_i(w)$ and $h_j(w)$ over their overlapping domain $[t_i, t_{i+1}] \cap [u_j, u_{j+1}]$.
It then determines whether their dominance changes within this domain by solving the quadratic equation $g_i(w) = h_j(w)$.
\begin{itemize}[leftmargin=1.5em]
    \item If it does not change, the larger function over the interval (determined, e.g., by comparing values at the midpoint) is appended to the result.
    \item If it does change, the point where the dominance changes is added as a new threshold, and the dominance relation is determined for each subinterval.
\end{itemize}
This process repeats while incrementally advancing the indices $i$ and $j$ until all intervals are processed.
Finally, adjacent segments corresponding to the same quadratic function are merged to eliminate redundant boundaries.

The computational complexity of this algorithm is $\mathcal{O}(n_1 + n_2)$ because each iteration requires only constant-time operations, and the number of iterations is at most $n_1 + n_2$ because the value of $i + j$ increases by at least one per iteration.
The final merging step also takes $\mathcal{O}(n_1 + n_2)$ time since the number of resulting segments is at most $n_1 + n_2$.

\textbf{(2) The Number of Segments in the Maximum of $m$ Quadratic Functions}
Let $f_1, f_2, \dots, f_m$ be $m$ quadratic functions, and let $\mu$ denote the number of segments in the piecewise quadratic representation of $\max_{i} f_i(w)$.
We now show that $\mu = \mathcal{O}(m)$.

Let $\bm{I} \in \{1, 2, \dots, m\}^{\mu}$ denote the sequence of indices indicating which quadratic function attains the maximum on each segment.
Since the merging algorithm (\textsc{MergePiecewiseQuadratics}) ensures that adjacent segments correspond to different quadratics, consecutive elements of $\bm{I}$ are distinct.
Moreover, for any distinct $i, j \in \{1, \dots, m\}$, the subsequence $i, j, i, j$ does not appear in $\bm{I}$ because any two quadratic functions can intersect at most twice.
Such sequences are known as Davenport--Schinzel sequences of order 2~\citep{sharir1988davenport}.
It is known that a Davenport--Schinzel sequence of order 2 over $m$ symbols has length at most $2m - 1$.
Therefore, we have $\mu = \mathcal{O}(m)$.

\section{Segment + Endpoint}
\label{app:segment_plus_endpoint}

We now describe in detail a class of poisoning attacks that we discovered, which we call \textbf{Segment + Endpoint (Seg+E)}.
First, in \cref{sec:def_of_seg_plut_e} we formally define the class of poisoning attacks called Seg+E.
Next, in \cref{sec:algo_exact_seg_plus_e_original} we present an $\mathcal{O}(n\lambda^3)$-time exact algorithm for Seg+E in the original setting, together with the theorem that justifies the algorithm.
Then, in \cref{sec:algo_exact_seg_plus_e_relaxed} we present an $\mathcal{O}(n\lambda)$-time exact algorithm for Seg+E in the relaxed setting, together with its supporting theorem.

\subsection{Definition of Segment + Endpoint}
\label{sec:def_of_seg_plut_e}

We first define a class of poison sets called \textit{Segment + Endpoint} (abbreviated as \textit{Seg+E}).
\begin{definition_box}
\begin{definition}[\textbf{Segment + Endpoint (Seg+E) in Original Setting}]
\label{def:seg_e}
    Let $\mathcal{K} = \{k_1, k_2, \dots, k_n\} \subset \mathbb{N}$ be a set of $n$ distinct legitimate keys arranged in increasing order.
    Define a function $\Phi_\mathrm{Seg+E}^{\mathrm{(orig)}} : \mathbb{Z}^4 \to 2^{\mathbb{Z}}$ as follows:
    \begin{align}
        &~\Phi_\mathrm{Seg+E}^{\mathrm{(orig)}}(R_1, L_2, R_2, L_3) \nonumber\\
        \coloneq&~ \left([k_1, R_1] \cup [L_2, R_2] \cup [L_3, k_n]\right) \cap \mathbb{Z} \setminus \mathcal{K}.
    \end{align}
    Then, a poison set $\mathcal{P} \subset \mathbb{Z}$ is said to be a Segment + Endpoint (Seg+E) if and only if there exist $R_1, L_2, R_2, L_3 \in \mathbb{Z}$ satisfying $k_1 \le R_1 < L_2 \le R_2 < L_3 \le k_n$ such that
    \begin{equation}
        \mathcal{P} = \Phi_\mathrm{Seg+E}^{\mathrm{(orig)}}(R_1, L_2, R_2, L_3).
    \end{equation}
\end{definition}
\end{definition_box}

Similarly, we define Seg+E in the relaxed setting as follows:
\begin{definition_box}
\begin{definition}[\textbf{Segment + Endpoint (Seg+E) in Relaxed Setting}]
\label{def:seg_e_relaxed}
    Let $\mathcal{K} = \{k_1, k_2, \dots, k_n\} \subset \mathbb{N}$ be a set of $n$ distinct legitimate keys arranged in increasing order.
    Define a function $\Phi_\mathrm{Seg+E}^{\mathrm{(relaxed)}} : \mathbb{Z}_{\geq 0} \times \mathbb{Z}_{\geq 0} \times \mathbb{Z}_{\geq 0} \times \{k_1+1,k_1+2,\dots,k_n-1\} \to \mathcal{M}(\mathbb{Z})$ as follows:
    \begin{equation}
        \Phi_\mathrm{Seg+E}^{\mathrm{(relaxed)}}(a, b, c, p) \coloneq \{
            \underbrace{k_1, \dots, k_1}_{a},
            \underbrace{p, \dots, p}_{b},
            \underbrace{k_n, \dots, k_n}_{c}
        \},
    \end{equation}
    where $\mathcal{M}(\mathbb{Z})$ denotes the set of multisets over $\mathbb{Z}$.
    Then, a poison multiset $\mathcal{P}$ is a Segment + Endpoint (Seg+E) in the relaxed setting if and only if there exists $p \in \{k_1+1,k_1+2,\dots,k_n-1\}$ and $a, b, c \in \mathbb{Z}_{\geq 0}$ such that
    \begin{equation}
        \mathcal{P} = \Phi_\mathrm{Seg+E}^{\mathrm{(relaxed)}}(a, b, c, p).
    \end{equation}
\end{definition}
\end{definition_box}

\subsection{Exact Seg+E Algorithm in the Original Setting}
\label{sec:algo_exact_seg_plus_e_original}

Without our theoretical insights, finding the exact Seg+E configuration poison through brute-force search is computationally prohibitive; it requires at least $\mathcal{O}(R \lambda^3)$ time, where $R \coloneq k_n - k_1$.
There are $\mathcal{O}(\lambda^3)$ combinations of poison counts for the left, middle, and right positions, and $\mathcal{O}(R)$ possible placements for the middle poisons.
For each candidate configuration, computing the MSE requires $\mathcal{O}(n + \lambda)$ time, resulting in a total cost of $\mathcal{O}(R \lambda^3 (n + \lambda))$.
By exploiting differencial computation of MSE, this can be reduced to $\mathcal{O}(R \lambda^3)$, but the dependence on $R$ still makes it prohibitively large in practice.

Our next results drastically reduce the search space for the optimal middle poison location from $\mathcal{O}(R)$ to $\mathcal{O}(n)$ candidates.
\begin{theorem_box}
\begin{theorem}
    \label{thm:seg_e_in_original_setting}
    In the original setting, restricting the Seg+E poisoning pattern to those satisfying $L_2 \in \mathcal{K} \lor R_2 \in \mathcal{K}$ does not make the MSE smaller.
    That is,
    \begin{align}
        &~\max_{R_1, L_2, R_2, L_3} E(\mathcal{K} \cup \Phi_\mathrm{Seg+E}^{\mathrm{(orig)}}(R_1, L_2, R_2, L_3)) \nonumber\\
        =&~ \max_{R_1, L_2, R_2, L_3 ~\text{s.t.}~ L_2 \in \mathcal{K} ~~\lor~~ R_2 \in \mathcal{K}} E(\mathcal{K} \cup \Phi_\mathrm{Seg+E}^{\mathrm{(orig)}}(R_1, L_2, R_2, L_3)).
    \end{align}
\end{theorem}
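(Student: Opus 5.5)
The plan is to reuse the block-shifting argument from the proof of \cref{thm:structure_of_optimal_multi_point_attack}, applied only to the middle segment of a Seg+E pattern. Fix any maximizer $(R_1, L_2, R_2, L_3)$ of the left-hand side. Let $\mathcal{P} = \Phi_\mathrm{Seg+E}^{\mathrm{(orig)}}(R_1, L_2, R_2, L_3)$, and let $\mathcal{M} = ([L_2, R_2]\cap\mathbb{Z})\setminus\mathcal{K}$ be its middle part. Among all maximizers, I choose one minimizing a potential (say the distance from $L_2$ to the nearest key, or the number of shifts needed). I then show that if neither $L_2$ nor $R_2$ lies in $\mathcal{K}$, the middle segment can be shifted by one unit in some direction without decreasing $E$. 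Either the shifted pattern is strictly better, contradicting maximality, or a tie allows the shifting to continue until an endpoint hits a key, at which point the restricted maximum is attained.

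\textbf{Step 1: normalize the segment.} I first handle the degenerate features of the definition. Since $\mathcal{K}$ is removed, the same poison set can be represented by many tuples. If $L_2\notin\mathcal{K}$ but $L_2-1\in\mathcal{K}$, I can rewrite the tuple with $L_2' = L_2-1$. This yields an identical poison set (the key is discarded by $\setminus\mathcal{K}$) satisfying $L_2'\in\mathcal{K}$, provided $L_2-1>R_1$. The same applies symmetrically for $R_2$. The remaining case is a middle segment whose extreme poisons are not adjacent to keys. Decompose $\mathcal{M}$ into maximal runs of consecutive integers. If the first run is not adjacent to a key on its left and the last run is not adjacent on its right, the structure is an isolated configuration. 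If $\mathcal{M}$ is a single run $\{L_2,\dots,R_2\}$ with $L_2-1, R_2+1 \notin \mathcal{K}\cup\mathcal{P}$, it is exactly an Isolated Poison Block in the sense of the proof of \cref{thm:structure_of_optimal_multi_point_attack}. If $\mathcal{M}$ contains interior keys, I translate the whole window $[L_2,R_2]$ instead. This translation changes which integers are poisons, so I handle it by observing that one endpoint run is still an isolated block that can be moved.

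\textbf{Step 2: apply \cref{lem:multi_point_poisoning_attack}.} For an isolated run $\{l,\dots,r\}$ inside the middle segment, I take $\mathcal{X}=\mathcal{K}\cup\mathcal{P}$ and the index range $i..j$ of that run. By \cref{lem:multi_point_poisoning_attack}, $\epsilon\mapsto E(\mathcal{X}_{i,j,\epsilon})$ on $[-1,1]$ is strictly monotone or strictly quasi-convex. Hence $\max(E(\mathcal{P}_-),E(\mathcal{P}_+)) > E(\mathcal{P})$, where $\mathcal{P}_\pm$ shifts the run by $\pm1$. Both shifted sets are still Seg+E: the run remains a contiguous segment, with $L_2, R_2$ shifted by one. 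They also respect $R_1 < L_2 \le R_2 < L_3$, because isolation gives $l-1, r+1\notin\mathcal{P}$, so the shift cannot collide with the endpoint blocks. This contradicts maximality. Consequently, at a maximizer the middle run is adjacent to a key, i.e. $L_2-1\in\mathcal{K}$ or $R_2+1\in\mathcal{K}$, or it merges with an endpoint block. Step 1 then re-indexes the tuple so that $L_2\in\mathcal{K}$ or $R_2\in\mathcal{K}$.

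\textbf{Step 3 and the main obstacle.} The main obstacle is the bookkeeping around the representation and the degenerate cases. First, the middle segment may touch or merge with the endpoint blocks, which happens when the isolated-block shift would cross into $[k_1,R_1]$ or $[L_3,k_n]$. Second, the middle window $[L_2,R_2]$ may straddle legitimate keys. For the first, I would argue that if the middle block becomes adjacent to an endpoint block, it is adjacent to whatever that block touches. Alternatively, the pattern can be re-represented with a middle segment placed immediately after a key, with the endpoint block extended instead. For the second, I would note that the leftmost run of $\mathcal{M}$ is bounded on its right by a key, so $R_2$ (or an internal key) already lies in $\mathcal{K}$ after re-indexing $R_2$ to that key; hence only the single-run case genuinely needs \cref{lem:multi_point_poisoning_attack}. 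Once these cases are handled, the restricted maximum is at least the unrestricted one, and the reverse inequality is trivial, which gives equality.
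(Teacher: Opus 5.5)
Your Step~2 is correct when the middle part is a single run of poisons, and it matches the paper's argument there. The gap is the case where the window $[L_2,R_2]$ contains legitimate keys; your Steps~1 and~3 dispose of this case incorrectly.

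Re-indexing $R_2$ to an interior key is not a re-representation, because it drops every middle poison to the right of that key. The leftmost run is also not an isolated block: it touches the interior key on its right, so it cannot move right. The lemma's $\epsilon$-interval is open at $+1$, so quasi-convexity guarantees no improving direction for it. Moving it left by itself leaves a free non-key integer between middle poisons, so the result is no longer of the form $\Phi_\mathrm{Seg+E}^{\mathrm{(orig)}}$.

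This case cannot be normalized away. Take $k\in\mathcal{K}$ far from the endpoint blocks with $k-2,k-1,k+1,k+2\notin\mathcal{K}$. Then the middle part $\{k-1,k+1\}$ (window $[k-1,k+1]$) has no representation with $L_2\in\mathcal{K}$ or $R_2\in\mathcal{K}$. So an improvement argument must show that such configurations are never strictly optimal. Your conclusion that ``only the single-run case genuinely needs'' \cref{lem:multi_point_poisoning_attack} is therefore false.

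The missing observation is that $E$ depends only on the sorted values of $\mathcal{K}\cup\mathcal{P}$, not on which of them are legitimate. Suppose $L_2,R_2\notin\mathcal{K}$ and $L_2-1,R_2+1\notin\mathcal{K}\cup\mathcal{P}$. Then every integer in $[L_2,R_2]$ is a key or a poison, so $\mathcal{K}\cup\mathcal{P}=S\cup\{L_2,\dots,R_2\}$. The shifted tuples $(R_1,L_2\pm1,R_2\pm1,L_3)$ give exactly $S\cup\{L_2\pm1,\dots,R_2\pm1\}$. This is a rigid translation of a block of consecutive sorted points, interior legitimate keys included. It has the same number of poisons and, because $L_2-1$ and $R_2+1$ are free, still satisfies $R_1<L_2\pm1\le R_2\pm1<L_3$.

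Applying \cref{lem:multi_point_poisoning_attack} to the indices of $L_2,\dots,R_2$ then gives a strict improvement for one of the two shifts. This handles the single-run and straddling cases uniformly, and it is precisely the paper's proof. It also makes your potential and tie-breaking device unnecessary, since the lemma's inequality is strict.

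Separately, re-representing a middle block that has merged into an endpoint block needs a key strictly between $k_1$ and $k_n$. For $n=2$ the restricted family is empty, so the statement implicitly requires $n\ge3$; the paper ignores this edge case too.
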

\end{theorem_box}
\begin{proof}[Proof of \cref{thm:seg_e_in_original_setting}]
\begin{figure}[t]
    \centering
    \includegraphics[width=\columnwidth]{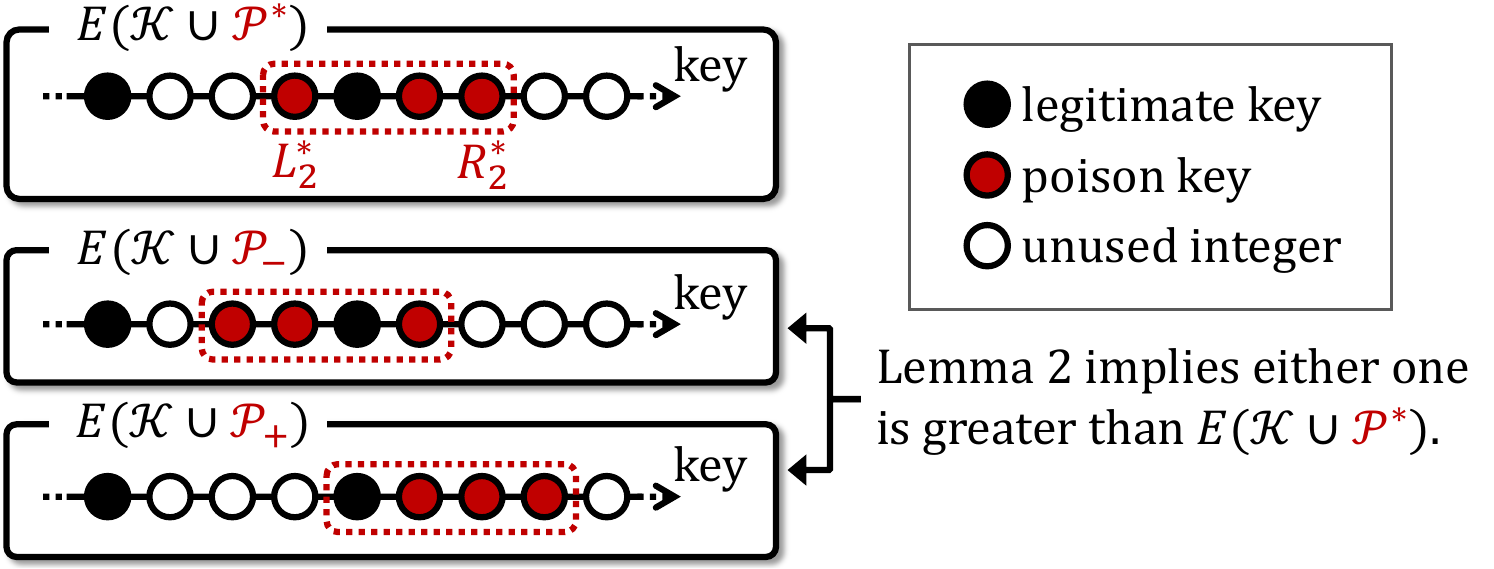}
    \caption{Proof of \cref{thm:seg_e_in_original_setting}.
    \cref{lem:multi_point_poisoning_attack} implies that $E(\mathcal{K} \cup \mathcal{P}_{-})$ or $E(\mathcal{K} \cup \mathcal{P}_{+})$ is greater than $E(\mathcal{K} \cup \mathcal{P}^\ast)$.}
    \label{fig:theorem_sege_vis}
\end{figure}
The proof is similar to the proof of \cref{thm:structure_of_optimal_multi_point_attack}.
For intuition, we show an outline of the proof in \cref{fig:theorem_sege_vis}.

We prove the claim by contradiction.
Assume that a counterexample to \cref{thm:seg_e_in_original_setting} exists.
That is, suppose there exists an optimal Seg+E solution that cannot be represented when restricting to $L_2 \in \mathcal{K} \land R_2 \in \mathcal{K}$; denote this solution by $\mathcal{P}^\ast$.
Write $\mathcal{P}^\ast = \Phi_\mathrm{Seg+E}^{\mathrm{(orig)}}(R^\ast_1, L^\ast_2, R^\ast_2, L^\ast_3)$.
By assumption we have $L^\ast_2 \notin \mathcal{K}$ and $R^\ast_2 \notin \mathcal{K}$, and moreover neither $L^\ast_2-1$ nor $R^\ast_2+1$ belongs to $\mathcal{K}\cup\mathcal{P}^\ast$.
(Otherwise, by appropriately rechoosing $R_1,L_2,R_2,L_3$ one could ensure $L_2 \in \mathcal{K} \lor R_2 \in \mathcal{K}$, which contradicts the assumption.)

Define $\mathcal{P}_{-}$ and $\mathcal{P}_{+}$ as follows:
\begin{align}
    \mathcal{P}_{-} &\coloneq \Phi_\mathrm{Seg+E}^{\mathrm{(orig)}}(R^\ast_1, L^\ast_2-1, R^\ast_2-1, L^\ast_3),\\
    \mathcal{P}_{+} &\coloneq \Phi_\mathrm{Seg+E}^{\mathrm{(orig)}}(R^\ast_1, L^\ast_2+1, R^\ast_2+1, L^\ast_3).
\end{align}
Then, by \cref{lem:multi_point_poisoning_attack}, either
$E(\mathcal{K} \cup \mathcal{P}_{-}) > E(\mathcal{K} \cup \mathcal{P}^\ast)$
or
$E(\mathcal{K} \cup \mathcal{P}^\ast) < E(\mathcal{K} \cup \mathcal{P}_{+})$.
This contradicts the assumption that $\mathcal{P}^\ast$ is an optimal Seg+E solution, completing the proof.
\end{proof}
From \cref{thm:seg_e_in_original_setting}, we can reduce the search space for the optimal middle poison location from $\mathcal{O}(R)$ to $\mathcal{O}(n)$ candidates.

\subsection{Exact Seg+E Algorithm in the Relaxed Setting}
\label{sec:algo_exact_seg_plus_e_relaxed}

\begin{algorithm}[t]
    \caption{Exact Seg+E in Relaxed Setting}
    \label{algo:algorithm_for_finding_the_exact_seg_e_in_relaxed_setting}
    \begin{algorithmic}[1]
    \Require Legitimate Keys: $\mathcal{K}$, Maximum Number of Poisons: $\lambda$
    \Function{ExactSeg+E}{$\mathcal{K},\lambda$}
        \State $a^\ast \gets \mathrm{None}, ~~ b^\ast \gets \mathrm{None}, ~~ i^\ast \gets \mathrm{None}$
        \State $\mathrm{MSE}_\mathrm{MAX} \gets -\infty$
        \For {$a = 0,1,\dots,\lambda$}
            \For {$i = 2,3,\dots,n-1$}
                \State $b \gets \textsc{GetOptimalB}(\mathcal{K},a,i)$
                \State $\mathrm{MSE} \gets \textsc{CalcMSE}(\mathcal{K},a,b,i)$
                \If {$\mathrm{MSE}_\mathrm{MAX} < \mathrm{MSE}$}
                    \State $a^\ast \gets a, ~~ b^\ast \gets b, ~~ i^\ast \gets i$
                    \State $\mathrm{MSE}_\mathrm{MAX} \gets \mathrm{MSE}$
                \EndIf
            \EndFor
        \EndFor
        \State \textbf{return} $(a^\ast,b^\ast,i^\ast)$
    \EndFunction
    \end{algorithmic}
\end{algorithm}

We can show that the exact Seg+E solution in realaxed setting necessarily saturates the poisoning budget: there exists some $i \in \{2,3,\dots,n-1\}$ such that $d_1+d_i+d_n=\lambda$.
This follows from \cref{lem:d_saturation_lemma}, which shows that, in the relaxed setting, we can always increase the MSE by adding more poison.

We define the function $\mathrm{MSE}(a,b,i): \mathbb{Z}_{\geq 0}\times\mathbb{Z}_{\geq 0}\times\{2,3,\dots,n-1\} \to \mathbb{R}$ as follows:
\begin{equation}
    \mathrm{MSE}(a,b,i) \coloneq E(\mathcal{K} \uplus \mathcal{Q}_\mathcal{K}(a \bm{e}_1 + b \bm{e}_i + (\lambda - a - b) \bm{e}_n)),
\end{equation}
where $a+b\leq\lambda$.
The problem of finding the exact Seg+E soltion is equivalent to finding $(a,b,i)$ that maximizes $\mathrm{MSE}(a,b,i)$.

An overview of our $\mathcal{O}(n\lambda)$-time algorithm for finding the exact Seg+E solution is provided in \cref{algo:algorithm_for_finding_the_exact_seg_e_in_relaxed_setting}.
We design two key subroutines:
\begin{itemize}[leftmargin=1.5em]
    \item $\textsc{CalcMSE}(a,b,i)$: Given $(a,b,i)$, this subroutine computes $\mathrm{MSE}(a,b,i)$. After an $\mathcal{O}(n)$ preprocessing step, we can compute each $\mathrm{MSE}(a,b,i)$ value in $\mathcal{O}(1)$ time.
    \item $\textsc{GetOptimalB}(a,i)$: Given $(a,i)$, this subroutine finds the value of $b$ that maximizes $\mathrm{MSE}(a,b,i)$. After an $\mathcal{O}(n)$ preprocessing step, the optimal $b$ for each $(a,i)$ can be obtained in $\mathcal{O}(1)$ time.
\end{itemize}
By applying these two subroutines to find the optimal $b$ for each $(a,i)$, we can find the exact Seg+E solution in $\mathcal{O}(n\lambda)$ time.

We can implement $\textsc{CalcMSE}(a,b,i)$ using the algorithm described in \cref{sec:efficient_computation_of_coefficients}.
In \cref{sec:efficient_computation_of_coefficients}, we showed that with an $\mathcal{O}(n)$ preprocessing step, we can compute $\mathrm{Var}_{\mathrm{K'}}$, $\mathrm{Var}_{\mathrm{R'}}$, and $\mathrm{Cov}_{\mathrm{K'R'}}$ for a given $(a,b,i)$ in $\mathcal{O}(1)$ time.
Therefore, from
\begin{equation}
\mathrm{MSE}(a,b,i) = \mathrm{Var}_{\mathrm{R'}} - \frac{\mathrm{Cov}_{\mathrm{K'R'}}^2}{\mathrm{Var}_{\mathrm{K'}}},
\end{equation}
we can compute $\mathrm{MSE}(a,b,i)$ in constant time, i.e., $\mathcal{O}(1)$.

Next, we describe $\textsc{GetOptimalB}$.
Our \textsc{GetOptimalB} subroutine starts by finding the value of $b$ that satisfies
\begin{equation}
    \frac{\partial \mathrm{MSE}(a,b,i)}{\partial b} = 0.
\end{equation}
Since $\mathrm{Var}_\mathrm{X}(a,b,i) > 0$ and $\mathrm{Cov}_\mathrm{XR}(a,b,i) > 0$, 
\begin{align}
    &~\frac{\partial \mathrm{MSE}(a,b,i)}{\partial b} = 0 \\
    \Leftrightarrow&~
    \label{eq:d_mse_d_b}
    -2 \left(\frac{\partial}{\partial b} \mathrm{Cov}_\mathrm{XR}(a,b,i)\right) \mathrm{Var}_\mathrm{X}(a,b,i) \nonumber\\
    &\qquad + \mathrm{Cov}_\mathrm{XR}(a,b,i) \left(\frac{\partial}{\partial b} \mathrm{Var}_\mathrm{X}(a,b,i)\right) = 0.
\end{align}
Since both $\mathrm{Cov}_\mathrm{XR}(a,b,i)$ and $\mathrm{Var}_\mathrm{X}(a,b,i)$ are quadratic functions of $b$, \cref{eq:d_mse_d_b} is a cubic equation in $b$.
Its coefficients can be computed in $\mathcal{O}(1)$ time using the prefix sums $S_t, T_t, U_t ~ (t=1,2,\dots,n)$ defined in \cref{sec:efficient_computation_of_coefficients}.
Therefore, we can find the value of $b$ that satisfies \cref{eq:d_mse_d_b} in $\mathcal{O}(1)$ time by solving this cubic equation.
The number of real solutions to this cubic equation is at most 3.

Here, we show the following simple lemma.
\begin{theorem_box}
\begin{lemma}
\label{lem:argmax_of_c1_function}
Let $f: [x_\mathrm{min}, x_\mathrm{max}] \to \mathbb{R}$ be a $C^1$ function, where $x_\mathrm{min}, x_\mathrm{max} \in \mathbb{Z}$.
Let
\begin{equation}
    x^\ast = \argmax_{x \in [x_\mathrm{min}, x_\mathrm{max}] \cap \mathbb{Z}} f(x).
\end{equation}
Then $x^\ast$ satisfies one of the following:
\begin{itemize}[leftmargin=1.5em]
    \item $x^\ast = x_\mathrm{min}$.
    \item $x^\ast = x_\mathrm{max}$.
    \item $\exists \hat{x} \in (x^\ast - 1, x^\ast + 1) \cap [x_\mathrm{min}, x_\mathrm{max}] ~\text{s.t.}~ f'(\hat{x}) = 0$.
\end{itemize}
\end{lemma}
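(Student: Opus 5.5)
The argument is a direct case analysis on the location of $x^\ast$, using the mean value theorem (or Rolle's theorem) on the two unit intervals adjacent to $x^\ast$. If $x^\ast = x_\mathrm{min}$ or $x^\ast = x_\mathrm{max}$, one of the first two alternatives holds and we are done. So we assume $x_\mathrm{min} < x^\ast < x_\mathrm{max}$. Since $x^\ast$ is an integer, both $x^\ast - 1$ and $x^\ast + 1$ are integers in $[x_\mathrm{min}, x_\mathrm{max}]$, and maximality of $x^\ast$ among integer points gives $f(x^\ast - 1) \le f(x^\ast)$ and $f(x^\ast + 1) \le f(x^\ast)$. It also gives $[x^\ast - 1, x^\ast + 1] \subset [x_\mathrm{min}, x_\mathrm{max}]$, so every point found below automatically lies in the required intersection.

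Next we apply the mean value theorem, which is legitimate because $f$ is $C^1$. It yields $\xi_1 \in (x^\ast - 1, x^\ast)$ with $f'(\xi_1) = f(x^\ast) - f(x^\ast - 1) \ge 0$. It also yields $\xi_2 \in (x^\ast, x^\ast + 1)$ with $f'(\xi_2) = f(x^\ast + 1) - f(x^\ast) \le 0$.

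Now we split into cases.
\begin{itemize}
\item If $f'(\xi_1) = 0$ or $f'(\xi_2) = 0$, we take $\hat{x}$ to be that point, and we are done.
\item Otherwise $f'(\xi_1) > 0 > f'(\xi_2)$. Since $f'$ is continuous on $[\xi_1, \xi_2]$, the intermediate value theorem gives $\hat{x} \in (\xi_1, \xi_2) \subset (x^\ast - 1, x^\ast + 1)$ with $f'(\hat{x}) = 0$.
\end{itemize}
In either case the third alternative holds.

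There is no real obstacle here. The only points needing care are that the argmax may not be unique (the argument works for any maximizer), and that continuity of $f'$ is exactly what the $C^1$ hypothesis provides for the intermediate value step. Mere differentiability would also suffice via Darboux's theorem, but this is not needed.
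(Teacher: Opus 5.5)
Your proposal is correct and follows the paper's proof essentially step for step. Both dispose of the endpoint cases, then apply the mean value theorem on $(x^\ast-1,x^\ast)$ and $(x^\ast,x^\ast+1)$, and finally either take a vanishing $f'(\xi_1)$ or $f'(\xi_2)$ directly or use the intermediate value theorem for the continuous $f'$ on $(\xi_1,\xi_2)$.
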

\end{theorem_box}
\begin{proof}[Proof of \cref{lem:argmax_of_c1_function}]
When $x^\ast = x_\mathrm{min}$ or $x^\ast = x_\mathrm{max}$, the claim is trivial.
In the following, we consider the case where $x_\mathrm{min} + 1 \leq x^\ast \leq x_\mathrm{max} - 1$.

Since $f(x^\ast)$ is the maximum value of $f(x)$, we have $f(x^\ast) \ge f(x^\ast - 1)$ and $f(x^\ast) \ge f(x^\ast + 1)$.
By the Mean Value Theorem, there exist $\xi_1 \in (x^\ast - 1, x^\ast)$ and $\xi_2 \in (x^\ast, x^\ast + 1)$ such that
\begin{align}
    f'(\xi_1) &= \frac{f(x^\ast) - f(x^\ast - 1)}{x^\ast - (x^\ast - 1)} \ge 0,\\
    f'(\xi_2) &= \frac{f(x^\ast + 1) - f(x^\ast)}{(x^\ast + 1) - x^\ast} \le 0.
\end{align}
If $f'(\xi_1) = 0$ or $f'(\xi_2) = 0$, then the claim follows immediately because $\xi_1, \xi_2 \in (x^\ast - 1, x^\ast + 1)$.
Otherwise (i.e., $f'(\xi_1) > 0$ and $f'(\xi_2) < 0$), by the Intermediate Value Theorem, there exists $\hat{x} \in (\xi_1, \xi_2) \subset (x^\ast - 1, x^\ast + 1)$ such that $f'(\hat{x}) = 0$.
\end{proof}

For a fixed $(a,i)$, $\mathrm{MSE}(a,b,i)$ is a $C^1$ function of $b \in [0, \lambda - a]$.
By \cref{lem:argmax_of_c1_function}, the optimal integer $b^\ast$ must satisfy one of the following:
\begin{itemize}[leftmargin=1.5em]
    \item $b^\ast = 0$.
    \item $b^\ast = \lambda - a \quad$.
    \item $\hat{b} \in (b^\ast - 1, b^\ast + 1) \cap [0, \lambda - a] ~\text{s.t.}~ \frac{\partial \mathrm{MSE}(a,\hat{b},i)}{\partial b} = 0$.
\end{itemize}
Since the cubic equation $\frac{\partial \mathrm{MSE}(a,b,i)}{\partial b} = 0$ has at most 3 real solutions, there are at most 6 integers neighboring to the solution.
By adding the two cases $b^\ast = 0$ and $b^\ast = \lambda - a$, there are at most 8 candidates to consider.
By evaluating the MSE for these 8 candidates, we can find the optimal $b$ in $\mathcal{O}(1)$ time.

\section{Observations and Conjectures}
\label{app:observations_and_conjectures}

\begin{figure*}[t]
    \centering
    \includegraphics[width=\textwidth]{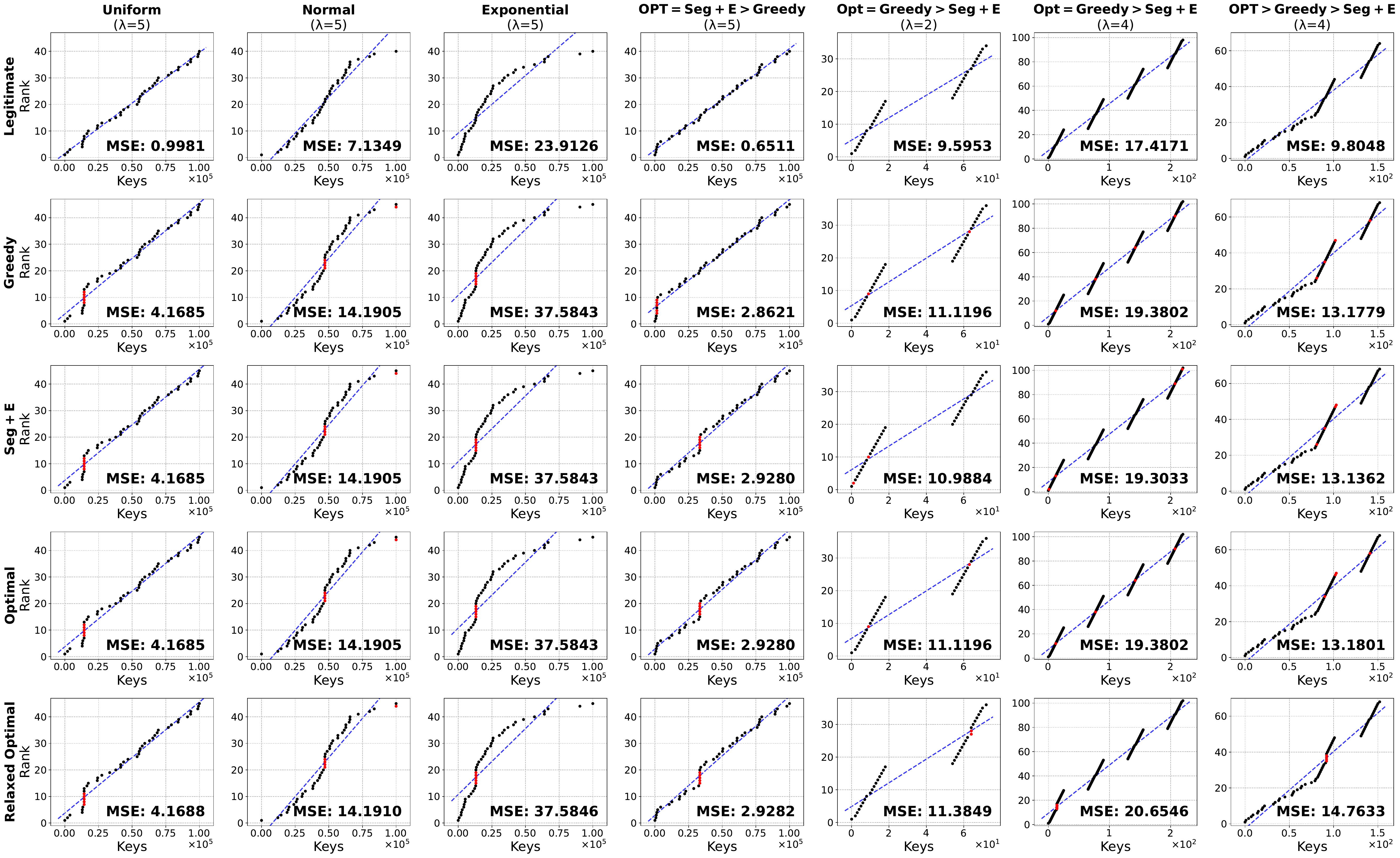}
    \caption{Empirical observations, conjectures, and counterexamples.
All examples (especially those in the second and third columns) illustrate \cref{observ:1}, showing that poisons tend to appear either near the endpoints or around the intersection between the regression line and the sequence of points formed by the legitimate keys.
Columns 1--4 correspond to Seg+E configurations that satisfy \cref{observ:2}, whereas columns 5--7 show artificially constructed corner cases that violate \cref{observ:2}.
In all examples, the optimal poisons in the relaxed setting are Seg+E, supporting \cref{con:2}.}
    \label{fig:rank_plot}
\end{figure*}

In this section, we summarize several empirical observations and conjectures obtained through our experiments.

First, we introduce the following observation about the optimal poison location:
\begin{theorem_box}
\begin{observation}
    \label{observ:1}
    Optimal poisons tend to appear either near the endpoints (i.e., $k_1$ or $k_n$) or around the points where the regression line intersects the sequence of points formed by the legitimate keys.
\end{observation}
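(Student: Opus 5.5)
Since \cref{observ:1} is an empirical statement rather than a theorem, I would not try to prove it rigorously. Instead I would give a first-order (envelope-type) explanation that predicts where a single additional poison helps most. The plan is to fix a current key multiset $\mathcal{X}$ with fitted parameters $(w^\ast,b^\ast)$ and residuals $e_j \coloneq w^\ast x_j + b^\ast - r_j$. I would then estimate the change in residual sum of squares when one poison $p$ is inserted, keeping $(w^\ast,b^\ast)$ frozen. Because the refit can only lower the RSS, the refit effect is second order by the envelope theorem, so this estimate should capture the leading behaviour. Inserting $p$ between $x_t$ and $x_{t+1}$ has two effects: it adds a new residual $e(p)=w^\ast p+b^\ast-(t+1)$, and it lowers every residual $e_j$ with $j>t$ by exactly one.

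The key computation is the resulting frozen-parameter change,
\begin{equation}
\Delta\mathrm{RSS}(p) \approx e(p)^2 + \sum_{j>t}\left((e_j-1)^2-e_j^2\right) = e(p)^2 + 2F(t) + (m-t),
\end{equation}
where $F(t)\coloneq\sum_{j\le t} e_j$ and I use $\sum_{j=1}^m e_j=0$, which follows from optimality in $b$. This splits the gain into two competing terms, and I would read the observation off from each. The cumulative residual $F$ is a discrete integral of $e$, so it attains its local maxima exactly where $e_j$ changes sign from positive to negative, i.e.\ where the regression line crosses the point sequence $(x_j,r_j)$ from above to below. This explains the interior ``intersection'' clusters. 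The term $e(p)^2$, together with the linear term $m-t$ and the freedom in the intercept, is largest at the extremes: the fitted line of a CDF typically has its largest residuals near $k_1$ and $k_n$. This explains the endpoint clusters, and matches the rigorous relaxed-setting statement in \cref{thm:optimal_c_for_fixed_w}, which pushes mass to $k_1,k_n$ when $w(k_n-k_1)\ge n+\lambda$. Within a gap between legitimate keys, \cref{lem:single_point_poisoning_attack} and \cref{lem:multi_point_poisoning_attack} already force the chosen poisons to sit adjacent to keys, so I only need to locate which gaps win. For multiple poisons I would iterate the estimate: after each insertion $F$ changes only by $O(1)$ locally, so the same maxima persist and the poisons cluster there.

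The main obstacle is that this argument is genuinely only first order. It ignores the refit of $(w,b)$, which couples the endpoint term and the crossing term. It also ignores interactions between successive poisons, which is exactly why greedy can fail (\cref{observ:greedy_not_optimal_multipoint}) and why corner cases such as \cref{fig:counter_example} exist. So I would present the conclusion as a heuristic explanation supported by the experiments in \cref{fig:rank_plot}, not as a theorem. The first thing I would try, to make it semi-rigorous, is to bound the second-order refit correction by $\mathrm{Cov}$-type quantities analogous to those in the proof of \cref{lem:d_saturation_lemma}, and to show that it is dominated whenever the gap between the top two candidate gains is large.
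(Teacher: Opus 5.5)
Your heuristic differs from the paper's, and its interior mechanism is sound, arguably sharper than the paper's. One justification is wrong, though, and it undermines your endpoint explanation.

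\textbf{Comparison with the paper.} Like the paper, you treat the statement as a heuristic rather than a theorem, but you reach it by a different mechanism.

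\begin{itemize}
\item The paper differentiates the exact loss. It observes that $\partial E(\mathcal K\cup\{p\})/\partial p$ is proportional to $\hat r(p)-r_p$, where $\hat r$ is the line refitted to the poisoned set. It adds the informal remark that residuals are large at the boundary.
\item You freeze $(w^\ast,b^\ast)$ and get $e(p)^2+2F(t)+(m-t)$. This expression is continuous in $p$ across keys and has slope $2w^\ast e(p)$ inside each gap. Up to normalization, it is therefore the paper's derivative evaluated with the pre-poisoning line instead of the refitted one.
\end{itemize}

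Your reading places the maxima at keys where the residual turns from positive to negative. That is more accurate than the paper's wording. Read literally, ``the derivative vanishes where $\hat r(p)=r_p$'' locates minima. The correct derivative is $\frac{2\,\mathrm{Cov}_\mathrm{XR}(p)}{(n+1)\mathrm{Var}_\mathrm{X}(p)}\bigl(\hat r(p)-r_p\bigr)$. The display in the paper inverts both the sign and the $\mathrm{Var}/\mathrm{Cov}$ ratio relative to the computation in the proof of \cref{lem:single_point_poisoning_attack}. By that lemma, a zero inside a gap is a minimum. For instance, with $\mathcal K=\{1,11\}$ the loss is exactly $0$ at the intersection $p=6$. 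Maxima can only arise at keys, where $r_p$ jumps by one. The paper's formula, for its part, has the advantage of being exact.

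\textbf{The flawed step.} The claim ``the refit effect is second order by the envelope theorem'' does not hold. That theorem controls infinitesimal perturbations, whereas an insertion shifts $m-t$ ranks by one.
\begin{itemize}
\item Re-optimizing the intercept alone lowers your frozen estimate by $(e(p)-(m-t))^2/(m+1)$. This is the same order as your $m-t$ term and essentially cancels it, leaving about $(m-t)(t+1)/(m+1)$, which is $O(1)$ at both ends. So the part of your endpoint explanation that rests on $m-t$ and ``the freedom in the intercept'' is an artifact.
\item When legitimate residuals are $O(1)$, the frozen estimate puts the optimum in the wrong place. Take evenly spaced keys (here $m=n$). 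Then $F\equiv 0$, and your formula favors the leftmost gap with gain $\approx n$. The true gain there is $O(1)$. For large $n$ the actual optimum lies near relative positions $(1\pm 3^{-1/2})/2$, with RSS $\approx n/12$.
\item The frozen picture is therefore reliable only when $F$ dwarfs these $O(m)$ corrections.
\item Your multi-poison claim that $F$ changes only by $O(1)$ locally fails for the same reason: an insertion lowers every later frozen residual by one.
\item \cref{thm:optimal_c_for_fixed_w} is itself a fixed-$w$ statement inside the max--min relaxation. It supports your picture only at frozen slope.
\end{itemize}

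\textbf{How to repair it.} Rather than trying to bound the refit correction, apply the envelope theorem only where it is exact.
\begin{itemize}
\item Allow duplicates. Then $\phi(p)=E(\mathcal K\uplus\{p\})$ is continuous on $[k_1,k_n]$, because at $p=k_t$ both rank assignments give the same sorted sequence.
\item Inside each gap, $\phi'$ is given by the exact formula above. By \cref{lem:single_point_poisoning_attack}, its sign is non-decreasing there.
\item Hence local maxima occur only at keys. Up to ties, they are characterized as follows, where $\hat r$ is the line fitted to $\mathcal K\uplus\{k_t\}$ for the key in question:
  \begin{itemize}
  \item $k_1$ is a local maximum iff $\hat r(k_1)<2$;
  \item $k_n$ is a local maximum iff $\hat r(k_n)>n$;
  \item an interior $k_t$ is a local maximum iff $t<\hat r(k_t)<t+1$.
  \end{itemize}
\end{itemize}
In words, the refitted line passes between the two points stacked at the key, one-sidedly at $k_1$ and $k_n$. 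This single sign condition produces both the endpoint clusters and the crossing clusters, with no $m-t$ term. It gives a rigorous single-poison core of the observation; which local maximum is global remains heuristic.
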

\end{theorem_box}
This observation provides an intuitive guideline for predicting where optimal poisons are likely to appear.
All examples in \cref{fig:rank_plot} satisfy \cref{observ:1}, especially the examples in the second and third columns clearly demonstrate this pattern.

We can interpret this observation on multiple intuitive levels.

First, from a heuristic perspective, we can classify effective poisons into two types:
(1) those that themselves incur a large squared error, and
(2) those that significantly increase the squared error of surrounding keys.
Poisons placed near the endpoints serve as the first type.
Especially for datasets drawn from normal or exponential distributions, the prediction error of the regression line tends to be largest at the boundaries.
Thus, inserting poisons near the endpoints directly increases the individual squared error term.
In contrast, poisons placed around the intersection between the regression line and the sequence of points formed by the legitimate keys serve as the second type.
In these regions, adding a poison tends to increase the ranks of larger keys by one, thereby amplifying the overall MSE efficiently.

Furthermore, we can interpret this observation from a mathematical perspective.
Consider the MSE after adding a single poison point $p$ to the set of legitimate keys $\mathcal{K}$, denoted as $E(\mathcal{K} \cup {p})$.
Let $\mathrm{Var}_\mathrm{X}(p)$ and $\mathrm{Cov}_\mathrm{XR}(p)$ denote, respectively, the variance of keys and the covariance between keys and ranks after adding $p$.
Let $\hat{r}: \mathbb{R} \to \mathbb{R}$ represent the regression line function predicting rank from keys, and let $r_p$ denote the actual rank of $p$ in $\mathcal{K} \cup {p}$.
Then,
\begin{equation}
    \frac{\partial E(\mathcal{K} \cup \{p\})}{\partial p} = - \frac{2 \mathrm{Var}_\mathrm{X}(p)}{(n+1) \mathrm{Cov}_\mathrm{XR}(p)} \left(\hat{r}(p) - r_p\right).
\end{equation}
Thus, the derivative $\frac{\partial E(\mathcal{K} \cup {p})}{\partial p}$ becomes zero when $\hat{r}(p) = r_p$, that is, when the predicted rank equals the actual rank.
This analytically explains why optimal poison points tend to emerge near the intersections between the regression line and the sequence of points formed by the legitimate keys, or, at the endpoints.

On the other hand, following observation is very often satisfied in practice, but we also find counterexamples artificially constructed.
\begin{theorem_box}
\begin{observation}
    \label{observ:2}
    In the original setting, the optimal solution is often, but not always, Seg+E (as defined in \cref{def:seg_e}).
\end{observation}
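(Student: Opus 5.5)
The statement has two halves, and I would treat them differently. The ``often'' half is empirical rather than a mathematical claim. I would support it by citing the small-scale experiments of \cref{sec:experiment:upper_bound_vs_greedy_brute_force}. There the exact Seg+E solution coincided with the brute-force optimum in all $3{,}000$ instances, and that optimum is itself certified by \cref{thm:structure_of_optimal_multi_point_attack}. The ``not always'' half is a genuine existential claim. I would prove it with one explicit counterexample: the instance of \cref{fig:counter_example}, namely $\mathcal{K}=\{0,1,\dots,16,48,\dots,64\}\setminus\{1,8,56,63\}$ with $\lambda=2$.

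\textbf{Step 1: list the free positions.} The integers in $(k_1,k_n)=(0,64)$ not in $\mathcal{K}$ are $1$, $8$, $56$, $63$, and the whole gap $17,\dots,47$.

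\textbf{Step 2: show $\{8,56\}$ is not Seg+E.} By \cref{def:seg_e}, a Seg+E set is $([0,R_1]\cup[L_2,R_2]\cup[L_3,64])\cap\mathbb{Z}\setminus\mathcal{K}$ for some $0\le R_1<L_2\le R_2<L_3\le 64$. I would argue by where $8$ can come from.
\begin{itemize}
\item If $8$ lies in $[0,R_1]$, then $R_1\ge 8$, so the left block also contains $1$.
\item If $8$ lies in $[L_2,R_2]$, then $56$ must come from $[L_2,R_2]$ or $[L_3,64]$. The middle block alone cannot reach $56$ without swallowing $17,\dots,47$. Taking $56$ from the right block forces $L_3\le 56$, so it also contains $63$.
\end{itemize}
In every case the set contains a third point, so $\{8,56\}$ is not Seg+E.

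\textbf{Step 3: show $\{8,56\}$ is the optimum, and strictly beats every Seg+E set.} By \cref{thm:structure_of_optimal_multi_point_attack}, every poison in an optimal $\mathcal{P}^\ast$ is chained to a legitimate key. With $\lambda\le 2$, this restricts $\mathcal{P}^\ast$ to subsets of size at most two of $C=\{1,8,56,63,17,18,46,47\}$, subject to the chaining condition. For example, $18$ is admissible only together with $17$, and $46$ only together with $47$. That leaves a few dozen configurations. For each I would evaluate $E(\mathcal{K}\cup\mathcal{P})$ exactly in rational arithmetic via the closed form \cref{eq:closed_form_solution}. The claim to verify is that $\{8,56\}$ attains the strict maximum, and in particular exceeds the best Seg+E candidate, which \cref{fig:counter_example} reports to be $\{1,8\}$.

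I expect this last comparison to be the main obstacle. The instance is nearly symmetric under $x\mapsto 64-x$, so the competing losses are close, and strictness must be checked exactly rather than in floating point. To make it transparent, I would exploit the symmetry to pair configurations, for instance $\{1,8\}$ with $\{56,63\}$. Then only a handful of genuinely distinct MSE values need to be compared.
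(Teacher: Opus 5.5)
Your proposal is correct and follows the paper's own support for this observation. The ``often'' half rests on the $3{,}000$ small-scale instances of \cref{sec:experiment:upper_bound_vs_greedy_brute_force}, and the ``not always'' half on the same instance of \cref{fig:counter_example}, which you certify more explicitly than the paper by enumerating the $24$ configurations permitted by \cref{thm:structure_of_optimal_multi_point_attack}.

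Three small remarks. Step~2 should also dismiss the trivial case $8\in[L_3,64]$, since then $L_3\le 8$ and the set contains $17,\dots,47$. The instance is exactly symmetric under $x\mapsto 64-x$, not merely nearly so. The decisive comparison is also not delicate: my own hand computation gives $E(\mathcal{K}\cup\{8,56\})\approx 8.719$ against $E(\mathcal{K}\cup\{1,8\})=E(\mathcal{K}\cup\{56,63\})\approx 8.632$, a margin of about $1\%$.
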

\end{theorem_box}

Columns 1--4 in \cref{fig:rank_plot} exhibit this property clearly.
In each of these samples, poisons appear in intervals connected to the endpoints and one contiguous internal segment.
In contrast, columns 5--7 do not satisfy this property: the optimal poisons in these cases span multiple disjoint interior intervals.
However, these are deliberately constructed adversarial examples (e.g., by intentionally skipping certain integers) and thus are highly unrealistic.
In practical scenarios, \cref{observ:2} holds extremely well.

It is also worth noting that, in the 7th column, neither the Seg+E configuration nor the Greedy algorithm~\cite{kornaropoulos2022price} achieves the optimal solution.
This implies that even a hybrid strategy that selects the better of Seg+E and Greedy does not necessarily guarantee optimality.
Nevertheless, we emphasize again that these examples are intentionally pathological; in all realistic cases we examined, Seg+E achieves the optimal solution.

We further present the following conjecture, for which no counterexamples have been found to date.
\begin{theorem_box}
\begin{conjecture}
    \label{con:2}
    In the relaxed setting, Seg+E (as defined in \cref{def:seg_e_relaxed}) always achieves the optimal solution.
\end{conjecture}
\end{theorem_box}
This conjecture remains neither proved nor disproved, and is left as future work.

We have also observed that the following conjecture holds empirically across all experiments.
If this conjecture is proven, \cref{con:2} would follow immediately; thus, it may serve as a useful stepping stone toward proving \cref{con:2}.
\begin{theorem_box}
\begin{conjecture}
    \label{con:3}
    In the relaxed setting, if there exist $i, j$ such that $1 < i < j < n$ and $d_i, d_j \ge 1$, then the following holds:
    \begin{align}
        &\mathrm{MSE}_{i \to j} > \mathrm{MSE}_\mathrm{BASE} ~~\lor~~ \mathrm{MSE}_{j \to i} > \mathrm{MSE}_\mathrm{BASE} \nonumber\\
    \lor~~  &\mathrm{MSE}_{i \to 1} > \mathrm{MSE}_\mathrm{BASE} ~~\lor~~ \mathrm{MSE}_{i \to n} > \mathrm{MSE}_\mathrm{BASE} \nonumber\\
    \lor~~  &\mathrm{MSE}_{j \to 1} > \mathrm{MSE}_\mathrm{BASE} ~~\lor~~ \mathrm{MSE}_{j \to n} > \mathrm{MSE}_\mathrm{BASE},
    \end{align}
    where
    \begin{align}
        \mathrm{MSE}_{i \to j} &\coloneq E(\mathcal{K} \uplus \mathcal{Q}_\mathcal{K}(\bm{d} - \bm{e}_i + \bm{e}_j)),\\
        \mathrm{MSE}_{j \to i} &\coloneq E(\mathcal{K} \uplus \mathcal{Q}_\mathcal{K}(\bm{d} - \bm{e}_j + \bm{e}_i)),\\
        \mathrm{MSE}_{i \to 1} &\coloneq E(\mathcal{K} \uplus \mathcal{Q}_\mathcal{K}(\bm{d} - \bm{e}_i + \bm{e}_1)),\\
        \mathrm{MSE}_{i \to n} &\coloneq E(\mathcal{K} \uplus \mathcal{Q}_\mathcal{K}(\bm{d} - \bm{e}_i + \bm{e}_n)),\\
        \mathrm{MSE}_{j \to 1} &\coloneq E(\mathcal{K} \uplus \mathcal{Q}_\mathcal{K}(\bm{d} - \bm{e}_j + \bm{e}_1)),\\
        \mathrm{MSE}_{j \to n} &\coloneq E(\mathcal{K} \uplus \mathcal{Q}_\mathcal{K}(\bm{d} - \bm{e}_j + \bm{e}_n)).
    \end{align}
\end{conjecture}
\end{theorem_box}
This conjecture remains neither proved nor disproved, and is left as future work.

\end{document}

\endinput